\definecolor{light-gray}{gray}{0.90}
\newcolumntype{s}{>{\centering\columncolor{light-gray}} p{1.9cm}}
\newcolumntype{d}[1]{>{\centering\columncolor{light-gray}} p{#1}}
\newcolumntype{P}[1]{>{\centering\arraybackslash}p{#1}}
\newcolumntype{L}[1]{>{\raggedright\arraybackslash}p{#1}}
\newcommand{\nt}{{n_{s}}}
\newcommand{\ns}{{n_{s}}}
\newcommand{\nb}{{n_{b}}}
\newcommand{\SSS}{S}
\newcommand{\C}{A}
\newcommand{\At}{1/22}
    \renewenvironment{proof}{\par\noindent{\bf Proof\ }}{\qed \par}
    \newenvironment{proof}{\par\noindent{\bf Proof\ }}{\qed \par}
\newtheorem{theorem}{Theorem}
\newtheorem{lemma}{Lemma}
\newtheorem{corollary}{Corollary}
\newtheorem{proposition}{Proposition}
\definecolor{corlinks}{RGB}{0,0,170}
\definecolor{cormenu}{RGB}{0,0,170}
\definecolor{corurl}{RGB}{0,0,170}
\title{An exactly solvable model for emergence and scaling laws in the multitask sparse parity problem}
\author[a]{Yoonsoo Nam*}
\author[a]{Nayara Fonseca*}
\author[b]{Seok Hyeong Lee}
\author[a c]{Chris Mingard}
\author[a]{Ard A. Louis}
\affil[a]{{\small{Rudolf Peierls Centre for Theoretical Physics,  University of Oxford}}}
\affil[b]{{\small{Center for Quantum Structures in Modules and Spaces, Seoul National University}}}
\affil[c]{{\small{Physical and Theoretical Chemistry Laboratory,  University of Oxford}}}
\begin{document}

\maketitle
\def\thefootnote{*}\footnotetext{These authors contributed equally; \texttt{{\{yoonsoo.nam,nayara.fonsecadesa\}@physics.ox.ac.uk}.}}\def\thefootnote{\arabic{footnote}}

\begin{abstract}
Deep learning models can exhibit what appears to be a sudden ability to solve a new problem as training time, training data, or model size increases, a phenomenon known as emergence. In this paper, we present a framework where each new ability (a skill) is represented as a basis function.  We solve a simple multi-linear model in this skill-basis, finding analytic expressions for the emergence of new skills, as well as for scaling laws of the loss with training time, data size, model size, and optimal compute.  We compare our detailed calculations to direct simulations of a two-layer neural network trained on multitask sparse parity, where the tasks in the dataset are distributed according to a power-law.  Our simple model captures, using a single fit parameter,  the sigmoidal emergence of multiple new skills as training time, data size or model size increases in the neural network. 
\end{abstract}

\section{Introduction}
 
\textit{Emergence} in large language models (LLMs)  has attracted a lot of recent attention \cite{brown2020language, ganguli2022predictability, srivastava2022beyond, wei2022emergent}. 
It motivates the costly drive to train ever larger models on ever larger datasets, in the hope that new skills will emerge.
While the concept of emergence has been critiqued on the grounds that the sharpness of the transition to acquiring a new skill may be sensitive to the measure being used~\cite{schaeffer2023emergent}, the observation that important new skills are learned for larger models raises many challenging questions: when the skills emerge and what drives the emergence.
These questions are complicated by difficulties in formally defining skills or capabilities~\cite{anwar2024foundational}, and by our general limited understanding of the internal representations of deep neural networks \cite{bricken2023monosemanticity}.
 
Another widely observed property of deep learning models is that the loss improves predictably as a power-law in the number of data points or the number of model parameters or simply in the amount of compute thrown at a problem. These neural scaling laws~\cite{hestness2017deep, kaplan2020scaling} have been widely observed across different architectures and datasets~\cite{rosenfeld2019constructive, henighan2020scaling, gordon-etal-2021-data, zhai2022scaling, hoffmann2022training, cabannes2023scaling, bachmann2024scaling}.  While the scaling exponents can depend on these factors, the general phenomena of scaling appear to be remarkably robust. This raises many interesting questions such as:  What causes the near-universal scaling behavior? How does the continuous scaling of the loss relate to the discontinuous emergence of new skills? 
 
A  challenge in answering the questions raised by the phenomena of emergence and scaling laws arises from the enormous scale and expense of training cutting-edge modern LLMs, which are optimized for commercial applications, and not for answering scientific questions about how they work.  One way that progress can be made is to study simpler dataset/architecture combinations that are more tractable.  The current paper is inspired in part by recent work in this direction that proposed studying emergence in learning the sparse parity problem~\cite{barak2022hidden,michaud2023quantization}, which is easy to define, but known to be computationally hard.  In particular, Michaud et al.~\cite{michaud2023quantization} introduce the multiple unique sparse parity problem -- where tasks are distributed in the data through a power-law distribution of frequencies -- as a proxy for studying emergence and neural scaling in LLMs.  For this data set, the authors empirically measure the scaling laws of a 2-layer multilayer perceptron (MLP) as a function of training steps ($T$), parameters ($N$),  and training samples ($D$). Based on their quanta model of abrupt skill acquisition, they schematically derive neural scaling laws as a sum of emergences of new skills. However, no link was established between the neural network dynamics and the quanta model.

In this paper, we introduce an analytically tractable model by defining a basis of orthogonal functions for the multitask sparse parity problem. Each basis function corresponds to a skill that can be learned, and their respective frequencies are distributed following a power-law with exponent $\alpha+1$.   We then propose a simple multilinear expansion in these orthogonal functions that introduces a layered structure reminiscent of neural networks (NNs) and gives rise to the stage-like training dynamics \cite{saxe2013exact}. With our simple model, we can analytically calculate full scaling laws, including pre-factors,  as a function of data exponents $\alpha$,  $T, D, N$,  and optimal compute $C$.  Our simple model can, with just one parameter calibrated to the emergence of the first skill, predict the ordered emergence of multiple skills in a 2-layer MLP. We summarize our contributions as follows:

\begin{enumerate}

\item \emph{Skills as basis functions.} 
We establish a framework for investigating  emergence by representing skills as orthogonal functions that form a basis in  function space (\cref{sec:setup}). We apply our methods to  controlled experiments on the multitask sparse parity dataset.

\item \emph{Multilinear model.} We propose an analytically tractable model that is expanded in the basis of skill functions, and is multilinear with respect to its parameters so that it possesses a layerwise structure (\cref{sec:toy_model}). The multilinear nature of the model produces non-linear dynamics, and the orthogonal basis decouples the dynamics of each skill.

\item \emph{Scaling laws.} We derive scaling laws for our multilinear model, including the prefactor constants,  which relate the model's performance to training time ($T$), dataset size ($D$), number of parameters ($N$), and optimal compute ($C=N \times T$), see \cref{sec:scaling_law}. We show that the scaling exponents for these factors are $-\alpha/(\alpha + 1)$, $-\alpha/(\alpha + 1)$, $-\alpha$, $-\alpha/(\alpha+2)$, respectively, where $\alpha+1$ is the exponent of the power-law input data. 

\item \emph{Predicting emergence.} We demonstrate that our multilinear model captures the skill emergence of an MLP with 2 layers for varying training time, dataset size, and number of trainable parameters.
Our results show that the multilinear model, calibrated only on the first skill, can predict the emergence of subsequent skills in the 2-layer MLP, see  \cref{fig:money} and \cref{sec:emergence}. We obtain an equivalent result on the time emergence for a transformer architecture (\cref{fig:transformer}).
\end{enumerate}
\vspace{-0.2cm}
\begin{figure}[htp] 
    \centering 
    \includegraphics[width=0.99 \textwidth]{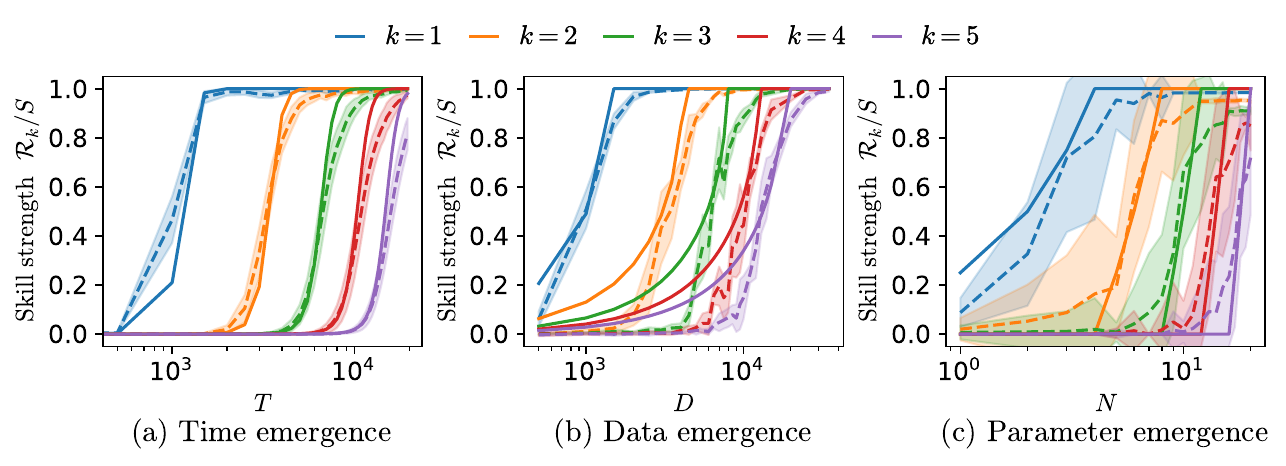}
\vspace{-0.2cm}
\caption{\textbf{Predicting emergence.} 
The skill strength $\mathcal{R}_k$, defined as the $k^{th}$ coefficient if a model is expanded in the basis of the skill functions ($g_k$), measures how well the $k^\textrm{th}$ skill is learned, and is plotted against (a) time $T$,  (b) data set size $D$, and (c) number of parameters $N$ (width of the hidden layer). $\mathcal{R}_k$ is normalized by the target scale $\SSS$ such that $\mathcal{R}_k/\SSS = 1$ means zero skill loss.
The dashed lines show the abrupt growth -- emergence -- of $5$ skills for a 2-layer MLP (\cref{app:methods}) trained on the multitask sparse parity problem with data power-law exponent $\alpha=0.6$ (shaded area indicate 1-standard deviation over at least $10$ runs). Solid lines are the predictions (\cref{eq:toy_theo_extended,eq:d_c_shot,eq:param_emergence}, respectively) from our multilinear model calibrated on the first skill (blue) only.} \label{fig:money} 
\end{figure}


\section{Setup}
\label{sec:setup}

In this section, we define the multitask sparse parity problem under the mean-squared error (MSE) loss. We represent skills as orthogonal functions and measure their strength in a model by calculating the linear correlation between the model output and the skill basis functions. For a comprehensive list of notations, refer to the \textbf{glossary} in \cref{glossary}. Our code is also available \href{https://github.com/yoonsoonam119/Skill_Eigenmode.git}{online}.\footnote{https://github.com/yoonsoonam119/Skill\_Eigenmode.git}

\begin{table}
\centering
\caption{\textbf{Multitask sparse parity dataset and skill basis functions.} The control bits are $\ns$-dimensional one-hot vectors encoding specific parity tasks, indexed in the first column. The frequency of the distinct parity tasks follows a rank-frequency distribution with an inverse power law relation (\cref{eq:probs}). 
The skill bits are binary strings with $m=3$ relevant sparse bits (highlighted in colors) with their locations varying by skill. The $y$ column shows the target scale $S$ multiplied by the parity computed from the relevant bit set $M(i,x)$. The last columns show the values of the skill basis functions $g_k(i,x)$, defined in \cref{eq:gk}. \\
\label{tab:multitask}}
\begin{tabular}{P{1.75cm} P{1.65cm} P{2.1cm} P{0.4cm} P{0.8cm}  P{1.0cm} P{1.0cm} P{0.3cm} P{1.1cm}}

\toprule
 Skill idx $(I)$ & Control bits & Skill bits  $(X) $& ~$y$ & $M(i,x)$ & \,{\color{blue}$g_1$}$(i,x)$ & \,{\color{Green}$g_2$}$(i,x)$ & \dots & \,{\color{VioletRed}$g_{n_s}$}$(i,x)$\\
\midrule
\,{\color{blue}1}&\,{\color{blue}1}000000  &\,1{\color{blue}1}0{\color{blue}1}100001{\color{blue}0}0    & $\SSS$ & \,{\color{blue}[1,1,0]} & 1 & 0 &\dots   &0  \\
 
 \,{\color{blue}1}&\,{\color{blue}1}000000  &\,1{\color{blue}0}0{\color{blue}1}010100{\color{blue}0}1    & $-\SSS$ & \,{\color{blue}[0,1,0]} & $-1$ & 0 &\dots   &0  \\

 \,\vdots  &\,\vdots  & \,\vdots  & \,\vdots  &\,\vdots &\,\vdots &\,\vdots &\,\vdots &\,\vdots   \\

\,{\color{Green}2}&\,0{\color{Green}1}00000   &\,{\color{Green}0}01001011{\color{Green}0}1{\color{Green}1} & $-\SSS$& \,{\color{Green} [0,0,1]} & 0 & $-1$ & \dots &0 \\

 \,\vdots  &\,\vdots  & \,\vdots  & \,\vdots  &\,\vdots &\,\vdots &\,\vdots &\,\vdots &\,\vdots   \\

\,{\color{VioletRed}$n_s$} &\,000000{\color{VioletRed}1}& \,00{\color{VioletRed}1}0{\color{VioletRed}1}0{\color{VioletRed}1}00110  &$-\SSS $ & \,{\color{VioletRed}[1,1,1]} & 0 & 0 & \dots &$-1$\\
\bottomrule
\end{tabular}

\end{table}

\paragraph{Multitask sparse parity problem.} In the sparse parity problem, $\nb$ skill bits are presented to the model. The target function is a parity function applied to a fixed subset of the input bits. The model must detect the relevant $m < \nb$ sparse bits and return the parity function on this subset ($M(i,x)$, see \cref{tab:multitask}). Michaud et al. \cite{michaud2023quantization} introduced the \textbf{multitask} sparse parity problem by introducing $\ns$ unique sparse parity variants -- or skills -- with different sparse bits (for a representation, see \cref{tab:multitask}). 
Each skill is represented in the $\ns$ control bits as a one-hot string, and the model must solve the specific sparse parity task indicated by the control bits (for more details, see \cref{app:multitask}).

The $\ns$ skills (random variable $I \in \{1,2,\dots, \ns \}$) follow a power law  distribution $\mathcal{P}_s$, and the skill bits (random variable $X \in \{0,1\}^\nb$) are uniformly  distributed. Because $\mathcal{P}_s$ and $\mathcal{P}_b$ are independent, the input distribution $\mathcal{P}(I,X)$ follows a product of two distributions:
\begin{equation}\label{eq:probs}
\mathcal{P}_s(I=i) := \frac{i^{-(\alpha+1)}}{\sum_j^{\ns} j^{-(\alpha+1)}}, \quad\quad  \mathcal{P}_b(X=x):=2^{-n_b}, \quad\quad \mathcal{P}(I,X) := \mathcal{P}_s(I)\mathcal{P}_b(X).
\end{equation}
We denote $A = \left(\sum_{j=1}^{n_s} j^{-(\alpha+1)} \right)^{-1}$ so that $\mathcal{P}_s(i):= A i^{-(\alpha+1)}$.

\paragraph{Skill basis functions.}
We represent the $k^{th}$ skill as a function $g_k: \{0,1\}^{\ns+n_b} \rightarrow \{-1, 0, 1\}$ that returns the parity ($\{-1,1\}$) on the $k^{th}$ skill's sparse bits if $i=k$, but returns $0$ if the control bit mismatches that of the $k^{th}$ skill ($i \neq k$):

\begin{equation} \label{eq:gk}
g_k(i,x) :=\left\{\begin{matrix}
(-1)^{\sum_j M_j(i,x)} &   \textrm{if } i = k  \\
0 &  \textrm{otherwise}
\end{matrix}\right.,
\end{equation}
where $M: \{0,1\}^{\ns + \nb} \rightarrow \{0,1\}^m$ is the map that selects the relevant sparse bits for the $i^{th}$ skill (\cref{tab:multitask}) and $M_j(i,x)$ is the $j^{\mathrm{th}}$ entry of $M(i,x)$.
Note that different skill functions have $0$ correlation as the supports of skills functions are \textbf{mutually exclusive}:
\begin{equation}\label{eq:skill_orthnormal}
g_k(i,x)g_{k'}(i,x) = \delta_{i,k}\delta_{k,k'}.
\end{equation}

\paragraph{The target function.}
The target function is a sum over $\nt$ skill functions multiplied by a target scale $\SSS$: 
\begin{equation}\label{eq:target}
f^*(i,x) := \SSS \sum_{k=1}^{\nt}  g_k(i,x).
\end{equation}
The target scale $\SSS$ is the norm of the target function ($\mathbf{E}_{I,X}\left[f^*(I,X)f^*(I,X)\right]=\SSS^2$). Note that the skill functions serve as `features' or countable basis for describing the target function as in Hutter \cite{hutter2021learning}.

\paragraph{MSE loss.} 
We use MSE loss for analytic tractability:
\begin{equation}\label{eq:loss}
\mathcal{L} := \frac{1}{2}\mathbf{E}_{X,I} \left[\left(f^*(I,X)-f(I,X)\right)^2\right],
\end{equation}
where $f$ is the function expressed by a given model. We define the skill loss $\mathcal{L}_k$ as the loss when only the $k^{th}$ skill is given, which can be weighted by their skill frequencies to express the total loss:
\begin{equation}\label{eq:loss_decomposed}
\mathcal{L}_k := \frac{1}{2} \mathbf{E}_{X}  \left[\left(f^*(I=k,X)-f(I=k,X)\right)^2\right], \quad\quad \mathcal{L} =\sum_{k=1}^{\nt}\mathcal{P}_s(I=k) \mathcal{L}_k.
\end{equation}

\paragraph{Skill strength.}
The skill strength or the linear correlation between the $k^{th}$ skill ($g_k$) and a function expressed by the model at time $T$ ($f_T$) is
\begin{equation}\label{eq:skill_learned_corr}
\mathcal{R}_k(T) := \mathbf{E}_{X} \left[g_k(I=k,X)f_T(I=k,X)\right].
\end{equation}
The skill strength $\mathcal{R}_k$ is the $k^{th}$ coefficient if a model is expanded in the basis of the skill functions ($g_k$). The skill strength, like the test loss, can be accurately approximated by a sum (see \cref{app:measureable}). The skill loss $\mathcal{L}_k$ (\cref{eq:loss_decomposed}) can be expressed by the skill strength and the norm of the learned function for $I=k$:
\begin{align}\label{eq:skill_loss_decompose}
\mathcal{L}_k(T) = \frac{1}{2}\left(\SSS^2 + \mathbf{E}_X \left[f_T(I=k,X)^2\right] - 2\SSS\mathcal{R}_k(f_T)\right).
\end{align}
The skill loss becomes $0$ if and only if $f_T(I=k,X) = \SSS g_k(I=k,X)$.

\paragraph{Experimental setting.} We use a 2-layer MLP that receives the $\ns + \nb$ bits as inputs and outputs a scalar ($\{0,1\}^{\ns + \nb} \rightarrow \mathbb{R}$). In most of the experiments, the NN is trained with stochastic gradient descent (SGD) with width $1000$, using $n_s=5$, $m=3$, and $n_b=32$, unless otherwise stated.  A decoder transformer is also used for the time emergent experiments. See \cref{app:methods} for details.

\section{Multilinear model}
\label{sec:toy_model}

We propose a simple multilinear model -- multilinear with respect to the parameters -- with the first $N$ most frequent skill functions $g_k(i,x)$ as the basis functions (features):
\begin{equation}\label{eq:toy}
f_T(i,x; a,b) = \sum_{k=1}^{N} a_k(T)b_k(T)g_k(i,x),
\end{equation}
where $a, b \in \mathbb{R}^N$ are the parameters. The model has built-in skill functions $g_k$ -- which transform control bits and skill bits into the parity outputs of each skill -- so the model only needs to scale the parameters to $a_kb_k=\SSS$. 

The multilinear structure (product of $a_k,b_k$) is analogous to the layered structure of NNs and results in emergent dynamics (\cref{fig:money}(a)) different from a linear model with the same basis functions (\cref{app:linear_intuition}). A similar model has been studied by Saxe et al. \cite{saxe2013exact} in the context of linear neural networks (\cref{app:nonlinear_dynamics}).


For the multilinear model, note that $a_k(T)b_k(T)$ is the skill strength $\mathcal{R}_k$ (\cref{eq:skill_learned_corr}) and the skill loss (\cref{eq:loss_decomposed}) is a function of $\SSS$ and  $\mathcal{R}_k$ only:
\begin{equation}\label{eq:skill_loss_from_rk}
a_k(T)b_k(T) = \mathcal{R}_k(T), \quad\quad \mathcal{L}_k(T) = \frac{1}{2}(\SSS - \mathcal{R}_k(T))^2\,.
\end{equation}
Assuming that we are training the model on $D$ samples from $\mathcal{P}(I,X)$, the empirical loss decomposes into a sum of empirical skill losses because $g_k$'s supports are mutually exclusive. This \textbf{decouples} the dynamics of each skill ($\mathcal{R}_k(T)$), which is analytically solvable under gradient flow (\cref{derivation:toy_dynamics}).
\begin{align}\label{eq:toy_theo}
\boxed{
\mathcal{L}^{(D)}(T) = \frac{1}{2D}\sum_{k=1}^{\nt} d_k (\SSS - \mathcal{R}_k(T))^2, \quad\quad \frac{\mathcal{R}_k(T)}{\SSS} = \frac{1}{1+\left(\frac{\SSS}{\mathcal{R}_k(0)} - 1\right)e^{-2\eta \frac{d_k}{D} \SSS T}},}
\end{align}
where $d_k$ is the number of samples of the $k^{th}$ skill (i.e.,  number of samples $(i,x)$ with $g_k(i,x) \neq 0$), $\eta$ is the learning rate, and $0< \mathcal{R}_k(0) < \SSS$ is the skill strength at initialization.

\section{Scaling laws}\label{sec:scaling_law}
Recent literature has extensively explored scaling laws; see \cref{sec:related_works} for an overview. 
In this section, we derive the scaling laws of our multilinear model (\cref{sec:toy_model}) for time ($T$), data ($D$), parameters ($N$) and optimal compute ($C$). We define compute as $C:=T \times N$ \cite{bordelon2024dynamical}.  

\cref{table:scaling} shows our analytical scaling laws including their prefactor constants (\cref{app:rigorous}) and \cref{fig:scaling} compares the simulation of our model with our scaling law predictions.
For the scaling law exponents, we achieve the same exponent as in Hutter \cite{hutter2021learning} for $D$ and in Michaud et al. \cite{michaud2023quantization} for $T,D,$ and $N$. Assuming $0 <\alpha <1$, the exponents are consistent with the small power-law exponents reported in large-scale experiments, see, e.g., \cite{kaplan2020scaling, hoffmann2022training, besiroglu2024chinchilla}.

Using \cref{eq:loss_decomposed,eq:skill_loss_from_rk,eq:toy_theo}, we derive the loss as a function of time ($T$), data ($D$), parameters ($N$), and the number of observations for each skill $[d_1, \cdots, d_{\ns}]$:
\begin{equation}\label{eq:full_loss_main}
\mathcal{L} = \frac{\SSS^2}{2}\sum_{k=1}^{N} \mathcal{P}_s(k) \frac{1}{\left(1+\left(\frac{\SSS}{\mathcal{R}_k(0)} - 1\right)^{-1}e^{2\eta \frac{d_k}{D} \SSS T}\right)^2} + \frac{S^2}{2}\sum_{k=N+1}^{n_s} \mathcal{P}_s(k) .
\end{equation}
Under suitable assumptions (e.g., for the $T$ scaling law, we take $D,N \rightarrow \infty$ and $d_k/D \rightarrow \mathcal{P}_s(k)$), we can use \cref{eq:full_loss_main} to derive the scaling laws. For $T,D$, and $N$, we used  \cref{eq:toy_theo} -- decoupled dynamics induced the basis functions $g_k$ -- to decouple the evolution of each skill loss: 
\begin{enumerate}
\item{For the time scaling law, each $\mathcal{L}_k$ shares the same dynamics with $T$ scaled by $\mathcal{P}_s(k)$.}
\item{For the data scaling law, each $\mathcal{L}_k$ depends only on the observation the $k^{th}$ skill ($d_k>0$).}
\item{For the parameter scaling law, each $\mathcal{L}_k$ depends on whether the model has $g_k$ as a basis function.}
\end{enumerate}
For the optimal compute scaling law, we show in \cref{cor:compute} (\cref{app:rigorous}) that the optimal tradeoff between $T$ and $N$ for given $C$ is when $T$ is large enough to fit the $N^{th}$ skill (\cref{fig:scaling_compute}). In \cref{app:rigorous}, we show \textbf{rigorous} derivations of all scaling laws, including the prefactors, error bounds, and conditions (e.g., how large $N$ must be compared to $T$ to be treated as infinity). For simplified derivations for the exponents only, see \cref{app:scaling}. For an intuitive derivation (stage-like training) and connection to Michaud et al. \cite{michaud2023quantization}, see \cref{para:stage}.




\begin{figure}[htp] 
    \centering

    \includegraphics[width=0.99 \textwidth] {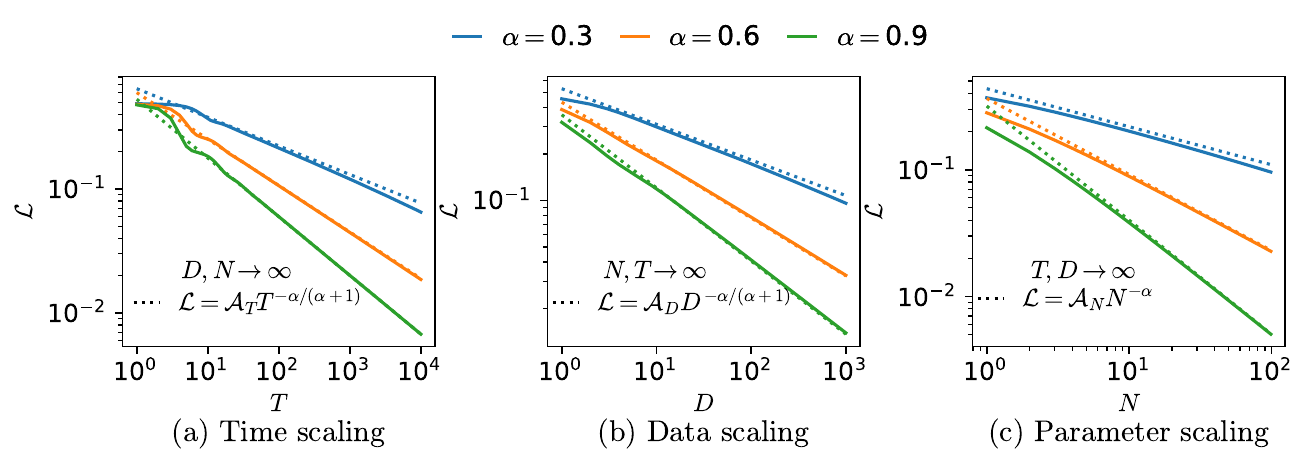}%

\caption{\textbf{Scaling laws.} The learning curve ($\mathcal{L}$ is the MSE loss) of the multilinear model (solid) and the theoretical power-law (dotted) for (a) time $T$, (b) data $D$, and (c) parameters $N$. Lower left legends show the condition (top) and the scaling law (bottom) where $\alpha+1$ is the exponent of the power-law input data (\cref{eq:probs}). 
See the appendices for 1) rigorous derivations of the theoretical scaling laws including the exponents, prefactors (e.g., $\mathcal{A}_N$ for $\mathcal{L}=\mathcal{A}_NN^{-\alpha}$), and conditions (\cref{app:rigorous}); 2) simplified derivations of the exponent only (\cref{app:scaling}); 3) details of the experiment (\cref{app:scaling_detail}).
}\label{fig:scaling} 
\end{figure}

\begin{table}[htp] 
\caption{\textbf{Summary of the scaling laws for the multilinear model.} The leftmost column indicates the bottleneck resource while the next two columns are the conditions for the `large resources' -- large enough to be treated as infinity. The fourth column is the bottleneck resource's scaling law exponent for the loss. The last two columns show the statement for the prefactor constant and the scaling law (with the assumptions and explicit error terms) in \cref{app:rigorous}. \\}\label{table:scaling}
\begin{tabular}{ P{2.05cm} P{2.7cm} P{2.05cm} P{1.7cm} P{1.15cm}  P{1.7cm}}
 \toprule
 Bottleneck & Condition 1 & Condition 2 & Exponent & Prefactor &  Scaling law \\
 \midrule
 Time ($T$) &$D \gg NT^2,T^3$ & $N^{\alpha+1} \gg T$ & $-\alpha/(\alpha+1)$ & Thm.\ref{thm4} & Thms.\ref{thm2},\ref{thm3}  \\
 Data ($D$) & $T \gg D (\log D)^{1+\epsilon}$ & $N^{\alpha+1} \gg D$ & $-\alpha/(\alpha+1)$ & Thm.\ref{thm5} & Thm.\ref{thm5}  \\
 Parameter ($N$) & $D \gg T^3$ & $N^{\alpha+1} = o(T)$ & $-\alpha$ &  Thm.\ref{thm1} & Thm.\ref{thm1}  \\
 Compute ($C$) &$D \gg T^3$ & $N^{\alpha+1} \approx T$ & $-\alpha/(\alpha+2)$ & Cor. \ref{cor:computeOptimal} & Cor. \ref{cor:compute}  \\
 \bottomrule
\end{tabular}

\end{table}

\begin{figure}[htp] 
    \centering

    \includegraphics[width=0.99 \textwidth] {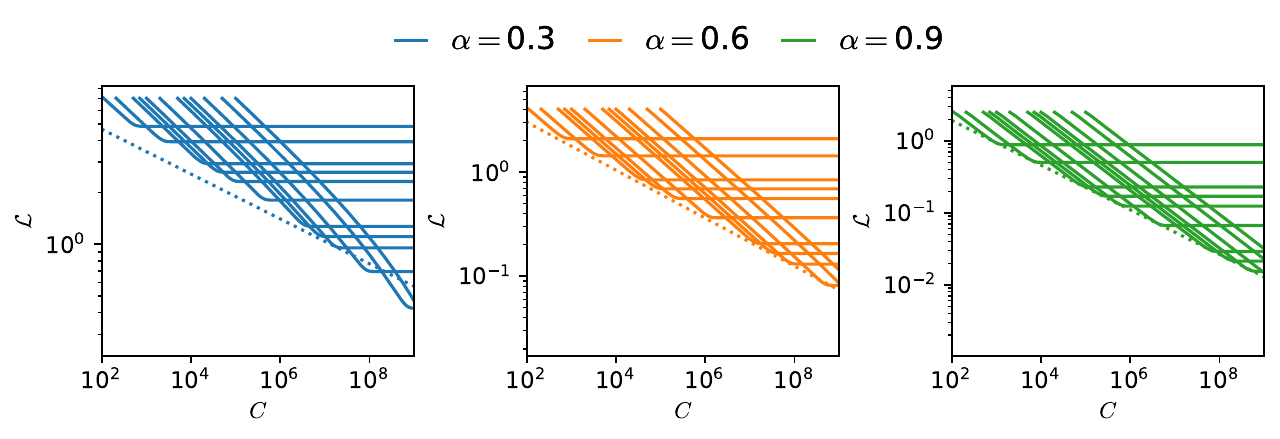}%

\caption{\textbf{Scaling law for optimal compute.} The solid lines are the learning curves of the multilinear model as a function of compute $C=T \times N$ with varying parameters $N$ from $10^1$ (top plateau) to $10^4$ (bottom plateau). The dotted lines are optimal compute scaling laws with exponent $-\alpha/(\alpha+2)$ (\cref{derivation:compute_scaling}) and calculated prefactor constants (\cref{app:rigorous}). See \cref{app:scaling_detail} for details of the experiment.
For a given $C$, we achieve the optimal tradeoff when $T$ is large enough to fit all $N$ skills (i.e. when the solid lines plateau). For the case $\alpha=0.3$, the optimal $C$ for the model decays faster than the power-law, see \cref{derivation:time_scaling}. }\label{fig:scaling_compute} 
\end{figure}

\section{Predicting emergence}\label{sec:emergence}

The literature on emergence has rapidly expanded lately; for a review of these developments, see \cref{sec:related_works}. 
In this section, we analyze the emergence of a 2-layer NN (\cref{sec:setup}) and discuss to what degree the emergence in NNs can be described with our model.
At initialization, NNs \textbf{lack} the information about the data and must `discover' each $g_k$. To take this effect into account in our model, we add an extra parameter which we calibrate (fit) on an NN trained on one skill ($\nt=1$) system and use it to predict the emergence of subsequent skills for the $\nt=5$ setup (\cref{fig:money}).

\subsection{Time emergence}\label{subsec:time_emergence}


In our multilinear model, the layerwise structure -- the product of parameters $a_kb_k$ -- leads to a sigmoidal saturation where an update of one layer hastens the update of the other layer.
Feature learning dynamics in a 2-layer MLP shares the positive feedback between the layers but require a non-trivial update of parameters to express $g_k$. 

\paragraph{Extended model.}
Given that feature learning, though nonlinear, involves parameter updates, we compensate for the additional delay in feature-learning by multiplying $g_k$ by a calibration constant $0< \mathcal{B} < 1$:
\begin{equation}\label{eq:toyB}
f_T(i,x; a,b) = \sum_{k=1}^{N} a_k(T)b_k(T) \mathcal{B} g_k(i,x), \quad\quad 0< \mathcal{B} <1.
\end{equation}
The calibration constant $\mathcal{B}$ rescales the dynamics in $T$ (\cref{eq:toy_theo}):
\begin{equation}\label{eq:toy_theo_extended}
\frac{\mathcal{R}_k(T)}{\SSS} = \frac{1}{1+\left(\frac{\SSS}{\mathcal{R}_k(0)} - 1\right)e^{-2\eta \mathcal{P}_s(k)\mathcal{B}^2 \SSS T}},
\end{equation}  
where $d_k/D \rightarrow \mathcal{P}_s(k)$ because we assume $D \rightarrow \infty$. We observe that $\mathcal{B}^2=\At$ fits the NN trained on one skill (see \cref{fig:emergences} in \cref{app:add_plots}), and the calibrated model predicts emergence in the $\ns=5$ system (\cref{fig:money}(a)): suggesting that the dynamics of feature-learning $g_k$ in 2-layers NNs is similar to that of parameter learning ($a_kb_k$) in a simple multilinear model. For further intuition of the extended model, see an example of time emergence in an NN in \cref{app:time_emergence_limit}.

\subsection{Data point emergence}
Our multilinear model can learn the $k^{th}$ skill with a single observation of the skill because the skill functions $g_k$ are built in (see \cref{corr1} in \cref{derivation:data_learning}). NNs, without the fixed basis functions, must `discover' each $g_k$, which requires multiple samples from the $k^{th}$ skill.

\paragraph{Extended model.}To make our model a $D_c$-shot learner, we extend it by replacing $g_{k}$ with the $e_{k,l}$ basis:
\begin{equation}\label{eq:toy2}
f_T(i,x; a,B) = \sum_{k=1}^{N} a_k(T)\sum_{l=1}^{D_c}B_{k,l}(T)e_{k,l}(i,x),
\end{equation}
where the matrix $B \in \mathbb{R}^{N \times D_c}$ is an extension of $b \in \mathbb{R}^N$ in \cref{eq:toy}, $D_c$ is a fixed scalar, and $e_{k,l}(i,x):\{0,1\}^{\ns+n_b} \rightarrow \mathbb{R}$ are functions with the following properties:
\begin{equation}\label{eq:toy2_eigfunc_cond}
\mathbf{E}_{X|I=k}\left[e_{k,l}e_{k,l'}\right] = \delta_{ll'},  \quad\ e_{k,l}(I \neq k,x) = 0, \quad \sum_{l=1}^{D_c}\frac{1}{\sqrt{D_c}}e_{k,l} = g_k. 
\end{equation}
The first property states that $e_k$'s, when $I=k$, are orthonormal in $X$. The second property asserts that, similar to $g_k$ (\cref{eq:gk}), $e_{k,l}$ is non-zero only when $I = k$, and fitting of the $k^{th}$ skill only occurs among $e_{k,l}$'s, keeping the skills \emph{decoupled}. The third property states that $g_k$ can be expressed using $e_{k,l}$. 

For the $k^{th}$ skill, the extended model overfits $g_k$ when there are fewer observations ($d_k$) than the dimension of the $e_{k,l}$ basis ($D_c$), and fits $g_k$ when $d_k \geq D_c$, making our model a $D_c$ shot learner.
\paragraph{$\boldsymbol{D_c}$ shot learner.} \textit{If we initialize the extended model in \cref{eq:toy2} with sufficiently small initialization and if the conditions in \cref{eq:toy2_eigfunc_cond} are satisfied, then the skill strength after training ($T \rightarrow \infty$) on $D$ datapoints is}
\begin{equation}\label{eq:d_c_shot}
\mathcal{R}_k(\infty)=\left\{\begin{matrix}
\SSS\left(1-\sqrt{1-d_k/D_c}\right) &:   d_k < D_c ~~ \\
\SSS &: d_k \ge D_c.
\end{matrix}\right.
\end{equation}
\textit{The number $d_k$ is the number of samples in the training set for the $k^{th}$ skill (i.e., datapoints with $g_k(i,x) \neq 0$).}

\begin{proof}
    See \cref{derivation:toy_data_extend}.
\end{proof}
Using \cref{eq:d_c_shot}, we can calculate the emergence of $\mathcal{R}_k/\SSS$ as a function of $D$. Note that \cref{eq:d_c_shot} is similar to the model in Michaud et al. \cite{michaud2023quantization} in that, to learn a skill, the model requires a certain number of samples from the skill. 

The derivation of \cref{eq:d_c_shot} follows trivially from the dynamics of the extended model (\cref{eq:toy2}) and well-known results in linear/kernel regression \cite{canatar2021spectral,jacot2020kernel, cui2021generalization, el2024double, simon2021theory}. To be more specific, the model finds the minimum norm solution as if we performed ridgeless regression on $g_k$ with basis functions $[e_{k,1}, \cdots e_{k, D_c}]$. See \cref{derivation:toy_data_extend} for details.

We observe that $D_c = 800$ approximates the data emergence for the $\ns=1$ system (see \cref{fig:emergences} in \cref{app:add_plots}) and also the emergence for $\ns=5$ system (\cref{fig:money}(b)), suggesting that the NN discovers $g_k$ when it observes $D_c$ samples from the $k^{th}$ skill.

\subsection{Parameter emergence}

Since our multilinear model has $g_k$'s as the basis functions, it requires only one basis function ($2$ parameters) to express a skill (see \cref{corr2} in \cref{derivation:param_learning}). 
A 2-layer NN cannot express a skill with a single hidden node (i.e., a hidden layer with width $1$); it requires multiple hidden nodes to express a single skill.

\paragraph{Extended model.} To compensate for the need for multiple hidden nodes in expressing one skill, we extend our model similarly to  \cref{eq:toy2}. Because the number of parameters is now a bottleneck, we ensure the model has $N$ basis functions ($e_{k,l}$'s):

\begin{equation}\label{eq:toy3}
f_T(i,x; a,B) = \sum_{k=1}^{q-1}\sum_{l=1}^{N_c} a_k(T) B_{k,l}(T)e_{k,l}(i,x) + \sum_{l'=1}^{r} a_{q}(T) B_{q,l'}(T)e_{q,l'}(i,x),
\end{equation}
where $N_c$ is the number of basis functions needed to express a skill, quotient $q$ is $\lfloor(N-1)/N_c\rfloor + 1$ 
and remainder $r$ is such that $(q-1)N_c + r = N$. In short, the $N$ basis functions are 
\begin{equation}\label{eq:basis_seq}
[e_{1,1},  \cdots,  e_{1,N_c},\quad  e_{2,1},  \cdots,  e_{2,N_c} \quad \cdots \quad e_{q,1},  \cdots,   e_{q,r}].
\end{equation}
Similar to \cref{eq:toy2_eigfunc_cond}, the basis functions satisfy the following properties
\begin{equation}\label{eq:toy2_eigfunc_cond2}
\mathbf{E}_{X|I=k}\left[e_{k,l}e_{k,l'}\right] = \delta_{ll'},  \quad\ e_{k,l}(I \neq k,x) = 0, \quad \sum_{l=1}^{N_c}\frac{1}{\sqrt{N_c}}e_{k,l} = g_k. 
\end{equation}

\paragraph{$\boldsymbol{N_c}$ basis functions for a skill.} \textit{For the extended model in \cref{eq:toy3}, the skill strength at $T, D \rightarrow \infty$ for a given $N$ becomes}
\begin{equation}\label{eq:param_emergence}
\mathcal{R}_k(\infty)=\left\{\begin{matrix}
0&:  k > q\,~~\\
\SSS\frac{r}{N_c}&:   k = q\,~~ \\
\SSS&: k < q\,.
\end{matrix}\right.
\end{equation}
\begin{proof}
See \cref{derivation:toy_param_extend}.
\end{proof}
The model can express the $k^{\mathrm{th}}$ skill based on the number of available basis functions for the given skill (\cref{eq:param_emergence}). For example, skills with $k<q$ have all $N_c$ basis functions $[e_{k,1}, \cdots, e_{k,N_c}]$ to express the $k^{\mathrm{th}}$ skill (\cref{eq:toy2_eigfunc_cond2}), while for $k = q$, only $r$ of the $N_c$ basis functions are available.


We observe that $N_c = 4$ fits the parameter emergence for the $\ns=1$ system (see \cref{fig:emergences} in \cref{app:add_plots}) and also the emergence for the $\ns=5$ system (\cref{fig:money}(c)), suggesting that the NN requires $4$ nodes in expressing $g_k$. The results also suggest that an NN, while lacking the ordering of basis functions (\cref{eq:basis_seq}), prefers to use the hidden neuron in fitting more frequent skills. The `preference' toward frequent skills agrees with \cref{fig:money}(a) where the NN learns more frequent skills first. Note that for the parameter emergence experiment, Adam \cite{kingma2014adam} was used, instead of SGD, to increase the chance of escaping the near-flat saddle points induced by an insufficient number of parameters. 

\subsection{Time emergence in a transformer} \label{subsec:transformer}
To test whether our conceptual framework extends to other architectures, we perform a time emergence experiment with a transformer (\cref{fig:transformer}). Note that the emergent time $\tau_{emerge}$ -- when the skill strength is sufficiently larger than $0$ -- follows the same power-law relationship as \cref{eq:toy_theo}: $\tau_{emerge}(k) \propto k^{\alpha+1}$ (see \cref{fig:power_law_helpfigure} in \cref{para:stage} for a discussion on emergent time). This suggests that, in the multitask sparse parity setup, other architectures may follow similar decoupled dynamics (\cref{eq:toy_theo}) and the consequent scaling laws (\cref{sec:scaling_law}) and emergence (\cref{sec:emergence}). An in-depth study of these findings across different architectures is left for future work.

\begin{figure}[htp] 
    \centering

    \includegraphics[height=2in] {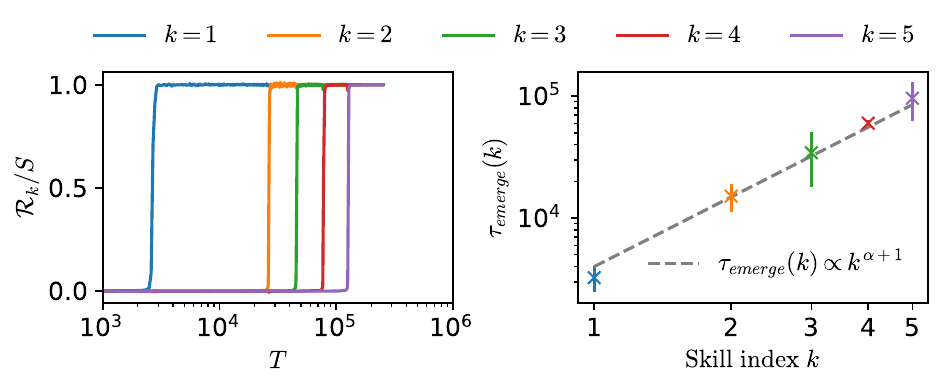}%

\caption{\textbf{Transformer on multitask sparse parity task.} We trained a transformer on the multitask sparse parity task with $\alpha=0.9$; see \cref{app:methods} for details. \textbf{Left:} An example of the time emergence (measued in steps) for the transformer in the $n_s=5$ setup. See \cref{app:add_plots} for enlarged plots showing the saturation of each skill in linear scale. \textbf{Right:}
The $k^{th}$ skill's emergent time  $\tau_{emerge}(k)$ (i.e. $\mathcal{R}_k(\tau_{emerge}(k))/\SSS = 0.05$) as a function of $k$ (error bars indicate 1-standard deviation over $5$ runs). The emergent times follow a power law of $k^{\alpha+1}$, following the same relationship in the multilinear model (\cref{eq:toy_theo}).
}\label{fig:transformer} 
\end{figure}

\subsection{Limitations of the multilinear model}\label{subsec:lim}
\vspace{-0.1cm}
The strength of our extended multilinear model comes from the decoupled dynamics for each skill: leading to the prediction of the time, data, and parameter emergence with a single calibration. 
The weakness of our model is that it simplifies the more complex dynamics of NNs.
\vspace{-0.3cm}
\paragraph{Time emergence.} We note that the NN and the multilinear model emerge at similar instances, but the NN takes longer to saturate fully. This is because, for a given skill, the dynamics of the NN is not one sigmoidal saturation but a sum of \textbf{multiple} sigmoidal dynamics with different saturation times.
To express the parity function, the NN must use multiple hidden neurons, and the skill strength can be divided into the skill strength from each neuron whose dynamics follow a sigmoidal saturation. Because of the non-linearity and the function it expresses, each neuron is updated at different rates, and the slowly saturating neurons result in a longer tail compared to our multilinear model. For an example, see \cref{fig:dynamic_emergence_tail_app} in \cref{app:time_emergence_limit}. 
\vspace{-0.3cm}
\paragraph{Data point emergence.} Our extended model (\cref{eq:d_c_shot}) deviates from NNs when $d_k \ll D_c$ and NNs show a more abrupt change in $\mathcal{R}_k$ as a function of $D$. This is because our model asserts strict decoupling among the skills: even a few $d_k$ will contribute to learning $g_k$ from $e_{k,l}$. This differs from the NN, which lacks strict decoupling among the samples from different skills. We speculate that because NNs can perform benign \cite{bartlett2020benign} or tempered \cite{mallinar2022benign} overfitting, they treat a few data points from less frequent skills as `noise' from more frequent skills: requiring more samples to learn the infrequent skills.
\vspace{-0.3cm}
\paragraph{Parameter emergence.} Note that \cref{fig:money}(c) has high variance compared to other emergence plots in \cref{fig:money}; this is because the NN sparsely, over many repeated trials, uses the hidden neurons to learn less frequent skills over more frequent ones (see \cref{fig:parameter_emergence_table} in \cref{app:add_plots} for an example of such outliers). Because NNs are less strictly biased toward frequent skills than our model, we speculate that initial conditions favoring less frequent skills may contribute to the outliers.

\section{Discussion and conclusion}\label{sec:conclusion}

This work demonstrated scaling laws and predicted emergence in a 2-layer MLP using a tractable multilinear model. We found that representing skills as mutually exclusive functions leads to the decoupled dynamics, resulting in the scaling laws observed in a 2-layer MLP. The layerwise structure leads to emergent (sigmoidal) saturation of the skill strength, similar to what is observed in 2-layer MLPs. 

Despite lacking explicit skill functions, NNs exhibit similar emergence patterns. We speculate that the model's layerwise structure and power-law frequencies of the skills induce
\textbf{stage-like} dynamics (\cref{para:stage}) in NNs. The parameters relevant for expressing more frequent skills are updated significantly faster than those for less frequent skills. When skill `discovery' operates on different time scales with minimal interaction, the skill dynamics effectively become \textbf{decoupled}, justifying our model setup.

Our results suggest a link between feature learning and emergence \cite{anwar2024foundational} driven by decoupled, stage-like dynamics. The layerwise dynamics leading to sigmoidal saturation may also disentangle the problem into skills (features) of varying importance (frequencies). Then feature learning, or discovering the basis functions that describe the target function \cite{bengio2013representation, lecun2015deep} (for recent studies, see \cite{yang2021tensor, atanasov2021neural, jacot2022feature, bordelon2022self, seroussi2023separation, cui2024asymptotics}), likely occurs in stages. Investigating this connection through layerwise dynamics is left for future work.

Similar to many prior works (see, e.g., \cite{hutter2021learning, michaud2023quantization}), we studied a simple model on an idealized power-law distributed dataset. Also, our model cannot capture the complex non-linear interactions among multiple skills but can express any linear superposition of skills. 
In future work, we will explore `complex skills' in language as a superposition of linearly independent skills. 
By validating our findings in language tasks, we aim to contribute to a broader understanding of how neural networks acquire and exhibit complex behaviors.






\section{Related works}
\label{sec:related_works}
In this section, we review the literature on scaling laws and emergence in NNs. Focusing on data scaling, Hutter~\cite{hutter2021learning} develops a model with a discrete set of features. Under the assumption of a power-law distribution of features, this model demonstrates that the error decreases as a power law with increasing data size. In a related vein, Michaud et al.~\cite{michaud2023quantization} propose a model of neural scaling laws in which the loss is decomposed into a sum over `quanta'. Their model aims to reconcile the apparent discrepancy between loss metrics' regular power-law scaling and the abrupt development of novel capabilities in large-scale models.  Various other models for neural scaling laws have been proposed in recent research, including connecting neural scaling exponents to the data manifold's dimension \cite{JMLR:v23:20-1111} and their relation with kernels \cite{bahri2021explaining}, proposing solvable random-feature models \cite{maloney2022solvable, bordelon2024dynamical}, and developing data scaling models using kernel methods \cite{spigler2020asymptotic,bordelon2020spectrum,cui2021generalization}. 

Closely related to the study of neural scaling laws is the understanding of emergent abilities in large language models. Several studies \cite{brown2020language, ganguli2022predictability, srivastava2022beyond, wei2022emergent} document examples of such emergent abilities.  Arora and Goyal \cite{arora2023theory} propose a framework for the emergence of tuples of skills in language models, in which the task of predicting text requires combining different skills from an underlying set of language abilities.  Okawa et al.~\cite{okawa2024compositional} demonstrate that a capability composed of smoothly scaling skills will exhibit emergent scaling due to the multiplicative effect of the underlying skills' performance. Other works related to the skill acquisition include Yu et al. \cite{yu2023skill}, who introduce a new evaluation to measure the ability to combine skills and develop a methodology for grading such evaluations, and Chen et al. \cite{chen2023skill}, who formalize the notion of skills and their natural acquisition order in language models.

\clearpage

\section*{Acknowledgements}
\vspace{-0.2cm}
NF acknowledges the UKRI support through the Horizon Europe guarantee Marie Skłodowska-Curie grant (EP/X036820/1). SL was supported by the National Research Foundation of Korea (NRF) grant funded by the Korean government (MSIT) (No.2020R1A5A1016126).
We thank Charles London, Eric Michaud, Zohar Ringel, and Shuofeng Zhang for their helpful comments.

\bibliographystyle{unsrt}
\bibliography{sample}
\appendix
\clearpage
\section{Glossary}\label{glossary}

\begin{table}[!htbp]
\begin{tabular}{ L{1cm} L{10cm}}
$\C $ & Normalization constant for $\mathcal{P}_s$ such that $\mathcal{P}_s(k) = \C k^{-(\alpha+1)}$ \\
$T$ & Time or step\\
$D$ &  Number of data points\\
$N$ & Number of parameters (skill basis functions in the model for the multilinear model; the width of hidden layer for MLP)\\
$C$ & The computation cost $T \times N$\\
$\ns$ & The number of skills in the multitask sparse parity problem\\
$I$ & Random variable of the control bits\\
$X$ & Random variable of the skill bits\\
$\mathcal{P}_s$ & Probability of skills (control bits)\\
$\mathcal{P}_b$ & Probability of skill bits\\
$\SSS$ & The target scale or the norm of the target function\\
$\mathcal{R}_k$ & Skill strength of the $k^{th}$ skill (\cref{eq:skill_learned_corr})\\
$\mathcal{L}$ &Total (generalization) loss\\
$\mathcal{L}^{(D)}$ &Empirical loss for $D$ samples\\
$\mathcal{L}_k$ &Skill loss of the $k^{th}$ skill (\cref{eq:loss_decomposed})\\
$d_k$ & Number of observation of the $k^{th}$ skill (i.e. number of training points $(i,x)$ with $g_k(i,x) \neq 0$)\\
$f^*$ &Target function $f^*: \{0,1\}^{\ns + \nb} \rightarrow \{-\SSS,\SSS\}$ (\cref{eq:target})\\
$g_k$ & The $k^{th}$ skill basis function $g_k: \{0,1\}^{\ns + \nb} \rightarrow \{-1,0,1\}$  (\cref{eq:gk})\\
\end{tabular}
\end{table}

\clearpage
\pagebreak[4]
\section{Background}
\label{app:background}

In this section, we review 
the multitask sparse parity dataset, as described by Michaud et al. \cite{michaud2023quantization} and discuss the nonlinear dynamics of two-layer linear networks, following the work of Saxe et al. \cite{saxe2013exact}.

\subsection{Multitask sparse parity}
\label{app:multitask}
The sparse parity task can be stated as follows: for a bit string of length $n_b$, the goal is to determine the parity (sum mod 2) of a predetermined subset of $m$ bits within that string.  The \textbf{multitask} sparse parity \cite{michaud2023quantization} extends this problem by introducing  $\ns$  unique sparse parity variants in the dataset. The input bit strings have a length of $\ns + n_b$. The first $\ns$ bits function as indicators by assigning a specific task. The frequency of the distinct parity tasks follows a rank-frequency distribution with an inverse power law relation (power-law distribution).  The last $n_b$ bits are uniformly distributed. This sets a binary classification problem  $\{0,1\}^{\ns + n_b} \rightarrow \{0,1\}$ where only a single bit of the initial $\ns$ bits is nonzero.
In \cref{tab:multitask_app}, the many distinct parity tasks represent different skills.\,\footnote{Note that here we follow the even/odd parity convention used in \cite{michaud2023quantization}, i.e., $\{0,1\}$, instead of $\{1, -1\}$ as used in the main text.}

\begin{table}
\caption{Representation of the multitask sparse parity as presented in \cite{michaud2023quantization}. The control bits are one-hot vectors encoding a specific parity task. The frequency of the different tasks follows a power-law distribution. In this example, there are $n_s =10$ tasks, and skill bits are length $n_b =15$. The $y$ column is the resulting parity computed from $m=3$ bits (highlighted in colors).  The multitask dataset provides a controlled experimental setting designed to investigate skills. \label{tab:multitask_app}\\}
\centering
\begin{tabular}{ P{3cm} P{3cm} P{0.7cm}   }

 \toprule
 Control bits & Skill bits & ~$y$ \\
 \midrule
\,{\color{blue}1}0000000000   &\,1{\color{blue}1}00{\color{blue}0}1000001{\color{blue}0}10    &1   \\

\,0{\color{purple}1}000000000   &\,01010{\color{purple}0}1{\color{purple}0}0{\color{purple}0}01000   &0  \\

\,00{\color{Green}1}00000000   &\,{\color{Green}0}0110101011{\color{Green}0}10{\color{Green}1} &1  \\

 \,\vdots  & \,\vdots  & \,\vdots    \\

\,0000000000{\color{VioletRed}1}& \,{\color{VioletRed}1}000{\color{VioletRed}1}000{\color{VioletRed}1}001100    &1  \\
 \bottomrule
\end{tabular}

\end{table}

The proposal in \cite{michaud2023quantization} aims to reconcile the regularity of scaling laws with the emergence of abilities with scale using three key hypotheses: (i) skills, represented as a
finite set of computations, are distinct and separate; (ii) these skills differ in their effectiveness,
leading to a ranking based on their utility to reduce the loss; and (iii) the pattern of how
frequently these skills are used in prediction follows a power-law distribution.  Interestingly,  the multitask problem has a consistent pattern across scaling curves: each parity displays a distinct transition, characterized by a sharp decrease in loss at a specific scale of parameters, data, or training step. Such a sudden shift occurs after an initial phase of no noticeable improvement, leading to reverse sigmoid-shaped learning curves. 
Michaud et al. \cite{michaud2023quantization} empirically show that for a one-hidden-layer neural network with ReLU activation, trained using cross-entropy loss and the Adam optimizer, these transitions happen at different scales for distinct tasks. This results in a smooth decrease in the overall loss as the number of skill levels increases.

\subsection{Nonlinear dynamics of linear neural network}
\label{app:nonlinear_dynamics}

Saxe et al. \cite{saxe2013exact} have solved the exact dynamics for two-layer linear neural networks with gradient descent under MSE loss (\cref{fig:saxe_demonstration}(a)).\footnote{To be specific, it is under gradient flow or the continuous limit of full batch gradient descent.} The dynamics decompose into independent modes that show sigmoidal growth at different timescales (\cref{fig:saxe_demonstration}(c)). The setup assumes orthogonal input features $X \in \mathbb{R}^{d_1}$ and input-output correlation matrix $\Sigma \in \mathbb{R}^{d_1 \times d_3}$ for target output $f^*(X) \in \mathbb{R}^{d_3}$:
\begin{equation}\label{eq:saxe_condition}
\mathbf{E}_X\left[X_iX_j \right] = \delta_{ij}, \quad\quad \Sigma = \mathbf{E}_X\left[Xf^{*T}(X) \right]
\end{equation}

By performing SVD (singular value decomposition) on input-output correlation matrix $\Sigma = U\Lambda V$, the target function $f^*:\mathbb{R}^{d_1} \rightarrow \mathbb{R}^{d_3}$ becomes:
\begin{equation}
f^*(x) = \sum_{k=1}^{d_2}v_k \lambda_k u_k^Tx, \quad\quad U^T\Lambda V = \mathbf{E}_X\left[Xf^*(X)^T\right]
\end{equation}
where $u_k \in \mathbb{R}^{d_1}$,$v_k \in \mathbb{R}^{d_3}$ are the row vectors of $U,V$ and $\lambda_k \in \mathbb{R}$ are the singular values of $\Lambda$. 

Saxe et al. \cite{saxe2013exact} have shown that the dynamics of a two-layer (one-hidden-layer) undercomplete (the width of the hidden layer is smaller than the width of the input and output) linear neural network  decomposes into that of the following `modes':
\begin{equation}\label{eq:saxe_decomposed}
v_k^Tf(x;a,b) = a_kb_k u_k^Tx \quad\quad k \in \{1,2,\cdots,d_2\}.
\end{equation}
where $a_k,b_k \in \mathbb{R}$ are the parameters. Note that \cref{eq:saxe_decomposed} are $d_2$ decoupled functions $v_k^Tf(x): \mathbb{R}^{d_1}\rightarrow \mathbb{R}$ (\cref{fig:saxe_demonstration}(b)). Assuming small and positive initialization ($0<a_k(0)b_k(0) \ll \lambda_k$), the dynamics of \cref{eq:saxe_decomposed} under gradient descent with learning rate $\eta$ can be solved analytically; the product of parameters $a_kb_k$ grows sigmoidally with saturation time proportional to $\lambda_k^{-1}$ (\cref{fig:saxe_demonstration}(c)): 

\begin{equation}\label{eq:saxe_analytic}
\frac{a_k(T)b_k(T)}{\lambda_k} = \frac{1}{1+\left(\frac{\lambda_k}{a_i(0)b_i(0)} - 1\right)e^{-2 \eta \lambda_k t}}\,.
\end{equation}

\begin{figure}
    \centering
        \begin{minipage}{.40\linewidth}
        \subfloat[Linear neural network]{%
            \includegraphics[width=0.99\textwidth]{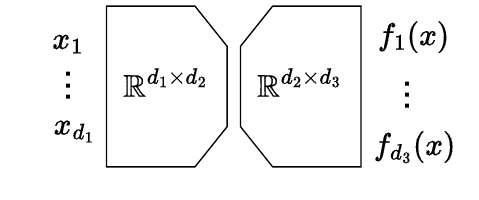}}
            \hfill
        \subfloat[Independent modes]{%
            \includegraphics[width=0.99\textwidth]{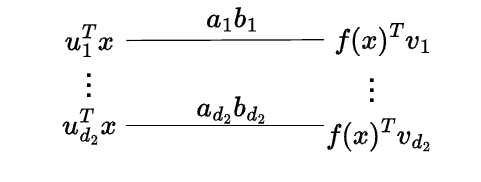}}
    \end{minipage}
    \begin{minipage}{.45\linewidth}
        \subfloat[Dynamics of modes]{%
                \includegraphics[width=\textwidth]{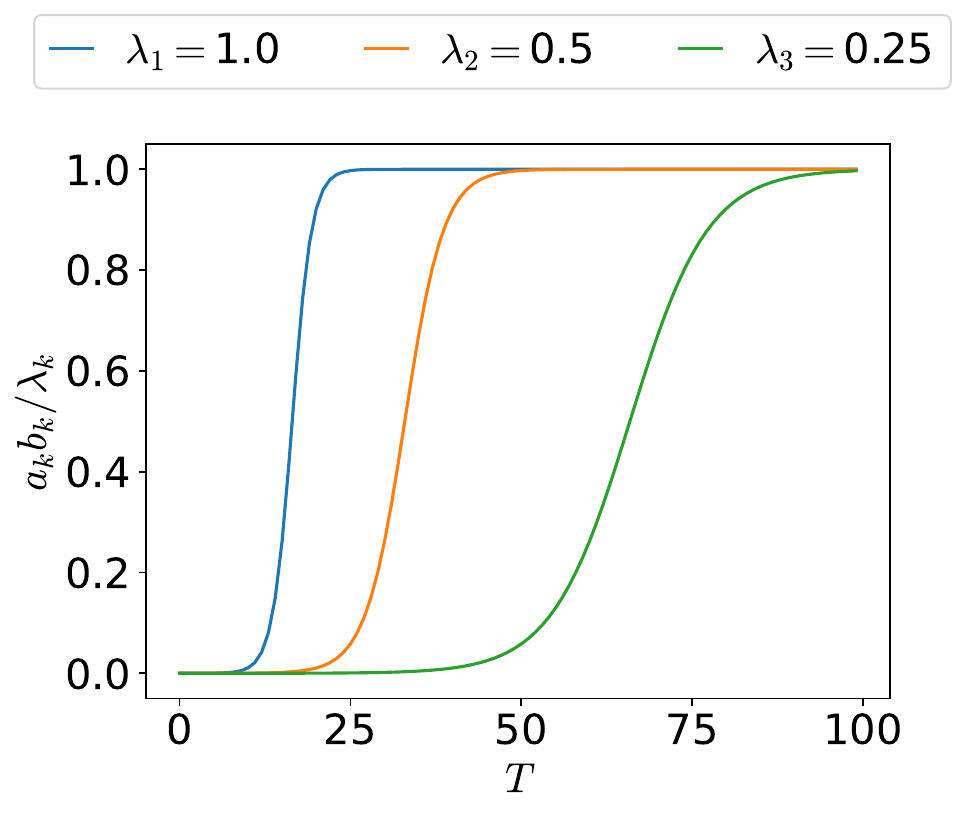}}
    \hfill
        \end{minipage}

    \caption{\textbf{Nonlinear dynamics of linear neural networks.} \textbf{(a):} A two-layer undercomplete linear neural network, which is a multiplication of two matrices, where  $d_2 <d_1$ and $d_2 <d_3$. \textbf{(b):} The $d_2$ independent modes of dynamics for linear neural network (\cref{eq:saxe_decomposed}). The product of parameters $a_kb_k$ are learnable parameters and vectors $u_k,v_k$ are obtained from SVD of the input-output correlation matrix $\Sigma$ (\cref{eq:saxe_condition}). \textbf{(c):} The temporal evolution of $a_kb_k$ under gradient descent, which follows a sigmoidal growth (\cref{eq:saxe_analytic}). Note that smaller $\lambda_k$ -- the singular value of $\Sigma$ -- results in a more delayed saturation of $a_kb_k$.}\label{fig:saxe_demonstration}
\end{figure}
Using the analytic equation of the multilinear model, Saxe et al. \cite{saxe2013exact} have empirically demonstrated that the dynamics of both linear and \textbf{nonlinear} neural networks closely resemble that of the multilinear model (\cref{eq:saxe_analytic}).

\section{Derivation of the multilinear model} \label{app:deritvation_multilinear}
In this section, we provide derivations of how the skill loss of our multilinear model evolves with a given resource: time (\cref{lemma1}), data (\cref{corr1}), and parameters (\cref{corr2}). Note that two corollaries for data and parameters (\cref{corr1,corr2}) follow from the decoupled dynamics (\cref{lemma1}).

\subsection{Decoupled dynamics of the multilinear model}\label{derivation:toy_dynamics}
\begin{lemma}\label{lemma1}
Let the multilinear model \cref{eq:toy} be trained with gradient flow on $D$ i.i.d samples for the setup in \cref{sec:setup} (input distribution: \cref{eq:probs}, target function: \cref{eq:target}, and MSE loss: \cref{eq:loss}). Let $k \leq N$ be a skill index in the multilinear model and the input distribution ($k \leq \ns$). Then assuming the following initialization $a_k(0)=b_k(0)$ and $0 < a_k(0)b_k(0) < \SSS$, the dynamics of the $k^{th}$ skill strength ($\mathcal{R}_k$) is
\begin{align}\label{eq:r_k_shown}
\mathcal{R}_k(T) = \frac{\SSS}{1+\left(\frac{\SSS}{\mathcal{R}_k(0)} - 1\right)e^{-2\eta \SSS\frac{d_k}{D}T}}
\end{align}
and the skill loss is 
\begin{equation}\label{eq:l_k_shown}
\mathcal{L}_k(T) = \frac{\SSS^2}{2\left(1+\left(\frac{\SSS}{\mathcal{R}_k(0)} - 1\right)^{-1}e^{2\eta S \frac{d_k}{D} T}\right)^2},
\end{equation}
where $\eta$ is the learning rate and $d_k$ is the number of observations with $g_k(I=k,x^{(j_k)}) \neq 0$.
\end{lemma}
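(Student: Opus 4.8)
The plan is to exploit the mutually exclusive supports of the skill functions (\cref{eq:skill_orthnormal}) to decouple the empirical loss into a sum of per-skill terms, and then to integrate the resulting scalar logistic ODE for each skill strength. First I would write out the empirical MSE loss on the $D$ samples. Because the control bits are one-hot, each sample $(i^{(j)},x^{(j)})$ activates exactly one basis function $g_{i^{(j)}}$, so the model output there reduces to $a_{i^{(j)}}b_{i^{(j)}}g_{i^{(j)}}(i^{(j)},x^{(j)})$ and the target to $\SSS g_{i^{(j)}}(i^{(j)},x^{(j)})$. Since $g_k\in\{-1,1\}$ when $i=k$, grouping the sum over samples by their skill index yields
\begin{equation}
\mathcal{L}^{(D)} = \frac{1}{2D}\sum_{k=1}^{N} d_k(\SSS - a_k b_k)^2 + \mathrm{const},
\end{equation}
where the constant collects the contributions of skills $k>N$ absent from the model and does not depend on the trainable parameters. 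The crucial point is that there are no cross terms between distinct skills, so the pair $(a_k,b_k)$ interacts only with the $k^{\text{th}}$ summand and the dynamics decouple across $k$.

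Next I would differentiate to obtain the gradient-flow equations $\dot a_k=\eta\frac{d_k}{D}(\SSS-a_kb_k)b_k$ and $\dot b_k=\eta\frac{d_k}{D}(\SSS-a_kb_k)a_k$. The balanced initialization $a_k(0)=b_k(0)$ is what collapses the system to a scalar: the difference $a_k-b_k$ satisfies the homogeneous linear ODE $\frac{d}{dT}(a_k-b_k)=-\eta\frac{d_k}{D}(\SSS-a_kb_k)(a_k-b_k)$ with zero initial data, hence $a_k(T)=b_k(T)$ for all $T$. Writing $\mathcal{R}_k=a_kb_k=a_k^2$ and differentiating then gives the logistic equation
\begin{equation}
\dot{\mathcal{R}}_k = 2\eta\frac{d_k}{D}\,\mathcal{R}_k(\SSS-\mathcal{R}_k),
\end{equation}
whose solution with $\mathcal{R}_k(0)\in(0,\SSS)$ is exactly the sigmoid \cref{eq:r_k_shown}. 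One checks self-consistency: the logistic flow keeps $\mathcal{R}_k$ inside $(0,\SSS)$ and forces $a_k,b_k$ to stay positive, so the identification $\mathcal{R}_k=a_k^2$ is valid for all time.

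Finally I would substitute the closed form for $\mathcal{R}_k(T)$ into the identity $\mathcal{L}_k=\frac{1}{2}(\SSS-\mathcal{R}_k)^2$ from \cref{eq:skill_loss_from_rk}. Setting $c=\SSS/\mathcal{R}_k(0)-1$ and $u=e^{-2\eta\SSS d_kT/D}$ gives $\SSS-\mathcal{R}_k=\SSS\,cu/(1+cu)$, and rewriting $\big(cu/(1+cu)\big)^2=\big(1+c^{-1}u^{-1}\big)^{-2}$ recovers the stated denominator form \cref{eq:l_k_shown}. I expect no serious obstacle: once decoupling is established, the content is a routine ODE integration. The one step that must be handled with care is the decomposition of the empirical loss, specifically verifying that the one-hot control-bit structure forbids cross terms so that each skill evolves independently, together with confirming that the algebraic simplification of $\frac{1}{2}(\SSS-\mathcal{R}_k)^2$ matches the published denominator form rather than the more transparent numerator form.
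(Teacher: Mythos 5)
Your proposal is correct and follows essentially the same route as the paper's proof: decouple the empirical loss via the mutually exclusive supports of the $g_k$, derive the gradient-flow equations, use the balanced initialization to reduce to a scalar logistic ODE for $\mathcal{R}_k=a_k^2$, and then convert to $\mathcal{L}_k$ via \cref{eq:skill_loss_from_rk}. Your handling of two details is in fact slightly cleaner than the paper's: you isolate the $k>N$ contributions as a parameter-independent constant rather than implicitly summing $a_kb_k$ over all $n_s$ skills, and you justify $a_k(T)=b_k(T)$ by solving the homogeneous linear ODE for $a_k-b_k$ rather than appealing to symmetry.
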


\begin{proof} 
For $j=1,\cdots, D$, denote $(i^{(j)}, x^{(j)})$ be the $j^{th}$ data point in the training set. Then the empirical loss for $D$ datapoints is given as
\begin{equation}
\mathcal{L}^{(D)} = \frac{1}{2D}\sum_{j=1}^D \left(f^*(i^{(j)},x^{(j)})-f(i^{(j)},x^{(j)})\right)^2.
\end{equation}
We note that
\begin{align*}
\left(f^{*}(i^{(j)}, x^{(j)}) - f(i^{(j)}, x^{(j)}) \right)^2 &= \left( \sum_{k=1}^{n_s} (S- a_k b_k) g_k(i^{(j)}, x^{(j)}) \right)^2 \\
&= (S- a_{i^{(j)}} b_{i^{(j)}})^2 g_{i^{(j)}}(i^{(j)}, x^{(j)})^2 \\
&= (S- a_{i^{(j)}} b_{i^{(j)}})^2,
\end{align*}
as $g_i(i,j) \in \{1, -1\}$ and $g_k(i,j)=0$ for $i \neq k$. So if we denote $d_k$ the number of data points with $i^{(j)}=k$, then we can conclude
\begin{equation}\label{eq:loss_decompose_emp_3}
\mathcal{L}^{(D)} = \frac{1}{2D} \sum_{j=1}^{D} (S- a_{i^{(j)}} b_{i^{(j)}})^2 = \frac{1}{2D} \sum_{k=1}^{n_s} d_k(S - a_k b_k)^2,
\end{equation}
which is the decoupled loss in the main text (\cref{eq:toy_theo}). Using the gradient descent equation and \cref{eq:loss_decompose_emp_3}, we obtain
\begin{align}
\frac{da_k}{dt} &= -\eta \frac{d\mathcal{L}_D}{da_k} \\
                &= -\eta \frac{d_k}{D} b_k (a_k b_k-\SSS).
\end{align}
Likewise, we can obtain the equation for $b_k$ as
\begin{align}
\frac{db_k}{dt} &= -\eta \frac{d_k}{D} a_k (a_k b_k-\SSS).
\end{align}
Because of symmetry between $a$ and $b$ (See \cref{app:nonlinear_dynamics} or \cite{saxe2013exact}), assuming $a_k(0)=b_k(0)$, and $a_k(0)b_k(0) > 0$ results in $a_k(T) = b_k(T)$ for all $T$. The equation for $\mathcal{R}_k = a_kb_k$ is 
\begin{align}
\frac{d\mathcal{R}_k}{dt} &= -\eta \frac{da_k}{dt}b_k + a_k\frac{db_k}{dt} = -\eta \frac{d_k}{D}(b_k^2 + a_k^2)(a_k b_k-\SSS) \\
&= -2\eta \frac{d_k}{D} \mathcal{R}_k(\mathcal{R}_k-\SSS).
\end{align}
Assuming $a_k(0)b_k(0) < \SSS$, we can solve the differential equation to obtain
\begin{align}
\mathcal{R}_k(T) = \frac{\SSS}{1+\left(\frac{\SSS}{\mathcal{R}_k(0)} - 1\right)e^{-2\eta \SSS\frac{d_k}{D}T}}.
\end{align}
The equation for $\mathcal{L}_k$ follows from \cref{eq:skill_loss_from_rk}.
\end{proof}

\subsection{One-shot learner}\label{derivation:data_learning}
\begin{corollary}\label{corr1}
For the setup in \cref{lemma1}, the $k^{th}$ skill loss ($\mathcal{L}_k$) at $T,N \rightarrow \infty$ is 
\begin{equation}
\mathcal{L}_k(\infty)=\left\{\begin{matrix}
0 &    :d_k > 0 ~~ \\
(\SSS-\mathcal{R}_k(0))^2/2 \approx \SSS^2/2  & :d_k=0, 
\end{matrix}\right.
\end{equation}
where $d_k$ is the number of $k^{th}$ skill's observations.
\end{corollary}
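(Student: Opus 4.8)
The plan is to read the result directly off the closed-form dynamics established in \cref{lemma1}, by taking the infinite-time limit $T \to \infty$. The role of $N \to \infty$ is only to guarantee that for any fixed skill index $k$ we have $k \le N$, so that $g_k$ is among the model's basis functions and the hypotheses of \cref{lemma1} are met. Concretely, I would invoke the skill-loss formula \cref{eq:l_k_shown}, namely $\mathcal{L}_k(T) = \tfrac{\SSS^2}{2}\bigl(1 + (\tfrac{\SSS}{\mathcal{R}_k(0)} - 1)^{-1} e^{2\eta \SSS (d_k/D) T}\bigr)^{-2}$, and observe that since $\eta, \SSS, D > 0$ the sign of the coefficient in the exponent is controlled entirely by $d_k$. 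The two branches of the claim then correspond to the two possible signs of that coefficient.

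For the case $d_k > 0$, the coefficient $2\eta \SSS d_k / D$ is strictly positive, so $e^{2\eta \SSS (d_k/D) T} \to \infty$ as $T \to \infty$. The denominator of \cref{eq:l_k_shown} diverges and hence $\mathcal{L}_k(\infty) = 0$. Equivalently, from \cref{eq:r_k_shown} the skill strength saturates to $\mathcal{R}_k(\infty) = \SSS$, which by \cref{eq:skill_loss_from_rk} yields zero skill loss. The key observation is that this saturation value is \emph{independent} of $d_k$: only the saturation rate depends on the number of observations, while the asymptotic value is always $\SSS$. This is precisely the one-shot property — even a single observation ($d_k = 1$) suffices to learn the skill perfectly in the infinite-time limit.

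For the case $d_k = 0$, the exponent in \cref{eq:r_k_shown} is identically zero for every $T$, so $\mathcal{R}_k(T) = \mathcal{R}_k(0)$ stays constant. The clean way to see this is structural rather than through the limit formula: in the gradient-flow equations of \cref{lemma1}, $d_k$ enters $da_k/dt$ and $db_k/dt$ as a multiplicative prefactor, so with no observations of the $k^{\mathrm{th}}$ skill the gradients vanish and $(a_k, b_k)$ never leave their initialization. Consequently $\mathcal{L}_k(\infty) = (\SSS - \mathcal{R}_k(0))^2/2$, and since the small initialization assumption of \cref{lemma1} gives $0 < \mathcal{R}_k(0) \ll \SSS$, this is $\approx \SSS^2/2$.

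I do not anticipate a genuine obstacle here, as the statement is an immediate specialization of \cref{lemma1}; the only point requiring a word of care is the $d_k = 0$ branch, where one should note that the gradient vanishes identically (pinning the skill strength at its initial value) rather than attempting a separate limiting argument. Writing out the two cases and invoking \cref{eq:skill_loss_from_rk} to convert between $\mathcal{R}_k$ and $\mathcal{L}_k$ completes the proof.
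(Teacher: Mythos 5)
Your proposal is correct and follows essentially the same route as the paper's own proof: take $T,N\rightarrow\infty$ in the closed-form dynamics of \cref{lemma1}, note that $\mathcal{R}_k(\infty)=\SSS$ when $d_k>0$ and $\mathcal{R}_k(\infty)=\mathcal{R}_k(0)$ when $d_k=0$, and convert to $\mathcal{L}_k$ via \cref{eq:skill_loss_from_rk}. Your extra structural remark that the gradient vanishes identically when $d_k=0$ is a nice clarification but does not change the argument.
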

\begin{proof}
The corollary follows directly from \cref{lemma1}. By taking $T, N \rightarrow \infty$, 
\begin{equation}
\mathcal{R}_k(\infty)=\left\{\begin{matrix}
\SSS  &: d_k > 0  \\
\mathcal{R}_k(0) &:  d_k = 0 
\end{matrix}\right.
\end{equation}
We obtain the result by using the relationship between $\mathcal{R}_k$ and $\mathcal{L}_k$ in \cref{eq:skill_loss_from_rk}.
\end{proof}

\subsection{Equivalence between a basis function and a skill}\label{derivation:param_learning}
\begin{corollary}\label{corr2}
Let the multilinear model \cref{eq:toy} be trained with gradient flow on $D$ i.i.d samples for the setup in \cref{sec:toy_model} (input distribution: \cref{eq:probs}, target function: \cref{eq:target}, and MSE loss: \cref{eq:loss}). Assume $a_k(0)=b_k(0)$, $0 < a_k(0)b_k(0) < \SSS$, and that the model has the $N$ most frequent skills as basis functions. Then $\mathcal{R}_k$ for the $k^{th} \leq \ns$ skill at $T, D \rightarrow \infty$ is
\begin{equation}
\mathcal{L}_k(\infty)=\left\{\begin{matrix}
0 &    :k \leq N  \\
\SSS^2/2  & :k > N 
\end{matrix}\right.
\end{equation}
\end{corollary}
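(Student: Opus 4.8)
The plan is to deduce the statement directly from the decoupled dynamics of \cref{lemma1}, splitting into the two regimes $k \le N$ and $k > N$, which encode whether or not the model carries the basis function $g_k$ required to express the $k^{\mathrm{th}}$ skill.

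I would dispose of the case $k > N$ first, as it needs no dynamics. Because the multilinear model (\cref{eq:toy}) only sums over the first $N$ basis functions and each $g_{k'}$ with $k' \le N$ satisfies $g_{k'}(I=k,x)=0$ for $k \neq k'$ (by the mutual exclusivity \cref{eq:skill_orthnormal}), the model output restricted to the $k^{\mathrm{th}}$ skill vanishes: $f_T(I=k,x)=0$ for every $x$ and every $T$. Feeding $\mathbf{E}_X[f_T(I=k,X)^2]=0$ and $\mathcal{R}_k=0$ into the loss decomposition \cref{eq:skill_loss_decompose} then yields $\mathcal{L}_k(\infty)=\SSS^2/2$ immediately.

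For $k \le N$ the basis function $g_k$ is present, so \cref{lemma1} applies verbatim and I would simply take the appropriate limit of \cref{eq:r_k_shown}. The crucial observation is that $D\to\infty$ guarantees $d_k>0$: since $\mathcal{P}_s(k)>0$ for all $k\le\ns$, the count $d_k = D\,\mathcal{P}_s(k)$ diverges. With $d_k>0$ fixed, sending $T\to\infty$ kills the exponential in \cref{eq:r_k_shown}, giving $\mathcal{R}_k(\infty)=\SSS$ and hence $\mathcal{L}_k(\infty)=\tfrac{1}{2}(\SSS-\SSS)^2=0$ via \cref{eq:skill_loss_from_rk}.

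This corollary is the parameter-bottleneck mirror of the data-bottleneck \cref{corr1}: there one fixes $N=\infty$ and asks whether the skill is observed ($d_k>0$), whereas here one fixes $D=\infty$ and asks whether the model can represent the skill ($k\le N$). I anticipate no real obstacle; the one point to state with care is the order of limits, ensuring $d_k>0$ holds (from $D\to\infty$ and $\mathcal{P}_s(k)>0$) before the $T\to\infty$ saturation is invoked, so that each decoupled sigmoid fully converges.
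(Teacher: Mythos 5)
Your proof is correct and follows essentially the same route as the paper: both cases are read off from the decoupled dynamics of \cref{lemma1}, taking $D \to \infty$ (so that $d_k > 0$) and then $T \to \infty$ to saturate each sigmoid, with \cref{eq:skill_loss_from_rk} converting $\mathcal{R}_k(\infty)$ into $\mathcal{L}_k(\infty)$. Your treatment of $k > N$ is in fact slightly cleaner than the paper's: you observe that $f_T(I=k,\cdot) \equiv 0$ when $g_k$ is absent from the model, giving $\mathcal{L}_k(\infty) = \SSS^2/2$ exactly via \cref{eq:skill_loss_decompose}, whereas the paper writes $\mathcal{R}_k(\infty) = \mathcal{R}_k(0)$ for these indices and invokes $\mathcal{R}_k(0) \ll \SSS$, which only yields the stated value approximately.
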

\begin{proof}
The corollary follows directly from \cref{lemma1}. By taking $T,D \rightarrow \infty$, 
\begin{equation}
\mathcal{R}_k(\infty)=\left\{\begin{matrix}
\SSS  &:  k \leq N  \\
\mathcal{R}_k(0)   &: k > N
\end{matrix}\right.
\end{equation}
We obtain the result by using the relationship between $\mathcal{R}_k$ and $\mathcal{L}_k$ in \cref{eq:skill_loss_from_rk} and $\mathcal{R}_k(0) \ll \SSS$.
\end{proof}

\clearpage
\section{Stage-like training: intuitive derivation of the scaling laws}\label{para:stage}

Even though we provide more detailed (\cref{app:scaling}) and rigorous (\cref{app:rigorous}) derivation of the scaling laws, a less general yet more intuitive solution aids in understanding the scaling laws of our model and NNs. In this section, we define stage-like training -- one skill is completely learned before the next skill initiates learning (\cref{fig:power_law_helpfigure}(a)) -- and state the conditions for it to occur. We provide an example of how stage-like training results in the time scaling law and explain how the model in Michaud et al. \cite{michaud2023quantization} may arise from the NN dynamics. Finally, we discuss the stage-like training's role in emergence in NNs.

\begin{figure}[htp] 
    \centering
    \subfloat[Emergent and saturation time]{%
        \includegraphics[width=0.45 \textwidth]{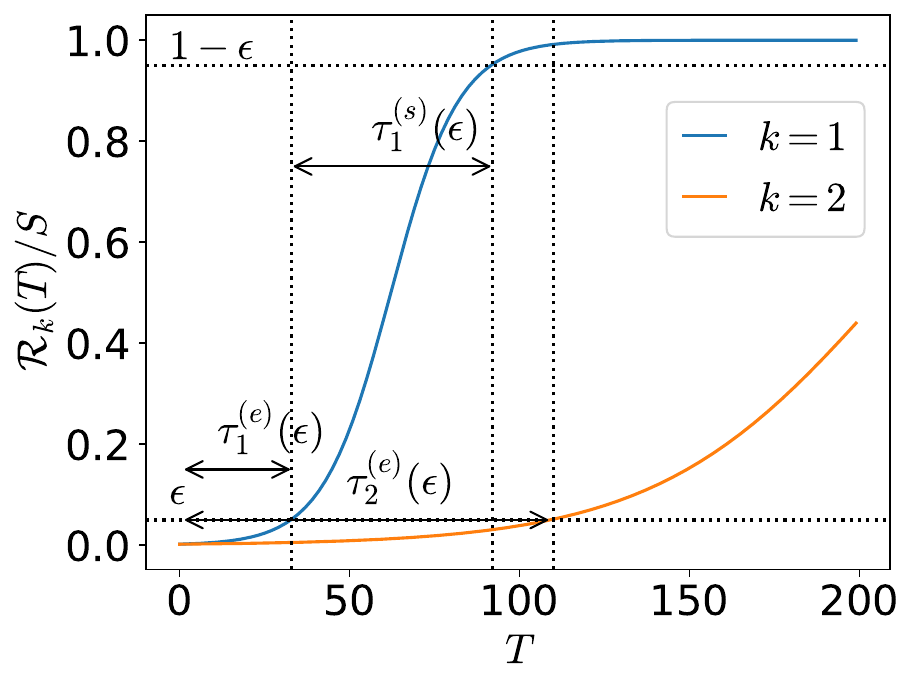}%
        }%
    \subfloat[Loss change between emergences]{%
        \includegraphics[width=0.45 \textwidth]{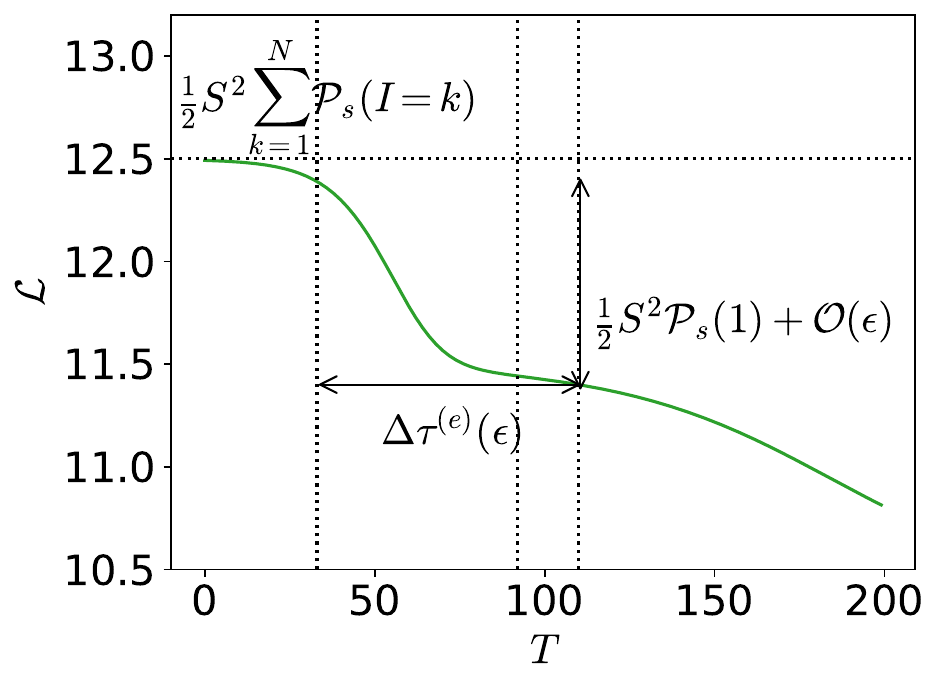}%
        }%
    \caption{\textbf{Stage-like training.} The multilinear model is trained on the multitask sparse parity problem with $\alpha=0.6$  and $S =5$. \textbf{(a):} Skill strength of the model as a function of time. The emergent time $\tau^{(e)}_k(\epsilon)$ is the time required for the $k^{th}$ skill to reach $\mathcal{R}_k/\SSS=\epsilon$. The saturation time $\tau^{(s)}_k(\epsilon)$ is the time required for $\mathcal{R}_k/\SSS$ to saturate from $\epsilon$ to $1-\epsilon$. The model shows stage-like training if the emergent time interval $\tau^{(e)}_{k+1}(\epsilon) - \tau^{(e)}_{k}(\epsilon)$ is larger than the saturation time $\tau^{(s)}_k(\epsilon)$ for sufficiently small $\epsilon$ ($0.05$ in the figure). \textbf{(b):} The loss as a function of time for the same system as (a). For stage-like training, the change in the loss for the $k^{th}$ emergence is $\mathcal{P}_s(k)\mathcal{L}_k + \mathcal{O}(\epsilon)$ and the interval for the next emergence is $\Delta \tau^{(e)}(\epsilon) = \tau^{(e)}_{k+1}(\epsilon) - \tau^{(e)}_{k}(\epsilon)$. }\label{fig:power_law_helpfigure}
\end{figure}

\subsection{Stage-like training}
When a model exhibits an emergence behavior -- when saturation of skill occurs abruptly after a delay -- and the intervals between each emergence are sufficiently large, the model admits stage-like training. The multilinear model (sigmoidal saturation of skills strength, \cref{eq:toy_theo}) in the multitask sparse parity dataset (power-law decay of skill frequencies, \cref{eq:probs}) can satisfy such conditions: 
In \cref{fig:power_law_helpfigure}(a), we observe the stage-like training in time in which one skill saturates (reaches $\mathcal{R}_k/\SSS \approx 1$) before the next skill initiates its emergence. To quantify this behavior, we define two intervals for each skill (see \cref{fig:power_law_helpfigure}(a)): 
\begin{itemize}
    \item The emergent time $\tau_k^{(e)}(\epsilon)$: the time for $\mathcal{R}_k/\SSS$ to reach $\epsilon$;
    \item The saturation time $\tau_k^{(s)}(\epsilon)$: the time for $\mathcal{R}_k/\SSS$ to saturate from $\epsilon$ to $1-\epsilon$.
\end{itemize}
Using the dynamics equation (\cref{eq:toy_theo}) and that $d_k/D \rightarrow \mathcal{P}_s(k)$, the emergent time and saturation time of the $k^{th}$ skill becomes
\begin{align}\label{eq:time_gaps}
\tau_k^{(e)}(\epsilon) = \frac{1}{2\eta\mathcal{P}_s(k) \SSS} \ln\left(\frac{\frac{\SSS}{\mathcal{R}_k(0)}-1}{\frac{1}{\epsilon}-1} \right) \propto k^{\alpha+1}, \quad\quad
\tau_k^{(s)}(\epsilon) = \frac{1}{\eta\mathcal{P}_s(k) \SSS} \ln\left(\frac{1}{\epsilon}-1\right) \propto k^{\alpha+1}.
\end{align}
For sufficiently small initialization ($\mathcal{R}_k(0) \ll \SSS$), we get a \textbf{stage-like} training:
\begin{equation}\label{eq:stage_like_assumption}
\tau_k^{(s)}(\epsilon) < \tau_{k+1}^{(e)}(\epsilon)- \tau_{k}^{(e)}(\epsilon), \quad\quad \epsilon \ll 1.
\end{equation}
where the model finishes learning (saturating) the $k^{th}$ skill before starting to learn (emerging) the next skill. 

\subsection{Time scaling law from stage-like training}
Assuming our model satisfies the stage-like training for all $k$ of interest, we can derive the time scaling law from the stage-like training.

At $\tau_k^{(e)}(\epsilon)$, because of stage-like training, all skills with index up to but not including  $k$ have saturated ($\mathcal{R}_{i<k} \approx \SSS$), or equivalently $\mathcal{L}_{i<k} \approx 0$ (\cref{eq:skill_loss_from_rk}). The total loss, the sum of $\mathcal{L}_j$ weighted by $\mathcal{P}_s(j) \propto j^{-(\alpha+1)}$ (\cref{eq:loss_decomposed}), becomes $ \sum_{j=k}^{\infty}\mathcal{P}_s(I=j)\SSS^2/2$ (\cref{fig:power_law_helpfigure}(b)). The saturation of the $k^{th}$ skill results in a loss difference of $ \mathcal{P}_s(I=k)\SSS^2/2$. Thus, we obtain 
\begin{align}
\frac{\Delta \mathcal{L}}{\mathcal{L}} &\approx \frac{\mathcal{P}_s(I=k)}{\sum_{j=k}^{\infty}\mathcal{P}_s(I=j)} = -\frac{k^{-(\alpha+1)}}{\sum_{j=k}^{\infty}j^{-(\alpha+1)}} \approx -\frac{k^{-(\alpha+1)}}{\int_{k}^{\infty}j^{-(\alpha+1)}dj} \\\label{eq:delta_loss}
&= -\alpha k^{-1} + \mathcal{O}(k^{-2}).
\end{align}

Accordingly, the emergent interval between the $k$ and $k+1$ skills relative to the $\tau_{k}^{(e)}(\epsilon)$ is 
\begin{align}
\frac{\Delta T}{T} &= \frac{\tau_{k+1}^{(e)}(\epsilon)-\tau_k^{(e)}(\epsilon)}{\tau_k^{(e)}(\epsilon)} = \frac{(k+1)^{\alpha+1}-k^{\alpha+1}}{k^{\alpha+1}} \\\label{eq:delta_time}
&= (\alpha+1)k^{-1} + \mathcal{O}(k^{-2}).
\end{align}
Assuming $k \gg 1$ and combining \cref{eq:delta_loss} and \cref{eq:delta_time} to the largest order, we have the equation for the power-law with exponent $-\alpha/(\alpha+1)$ in \cref{fig:scaling}(a):
\begin{equation}\label{eq:time_scale_law}
\frac{\Delta \mathcal{L}}{\mathcal{L}} = -\frac{\alpha}{\alpha+1} \frac{\Delta T}{T}.
\end{equation}

If the stage-like training holds for any resource (e.g., time, data, or parameters), the scaling law can be derived using the ratio of change in loss per skill (\cref{eq:delta_loss}) and the ratio of change with respect to the resource (given by the emergent time in \cref{eq:delta_time}). The quanta model in Michaud et al. \cite{michaud2023quantization} is an example where the stage-like training holds for all resources.

\subsection{Discussion on the effective decoupling of skills in neural networks}

In \cref{sec:emergence}, we have empirically demonstrated that the multilinear model predicts the emergence of a 2-layer NN (\cref{fig:money}). In \cref{sec:conclusion}, we briefly discussed why NNs, despite their \textbf{lack} of the decoupling among the skills, behave similarly to the decoupled model with $g_k$s as fixed basis functions: the \textbf{stage-like training} in NNs -- induced by the model's layerwise structure and power-law frequencies of the skills -- effectively decouples the skills. In this subsection, we extend the discussion in more detail.

In NNs, even though $g_k$s are `discovered' (feature learned) by non-tractable dynamics, we speculate that similar stage-like dynamics also hold in `discovering' (feature learning) $g_k$s: parameters `useful' for expressing more frequent skills will be updated significantly faster than parameters useful for expressing less frequent skills.

If skill discovery and saturation dynamics operate at different time scales  (stages), with negligible interaction among the skills, the skill dynamics become effectively \textbf{decoupled}.
Because the dynamics are decoupled in stages, NNs repeat the feature learning process -- using the limited resource (time, data, parameters) to express the skill -- for all skills with each iteration varying only in the scale of the resource (e.g. training time, number of observations, and number of hidden layer neurons): resulting in a similar emergence to our multilinear model.

A more concrete understanding of our speculation that feature learning also occurs in stages due to a layerwise structure is left for future work.

\section{Derivation of the scaling law exponents}\label{app:scaling}
This section provides a detailed derivation of the scaling laws up to a rigor common in physics and engineering. For example, we approximate the Riemann sum as integral or treat $k$, the number of skills, as a differentiable parameter. For more general and rigorous derivations including the prefactor constants, see \cref{app:rigorous}. Instead, for more intuition and the relationship to the quanta model in Michaud et al. \cite{michaud2023quantization}, see \cref{para:stage}.

\begin{table}[!htp] 
\caption{\textbf{Summary of the scaling laws.} The leftmost column shows the bottleneck of the scaling law. The middle three columns show the resource values in terms of the bottleneck (either taken to infinity or proportional to the bottleneck). The last column shows the scaling exponent for the loss as power-law of the bottleneck where $\alpha+1$ is the exponent of the Zipfian input data (\cref{eq:probs}).\\}
\centering
\begin{tabular}{ P{3cm} P{1.5cm} P{1.5cm} P{1.5cm} P{2cm}  }
 \toprule
 Bottleneck & Time & Data & Parameter & Exponent \\
 \midrule
 Time ($T$) & $T$ & $\infty$ & $\infty$ & $-\alpha/(\alpha +1)$ \\
 Data ($D$) & $\infty$ & $D$ & $\infty$ & $-\alpha/(\alpha + 1)$ \\
 Parameter ($N$) & $\infty$ & $\infty$ & $N$ & $-\alpha$ \\
 Compute ($C$) & $C^{(\alpha+1)/(\alpha+2)}$ & $\infty$ & $C^{1/(\alpha+2)}$ & $-\alpha/(\alpha+2)$ \\
 \bottomrule
\end{tabular}
\end{table}

\subsection{Time scaling law exponent}\label{derivation:time_scaling}
To derive the time scaling law exponent, we assume the time as the bottleneck and take $N, D \rightarrow \infty$. By using the decoupled dynamics of each skill loss (\cref{lemma1}),
\begin{equation}
\mathcal{L}_k = \frac{\SSS^2}{2\left(1+\left(\frac{\SSS}{\mathcal{R}_k(0)} - 1\right)^{-1}e^{2\eta \frac{d_k}{D} \SSS T}\right)^2}.
\end{equation}
Noting that $d_k/D \rightarrow \mathcal{P}_s(k)$ as $D \rightarrow \infty$, where $\mathcal{P}_s(k) = \C k^{-(\alpha+1)}$, we have
\begin{equation}
\mathcal{L}_k = \frac{\SSS^2}{2\left(1+\left(\frac{\SSS}{\mathcal{R}_k(0)} - 1\right)^{-1}e^{2\eta \C k^{-(\alpha+1)} \SSS T}\right)^2}.
\end{equation}
This is a function of $k^{-(\alpha+1)}T$ only, suggesting the \textbf{decoupling} dynamics for each skill.
Thus,
\begin{align}\label{eq:Ttok}
\frac{d\mathcal{L}_k}{dT} = -\frac{k}{ (\alpha+1)T}\frac{d\mathcal{L}_k}{dk}.
\end{align}
Using \cref{eq:loss_decomposed} and taking $N,\ns \rightarrow \infty$ at the same rate,\footnote{We take $N$ and $\ns$ to $\infty$ at the same rate since we do not want the number of parameters to be a bottleneck in this setup.} we can approximate the loss as an integral instead of a sum over $k$:
\begin{align}
\mathcal{L} \approx \lim_{N \rightarrow \infty} \int_1^N \C k^{-(\alpha+1)} \mathcal{L}_k dk,
\end{align}
where $\C$ is the normalization constant for $\mathcal{P}_s$. We can differentiate the loss and use \cref{eq:Ttok} to express the equation in terms of $k$:
\begin{align}
\frac{d\mathcal{L}}{dT} = \lim_{N \rightarrow \infty} \int_1^N \C k^{-(\alpha+1)} \frac{d\mathcal{L}_k}{dT} dk = -\lim_{N \rightarrow \infty}\frac{1}{(\alpha+1)T}\int_1^N \C k^{-\alpha} \frac{d\mathcal{L}_k}{dk} dk.
\end{align}
Integrating by parts, we obtain
\begin{align}\label{eq:time_align_approx}
\frac{d\mathcal{L}}{dT} &= -\lim_{N \rightarrow \infty}\frac{1}{(\alpha+1)T}\left[\C k^{-\alpha}\mathcal{L}_k\right]_1^N -\lim_{N \rightarrow \infty} \frac{\alpha}{(\alpha+1)T}\int_1^N   \C k^{-(\alpha+1)} \mathcal{L}_k dk \\
&= -\lim_{N \rightarrow \infty}\mathcal{O}\left(N^{-\alpha}\frac{1}{T}\right) + \mathcal{O}\left(\frac{1}{Te^T}\right) - \frac{\alpha}{(\alpha+1)T}\mathcal{L}.
\end{align}
The first term goes to $0$ as $N \rightarrow \infty$ and the second term goes to $0$ exponentially faster compared to the last term for $T \gg 1$, which leads to the scaling law with exponent $-\alpha/(\alpha+1)$:
\begin{align}
\frac{d\mathcal{L}(T)}{\mathcal{L}(T)} &= -\frac{\alpha}{\alpha+1}\frac{dT}{T}.
\end{align}

\paragraph{Finite $\boldsymbol{N}$ correction for small $\boldsymbol{\alpha}$.} In \cref{fig:scaling_law_break}, we observe that our model with $\alpha=0.1$ deviates from the expected power-law with exponent $-\alpha/(\alpha+1)$. The deviation can be explained by the antiderivative term in \cref{eq:time_align_approx}: 

\begin{figure}[htp] 
    \centering
        \includegraphics[width=0.5 \textwidth]{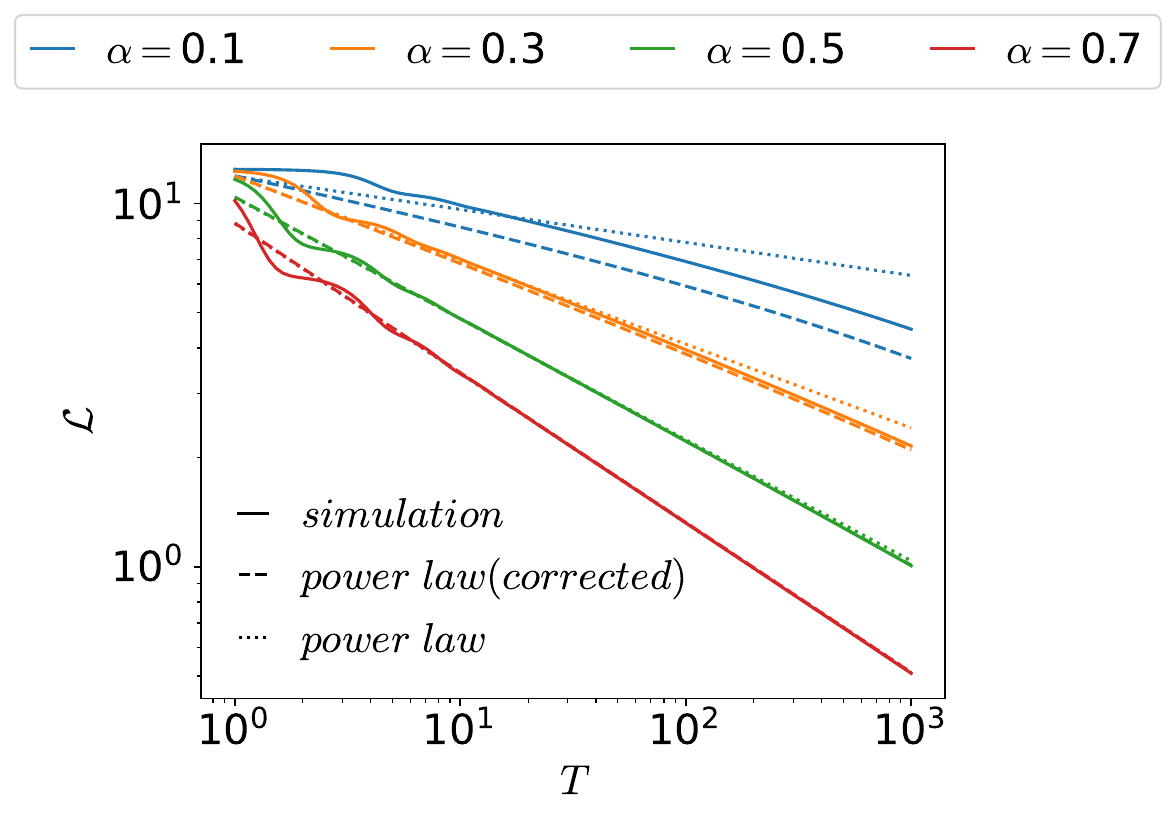}%
    \caption{\textbf{Scaling law and corrected predictions.} A simulation of our multilinear model with $N=50,000$ (solid), a scaling law with exponent $-\alpha/(\alpha+1)$ (dotted), and a corrected scaling law considering finite $N$ (dashed, \cref{eq:corrected_power_law}). The finite $N$ corrected scaling law better predicts the dynamics, especially for smaller $\alpha$. }\label{fig:scaling_law_break}
\end{figure}

\begin{align}\label{eq:antiderivative}
\lim_{N \rightarrow \infty} \left[\frac{1}{2(\alpha+1)}\frac{\SSS^2\C}{\left(1+\frac{1}{\SSS/\mathcal{R}_k(0) - 1}e^{2\eta \SSS \C k^{-(\alpha+1)} T}\right)^2} \frac{k^{-\alpha}}{T}\right]_1^N = \lim_{N \rightarrow \infty} \left( \mathcal{O}\left(N^{-\alpha}\frac{1}{T}\right) - \mathcal{O}\left(\frac{1}{Te^T}\right) \right).
\end{align}
The second term ($k=1$) goes to $0$ faster than $\mathcal{O}(T^{-1})$ for sufficiently larger $T$ but the first term ($k=N$) may not decay fast enough for finite $N$ and sufficiently small $\alpha$. For example, $N=50,000$ and $\alpha=0.1$ leads to $N^{-\alpha}\approx 0.3$, which is not negligibly small.

Assuming finite $N$ and small $\alpha$ such that the first term in \cref{eq:antiderivative} is non-negligible, we can rewrite \cref{eq:time_align_approx} as
\begin{align}\label{eq:corrected_power_law}
\frac{d\mathcal{L}}{dT} &\approx - \frac{\alpha}{(\alpha+1)}\frac{\mathcal{L}+\mathcal{L}_C}{T}, \quad\quad \mathcal{L}_C \approx \SSS^2\C N^{-\alpha}/2\alpha,
\end{align}
where we assumed a small initialization $\SSS/\mathcal{R}_k(0) \gg 1$ and sufficiently large number of parameters $N^{\alpha+1} \gg T$ to approximate $\mathcal{L}_C $. Because the total loss at initialization is $\mathcal{L}(0) = \SSS^2/2$, $\mathcal{L}_C $ is non-negligible compared to the loss for sufficiently small $\alpha$. Thus considering $\mathcal{L}_C$, we obtain the corrected power-law which better approximates the time scaling law (dashed lines in \cref{fig:scaling_law_break}). For a rigorous and comprehensive analysis of the time scaling law, see \cref{thm2} and \cref{thm3} in \cref{app:rigorous}.

\subsection{Data scaling law exponent}\label{derivation:data_scaling}
In this section, we derive the data scaling law exponent.
The data scaling law assumes $T \rightarrow \infty$ and $N \rightarrow \infty$ with data as the bottleneck. From the decoupled dynamics of the multilinear model (\cref{lemma1}), we can show that our model is a one-shot learner (\cref{corr1}):

\paragraph{One shot learner.}\textit{Given that $N > k$, $T \rightarrow \infty$, and $d_k$ is the number of samples from the training set with $g_k(i,x) \neq 0$, the $k^{th}$ skill loss after training is}
\begin{equation}\label{eq:one_shot}
\mathcal{L}_k(\infty)=\left\{\begin{matrix}
0 &    :d_k > 0 ~~  \\
(\SSS-\mathcal{R}_k(0))^2/2 \approx \SSS^2/2  & :d_k=0. 
\end{matrix}\right.
\end{equation}
\begin{proof}
See \cref{derivation:data_learning}.
\end{proof}
Our model requires only one sample from the $k^{th}$ skill to learn such a skill, similar to how language models are few-shot learners at inference.\footnote{Few-shot learning is typically discussed in the context of models that have undergone pre-training (see, e.g. \cite{brown2020language}). We speculate that expanding in the basis $g_k$ in our framework can model aspects of the pre-training process.} The model can one-shot learn a skill since it has $g_k$ as the basis functions, and the dynamics among different skills are decoupled. A similar one-shot learner has been studied in Hutter \cite{hutter2021learning} where the error depends on a single `observation' of a feature.

Because the $k^{th}$ skill loss \textbf{only depends} on $d_k$ (number of observations for the $k^{th}$ skill), we can calculate the expectation of the skill loss for $D$ data points from $P_{observed}(k|D)$ or the probability that $d_k>0$:
\begin{equation}\label{eq:data_skill_loss}
P_{observed}(k|D) = 1-\left(1-\mathcal{P}_s(k)\right)^D.
\end{equation}
Using the one-shot learning property (\cref{eq:one_shot}), the probability of observing the $k^{th}$ skill (\cref{eq:data_skill_loss}), and the decomposition of the loss into skill losses (\cref{eq:loss_decomposed}), the expected loss for $D$ datapoints is
\begin{align}\label{eq:ExpectedLdata}
\mathbf{E}_D\left[\mathcal{L}\right] &= \frac{1}{2}\sum_{k=1}^{\infty} \SSS^2 \mathcal{P}_s(k) (1-P_{observed}(k)) \\
&= \frac{1}{2}\SSS^2 \C \sum_{k=1}^{\infty} k^{-(\alpha+1)} \left(1-\mathcal{P}_s(k)\right)^D  \\ 
&\approx \frac{1}{2}\SSS^2 \C \int_1^\infty k^{-(\alpha+1)} \left(1-\C k^{-(\alpha+1)}\right)^D dk,
\end{align}
where the expectation $\mathbf{E}_D$ is over all possible training sets of size $D$, and $\C$ is the normalization constant such that $\mathcal{P}(k) = \C k^{-(\alpha+1)}$. The difference in the loss $\Delta \mathcal{L} = \mathbf{E}_{D+1}\left[\mathcal{L}\right] - \mathbf{E}_{D}\left[\mathcal{L}\right]$ is
\begin{align}\label{eq:DeltaLdata}
\Delta \mathcal{L} &= \frac{1}{2}\SSS^2 \C \int_1^\infty k^{-(\alpha+1)}\left(1-\C k^{-(\alpha+1)}\right)^D\left(\left(1-\C k^{-(\alpha+1)}\right)-1\right) dk \\
&= -\frac{1}{2}\SSS^2 \C^2 \int_1^\infty k^{-2(\alpha+1)}\left(1-\C k^{-(\alpha+1)}\right)^D dk.
\end{align}
We can integrate $\Delta \mathcal{L}$ by parts.
\begin{align*}
\Delta \mathcal{L} &=  \frac{1}{2}\left[-\frac{\SSS^2 \C k^{-\alpha}}{(\alpha+1)(D+1)}\left(1-\C k^{-(\alpha+1)}\right)^{D+1}\right]_1^\infty \\
&\quad\quad\quad\quad - \frac{\SSS^2 \C\alpha}{2(\alpha+1)(D+1)}\int_1^\infty k^{-(\alpha+1)} \left(1-\C k^{-(\alpha+1)}\right)^{D+1} dk \\
&\approx \mathcal{O}\left((1 -\mathcal{P}_s(1))^{D+1}\right) - \frac{\SSS^2 \C\alpha}{2(\alpha+1)(D+1)}\int_1^\infty k^{-(\alpha+1)} \left(1-\C k^{-(\alpha+1)}\right)^{D}\left(1-\C k^{-(\alpha+1)}\right) dk \\
&\approx -\frac{\alpha}{(\alpha+1)(D+1)} \mathbf{E}_{D}\left[\mathcal{L}\right] + \frac{\alpha}{(\alpha+1)(D+1)} \Delta \mathcal{L}.
\end{align*}
In the second line, the first term goes to $0$ for $D \gg 1$. In the last line, we used the expression for $\Delta \mathcal{L}$ (\cref{eq:DeltaLdata}) and $\mathbf{E}_{D}\left[\mathcal{L}\right]$ (\cref{eq:ExpectedLdata}). Rearranging the equation above and using that $D \gg 1$, we obtain the scaling law with exponent $-\alpha/(\alpha+1)$:
\begin{align}
\frac{\Delta \mathcal{L}}{\mathbf{E}_{D}\left[\mathcal{L}\right]} &= - \frac{\alpha}{1+ (\alpha+1)D} \approx - \frac{\alpha}{(\alpha+1)}\frac{1}{D} \\
&= -\frac{\alpha}{(\alpha+1)}\frac{\Delta D}{D}.
\end{align}
where in the last line, $\Delta D/D = 1/D$ as the change in the number of data points relative to $D$ is one.

\subsection{Parameter scaling law exponent}\label{subsec:param_scale}
The parameter scaling law assumes $T \rightarrow \infty$ and $D \rightarrow \infty$, with the parameters $N < \nt$ as the bottleneck. Because our model is a one-shot learner (\cref{eq:one_shot}), learning of the $k^{th}$ skill \textbf{only depends} on the existence of $g_k$ in the model; the model with $[g_1, \cdots, g_{N}]$ will learn all $k \leq N$  skills with $\mathcal{L}_k=0$.

The $\mathcal{L}_k$ dependence on $g_k$ is formalized in \cref{corr2}, which we repeat here.
\paragraph{Equivalence between a basis function and a skill.}\textit{Given $T, D \rightarrow \infty$ and if the multilinear model has the $N$ most frequent skill functions as a basis,}
\begin{equation}\label{eq:skill_algined_model}
\mathcal{L}_k(\infty)=\left\{\begin{matrix}
0 &    :k \leq N ~~  \\
\SSS^2/2  & :k > N. 
\end{matrix}\right.
\end{equation}
\begin{proof}
See \cref{derivation:param_learning}.
\end{proof}
Using \cref{eq:skill_algined_model} and \cref{eq:loss_decomposed}, we can express the total loss as function of $N$: 
\begin{equation}
\mathcal{L} \approx \frac{\SSS^2}{2}\int_{N+1}^\infty A k^{-(\alpha+1)} dk \propto (N+1)^{-\alpha}.
\end{equation}
By approximating $N \approx N+1$ for $N \gg 1$, we obtain the power-law with exponent $-\alpha$.

\subsection{Optimal compute scaling law}\label{derivation:compute_scaling}
For analytical tractability, we define compute as $C:=T \times N$. We start from \cref{eq:full_loss_main} with $D \rightarrow \infty$
\begin{align}
\mathcal{L} \approx \int_1^N \C k^{-(\alpha+1)} \mathcal{L}_k dk + \lim_{\ns \rightarrow \infty} \frac{\SSS^2}{2}\int_N^{\ns} \C k^{-(\alpha+1)} dk.
\end{align}
We can use \cref{eq:corrected_power_law} to calculate the first term and integrate the last term to get 
\begin{align}
\mathcal{L} &\approx (\mathcal{L}(0) + \mathcal{L}_C)T^{-\alpha/(\alpha+1)} - \mathcal{L}_c + \frac{\SSS^2 \C}{2\alpha} N^{-\alpha} \\
&\approx \mathcal{O}(T^{-\alpha/(\alpha+1)}) + \mathcal{O}(N^{-\alpha}),
\end{align}
where we used that $\mathcal{L}(0) \gg \mathcal{L}_C$ and $\SSS^2\C/(2\alpha) - \mathcal{L}_C > 0$. Intuitively, the approximation shows the tradeoff between $T$ -- when increased, decreases the loss of the first $N$ skills -- and $N$ -- when increased, decreases the loss at sufficiently large $T$ -- for fixed compute $C$. For a comprehensive analysis of the approximation above, see \cref{app:rigorous}.

Removing the irrelevant constant terms,
\begin{equation}\label{eq:approx_compute_app}
\mathcal{L} = T^{-\alpha/(\alpha +1)} + N^{-\alpha}.
\end{equation}
We can use the method of Lagrangian multiplier to obtain
\begin{align}
-\frac{\alpha}{\alpha +1}T^{-\alpha/(\alpha +1) -1} +\lambda N &=0, \\
-\alpha  N^{-(\alpha +1)} +\lambda T &=0, \\
NT - C&= 0, 
\end{align}
where $\lambda$ is the Lagrange multiplier and $C$ is compute. We can solve the above set of equations to obtain $T^{\alpha+1} \propto N$ or equivalently
\begin{equation}
T \propto C^{(\alpha+1)/(\alpha+2)}, \quad N \propto C^{1/(\alpha+2)}.
\end{equation}
We can plug it in \cref{eq:approx_compute_app} to get 
\begin{equation}
\mathcal{L} \propto C^{-\alpha/(\alpha +2)}.
\end{equation}
This derivation is similar to that of  Bordelon et al. \cite{bordelon2024dynamical} (see Appendix N: Compute Optimal Scaling from Sum of Power-Laws in \cite{bordelon2024dynamical}). For a rigorous derivation of the optimal compute scaling law, see \cref{cor:compute} and \cref{app:rigorous}. 

\clearpage

\section{Derivation of the extended multilinear model}\label{app:derivation_extended}

In this section, we show the derivation for the extended multilinear model.

\subsection{Gradient flow in the extended multilinear model}

\begin{lemma}\label{lemma2}
Let the extended multilinear model \cref{eq:toy2} be trained with gradient flow on $D$ i.i.d samples for the setup in \cref{sec:setup} (input distribution: \cref{eq:probs}, target function: \cref{eq:target}, and MSE loss: \cref{eq:loss}). For the skill index $k \leq N$ be a skill index in the multilinear model, let the feature matrix $\Phi \in \mathbb{R}^{D_c \times d_k}$ for the $k^{th}$ skill be
\begin{align}\label{eq:feature_matrix}
\Phi_{lj} = e_{k,l}(i^{(j)}=k,x^{(j)}),
\end{align}
and SVD on $\Phi = USV$. Assuming that the system is overparametrized ($d_k < D_c$), the gradient on $\Vec{B}_{k} \in \mathbb{R}^{D_c}$ ($[B_{k,1}, \cdots, B_{k,D_c}]$) is contained in the column space of semi-orthogonal matrix $U \in \mathbb{R}^{D_c \times d_k}$: 
\begin{align}
UU^T \frac{d\Vec{B}_{k}}{dt} = \frac{d\Vec{B}_{k}}{dt}.
\end{align}
\end{lemma}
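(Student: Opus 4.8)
The plan is to reduce the gradient-flow equation for the block $\vec B_k=[B_{k,1},\dots,B_{k,D_c}]$ to a vector that manifestly lies in the column space of the feature matrix $\Phi$, and then to use the orthonormality of the left singular vectors to show $UU^T$ acts as the identity on that space. First I would exploit the decoupling built into the $e_{k,l}$ basis: by the second property in \cref{eq:toy2_eigfunc_cond}, $e_{k',l}(i,x)=0$ whenever $i\neq k'$, and the target likewise vanishes off its own control index by \cref{eq:gk,eq:target}. Hence, among the $D$ training points, only the $d_k$ points with control index $i^{(j)}=k$ contribute to the part of the empirical MSE loss (\cref{eq:loss}) that depends on $\vec B_k$; on these points the model output collapses to $a_k\sum_l B_{k,l}e_{k,l}(k,x^{(j)})$ and the target to $\SSS g_k(k,x^{(j)})$.

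Next I would collect these $d_k$ points into a residual vector $\vec r\in\mathbb{R}^{d_k}$ with entries $r_j=\SSS g_k(k,x^{(j)})-a_k(\Phi^T\vec B_k)_j$, where $\Phi_{lj}=e_{k,l}(k,x^{(j)})$ is the feature matrix of \cref{eq:feature_matrix}, so that the model outputs on the relevant points are exactly $a_k\Phi^T\vec B_k$. Differentiating the empirical loss in the $\vec B_k$ block gives $\partial\mathcal{L}^{(D)}/\partial B_{k,l}=-(a_k/D)(\Phi\vec r)_l$, and applying gradient flow yields
\begin{equation}
\frac{d\vec B_k}{dt}=-\eta\,\frac{\partial \mathcal{L}^{(D)}}{\partial \vec B_k}=\frac{\eta\,a_k}{D}\,\Phi\,\vec r,
\end{equation}
which lies in $\mathrm{col}(\Phi)$ since it is $\Phi$ applied to a vector.

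Finally I would invoke the (thin) SVD $\Phi=USV$ with $U\in\mathbb{R}^{D_c\times d_k}$ semi-orthogonal, so that $U^TU=I_{d_k}$ and therefore $UU^T\Phi=U(U^TU)SV=USV=\Phi$. Because $UU^T$ is the orthogonal projector onto $\mathrm{col}(U)\supseteq\mathrm{col}(\Phi)$, it fixes every vector in $\mathrm{col}(\Phi)$; applying it to $\Phi\vec r$, which is proportional to $\tfrac{d\vec B_k}{dt}$, gives $UU^T\tfrac{d\vec B_k}{dt}=\tfrac{d\vec B_k}{dt}$, as claimed.

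I expect the only delicate points to be bookkeeping rather than analysis: justifying the decoupling carefully so that the $\vec B_k$-gradient genuinely involves only the $d_k$ skill-$k$ samples through $\Phi$, and noting that the identity $UU^T\Phi=\Phi$ follows from the thin SVD regardless of whether $\Phi$ attains full column rank, so no separate treatment of vanishing singular values is needed. The overparametrization hypothesis $d_k<D_c$ is what makes $U$ a genuinely tall semi-orthogonal factor, guaranteeing that $\mathrm{col}(U)$ is a proper subspace to which the dynamics of $\vec B_k$ are confined.
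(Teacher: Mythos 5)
Your proposal is correct and follows essentially the same route as the paper's proof: decouple the empirical loss onto the $d_k$ skill-$k$ samples, derive the gradient-flow equation to exhibit $\tfrac{d\Vec{B}_k}{dt}$ as $\Phi$ applied to a vector, and conclude via the thin-SVD identity that $UU^T$ acts as the identity on $\mathrm{im}(\Phi)\subseteq\mathrm{im}(U)$. The only (cosmetic) difference is that you keep the residual in sample space as $\vec r\in\mathbb{R}^{d_k}$, giving $\tfrac{\eta a_k}{D}\Phi\vec r$, whereas the paper expands the target in the $e_{k,l}$ basis and writes the same quantity as $-\tfrac{\eta a_k}{D}\Phi\Phi^T\bigl(\Vec{B}_k a_k-\tfrac{\Vec{\SSS}}{\sqrt{D_c}}\bigr)$; both forms manifestly lie in the column space of $\Phi$.
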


\begin{proof}
Similar to \cref{lemma1}, the total loss can be decomposed into each skill such that the dynamics of $B_{k,l}$ relies only on $d_k$ observations of the $k^{th}$ skill: 
\begin{align}
\mathcal{L}_D &= \frac{1}{2D}\sum_{k=1}^{\nt} \sum_{j=1}^D \left(f^*(i^{(j)},x^{(j)})-f(i^{(j)},x^{(j)})\right)^2 \\
&= \frac{1}{2D}\sum_{k=1}^{\nt} \sum_{j_k=1}^{d_k} \left(\SSS g_k(k,x^{(j_k)}) - \sum_{l=1}^{D_c} a_kB_{k,l}e_{k,l}(k,x^{(j_k)})\right)^2 \\ \label{eq:loss_decompose_emp2}
&= \frac{1}{2D}\sum_{k=1}^{\nt} \sum_{j_k=1}^{d_k} \left(\sum_{l=1}^{D_c} (\frac{\SSS}{\sqrt{D_c}}-a_kB_{k,l})e_{k,l}(k,x^{(j_k)})\right)^2.
\end{align}
In the second line, we used \cref{eq:toy2_eigfunc_cond} that $e_{k,l}(I \neq k,x) = 0$ and the orthogonality of $g_k$ (\cref{eq:skill_orthnormal}). In the last line, we used \cref{eq:toy2_eigfunc_cond} that $g_k = {D_c}^{-1/2}\sum_l e_{k,l}$. 
We can find the gradient descent equation of $B_{k,l}$ from \cref{eq:loss_decompose_emp2}:
\begin{align}
\frac{dB_{k,l}}{dt}  
                = -\eta \sum_{j=1}^{d_k} \frac{1}{D} \left[a_k e_{k,l}(k,x^{(j)}) \sum_{l'=1}^{D_c} (a_kB_{k,l'}-\frac{\SSS}{\sqrt{D_c}})e_{k,l'}(k,x^{(j)}) \right],
\end{align}
which in the matrix form is
\begin{align}\label{eq:b_kl_dynamic}
\frac{d\Vec{B}_k}{dt} = -\frac{\eta a_k}{D} \Phi\Phi^T\left(B_ka_k-\frac{\Vec{\SSS}}{\sqrt{D_c}}\right),
\end{align}
where $D_c$ dimensional vectors $\Vec{B_k}$ and $\Vec{\SSS}$ are $[B_{k,1}, \cdots, B_{k,D_c}]$ and $[\SSS, \cdots, \SSS]$ respectively.
It illustrates that $\frac{dB_k}{dt}$ is contained in $\mathrm{im}(\Phi)$, which is contained in $\mathrm{im}(U)$ (immediate from $\Phi = USV$). As $UU^T (Uz) = U(U^TU)z = Uz$, $U U^T$ acts as identity on image of $U$, showing that $U U^T \frac{d\Vec{B}_k}{dt} =  \frac{d\Vec{B}_k}{dt}$.

\end{proof}

\subsection{Conserved quantity of extended multilinear model}
\begin{lemma}\label{lemma3}
In the setup of \cref{lemma2}, $a_k^2 - |\Vec{B}_k|^2$ is conserved over time.
\end{lemma}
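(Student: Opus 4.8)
The plan is to treat this as a \emph{balancedness} conservation law: whenever a scalar and a vector enter the model only through their product, as $a_k$ and $\vec{B}_k$ do in the decoupled $k^{th}$ block of \cref{eq:toy2}, the gradient flow preserves the difference of their squared norms. Concretely, I would differentiate $a_k^2 - |\vec{B}_k|^2$ along the flow and show the derivative is identically zero, with no appeal to initialization, to overparametrization, or to the spectrum of $\Phi$.

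First I would collect the two flow equations. From \cref{lemma2} the block for skill $k$ decouples with loss \cref{eq:loss_decompose_emp2}, and the flow for the vector $\vec{B}_k$ is already \cref{eq:b_kl_dynamic}:
\begin{equation*}
\frac{d\vec{B}_k}{dt} = -\frac{\eta a_k}{D}\,\Phi\Phi^T\!\left(a_k \vec{B}_k - \frac{\vec{\SSS}}{\sqrt{D_c}}\right).
\end{equation*}
The one ingredient not yet written down is the flow for the scalar $a_k$. Because each sample's residual in \cref{eq:loss_decompose_emp2} depends on $a_k$ only through the products $a_k B_{k,l}$, differentiating with respect to $a_k$ gives
\begin{equation*}
\frac{da_k}{dt} = -\frac{\eta}{D}\,\vec{B}_k^T \Phi\Phi^T\!\left(a_k \vec{B}_k - \frac{\vec{\SSS}}{\sqrt{D_c}}\right).
\end{equation*}

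The key observation is that both flows are driven by the single vector $w := \Phi\Phi^T(a_k \vec{B}_k - \vec{\SSS}/\sqrt{D_c}) \in \mathbb{R}^{D_c}$, so that $\dot{\vec{B}}_k = -(\eta a_k/D)\,w$ and $\dot{a}_k = -(\eta/D)\,\vec{B}_k^T w$. I would then compute
\begin{equation*}
\frac{d}{dt}\!\left(a_k^2 - |\vec{B}_k|^2\right) = 2 a_k \dot{a}_k - 2\,\vec{B}_k^T \dot{\vec{B}}_k = -\frac{2\eta a_k}{D}\,\vec{B}_k^T w + \frac{2\eta a_k}{D}\,\vec{B}_k^T w = 0,
\end{equation*}
so the quantity is constant in time.

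The computation has no analytic obstacle; the only thing to get right is the bookkeeping. Deriving the $a_k$ equation correctly requires noting that $a_k$ multiplies the \emph{entire} feature sum rather than a single $B_{k,l}$, and the cancellation hinges on keeping the transposes straight so that the same scalar $\vec{B}_k^T w$ appears in both terms. It is worth emphasizing that the conservation is structural: it uses only that $\Phi\Phi^T$ is the common (symmetric) driving matrix and that $a_k$ and $\vec{B}_k$ couple multiplicatively, so it holds verbatim for any sample count $d_k$ and any initialization. This conserved quantity is precisely what later pins down the minimum-norm (ridgeless) solution selected as $T \to \infty$, matching the $D_c$-shot result in \cref{eq:d_c_shot}.
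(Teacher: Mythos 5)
Your proof is correct and follows essentially the same route as the paper's: both derive the gradient flow equation for $a_k$ from the decoupled loss, pair it with \cref{eq:b_kl_dynamic}, and observe the cancellation $a_k \dot{a}_k = \Vec{B}_k^T \dot{\Vec{B}}_k$, which is exactly your statement that $\tfrac{d}{dt}\left(a_k^2 - |\Vec{B}_k|^2\right) = 0$. The only cosmetic difference is your introduction of the shorthand $w = \Phi\Phi^T\left(a_k\Vec{B}_k - \Vec{\SSS}/\sqrt{D_c}\right)$, which makes the cancellation slightly more transparent but changes nothing in substance.
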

\begin{proof}
We can use \cref{eq:loss_decompose_emp2} to find the equation for $a_k$:
\begin{align}
\frac{d a_k}{dt} &= - \eta \sum_{j=1}^{d_k} \frac{1}{D} \left[\sum_{l=1} B_{k,l} e_{k,l}(k,x^{(j)}) \sum_{l'=1}^{D_c} (a_kB_{k,l'}-\frac{\SSS}{\sqrt{D_c}})e_{k,l'}(k,x^{(j)}) \right],
\end{align}
which in the matrix form is 
\begin{align}
\frac{d a_k}{dt} = -\frac{\eta}{D} \Vec{B}_k^T\Phi\Phi^T\left(\Vec{B}_ka_k-\frac{\Vec{\SSS}}{\sqrt{D_c}}\right).
\end{align}
Then 
\begin{align}
a_k\frac{d a_k}{dt} &= -\frac{\eta a_k}{D} \Vec{B}_k^T\Phi\Phi^T\left(\Vec{B}_ka_k-\frac{\Vec{\SSS}}{\sqrt{D_c}}\right) \\
&= \Vec{B}_k^T \frac{d \Vec{B}_k}{dt},
\end{align}
where we used \cref{eq:b_kl_dynamic} in the last line. Thus, $a_k^2 - |\Vec{B}_k|^2$ is conserved during the dynamics.
\end{proof}

\subsection{$\boldsymbol{D_c}$ shot learner}\label{derivation:toy_data_extend}
\begin{proposition}\label{prop1} Let the setup be as that in \cref{lemma2}. Suppose that $a_k(T)$ is eventually bounded away from zero, i.e. there exists $\delta>0$ and $M>0$ such that $T >M \Rightarrow |a_k(T)| \ge \delta$. Also assume that $U^{\perp}$-component of $\Vec{B}_k(0) a_k(0)$ and $\Vec{B}_k(0) S$ is negligible. Then the skill strength $\mathcal{R}_k$ is
\begin{equation}
\mathcal{R}_k(\infty)=\left\{\begin{matrix}
d_k < D_c : & \SSS\left(1-\sqrt{1-d_k/D_c}\right) \\
d_k \ge D_c:& \SSS
\end{matrix}\right.
\end{equation}
\end{proposition}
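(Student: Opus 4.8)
The plan is to reduce the $T\to\infty$ limit of the extended-model gradient flow to a ridgeless (minimum-norm) linear regression of the target $\SSS g_k$ onto the $D_c$ orthonormal features $e_{k,1},\dots,e_{k,D_c}$ using only the $d_k$ samples of skill $k$, and then read the skill strength off the \emph{generalization error} of that regression. First I would collect the consequences of \cref{lemma2,lemma3}: the velocity $d\vec B_k/dt$ lies in $\mathrm{im}(U)$, so under the hypothesis that the $U^{\perp}$-component of $\vec B_k(0)$ is negligible the whole trajectory of $\vec B_k$ (hence of the effective predictor $\vec w:=a_k\vec B_k$) stays in $\mathrm{im}(U)$, the column space of $\Phi$; and $a_k^2-|\vec B_k|^2$ is conserved, so a small balanced initialization keeps the two factors comparable throughout.

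Next I would use the hypothesis that $a_k(T)$ is eventually bounded away from $0$ to force the training loss to vanish as $T\to\infty$. From \cref{eq:b_kl_dynamic}, a stationary point with $a_k\neq0$ requires $\Phi\Phi^T(a_k\vec B_k-\vec{\SSS}/\sqrt{D_c})=0$, and since $\ker(\Phi\Phi^T)=\ker(\Phi^T)$ this gives $\Phi^T(\vec w-\vec t)=0$ with $\vec t:=\vec{\SSS}/\sqrt{D_c}$ the target coefficient vector; thus $\vec w_\infty$ interpolates the $d_k$ training points. Combining $\vec w_\infty\in\mathrm{im}(U)$ with the injectivity of $\Phi^T$ on $\mathrm{im}(\Phi)$ pins $\vec w_\infty$ down uniquely as the orthogonal projection $\vec w_\infty=P_U\vec t$, i.e. the ridgeless minimum-norm interpolant. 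When $d_k\ge D_c$ the $d_k$ feature vectors generically span $\mathbb{R}^{D_c}$, so $P_U=I$, $\vec w_\infty=\vec t$, the model reproduces $\SSS g_k$ exactly, and $\mathcal{R}_k(\infty)=\SSS$; this yields the second branch immediately.

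For the overparametrized branch $d_k<D_c$ I would compute the residual $\mathbf{E}_{X|I=k}[(\SSS g_k-f_\infty)^2]=|\vec t-P_U\vec t|^2=|P_{U^{\perp}}\vec t|^2$, hence the skill loss $\mathcal{L}_k(\infty)=\tfrac12|P_{U^{\perp}}\vec t|^2$. Because the $e_{k,l}$ are orthonormal under $\mathbf{E}_{X|I=k}$, the regression has a \emph{flat} spectrum ($D_c$ equal eigenvalues) and the target puts equal weight $\SSS/\sqrt{D_c}$ on every mode, so the standard ridgeless-regression learning-curve results for an isotropic spectrum \cite{canatar2021spectral,jacot2020kernel,simon2021theory} give, in the deterministic-equivalent/concentration sense, $|P_{U^{\perp}}\vec t|^2=\SSS^2(1-d_k/D_c)$ and thus $\mathcal{L}_k(\infty)=\tfrac{\SSS^2}{2}(1-d_k/D_c)$. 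Finally, inverting the skill-strength/loss relation $\mathcal{L}_k=\tfrac12(\SSS-\mathcal{R}_k)^2$ of \cref{eq:skill_loss_from_rk} along the branch $\mathcal{R}_k\le\SSS$ gives $\mathcal{R}_k(\infty)=\SSS\big(1-\sqrt{1-d_k/D_c}\big)$, the claimed expression.

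The main obstacle is the flat-spectrum generalization computation: since $P_U$ depends on the random draw of the $d_k$ feature columns, establishing $|P_{U^{\perp}}\vec t|^2=\SSS^2(1-d_k/D_c)$ requires either invoking the deterministic-equivalent / omniscient-risk estimates from the kernel-regression references together with a concentration argument, or exploiting the special structure that every column of $\Phi$ has a fixed unit component along the $g_k$-direction. A secondary technical point is checking that the ``negligible $U^{\perp}$-component'' and ``$a_k$ bounded away from $0$'' hypotheses are actually compatible with the small balanced initialization, i.e. that the conserved quantity $a_k^2-|\vec B_k|^2$ does not drive $a_k\to0$ before interpolation is reached.
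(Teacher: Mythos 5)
Your proposal is correct and follows essentially the same route as the paper's own proof of \cref{prop1}: confine $\vec{B}_k$ to $\mathrm{im}(U)$ via \cref{lemma2,lemma3}, identify the $T\to\infty$ limit of $a_k\vec{B}_k$ as the ridgeless minimum-norm interpolant, obtain $\mathcal{L}_k(\infty)=\tfrac{\SSS^2}{2}\left(1-d_k/D_c\right)$ from the flat-spectrum kernel-regression risk (citing the same references, e.g.\ \cite{simon2021theory}), and invert \cref{eq:skill_loss_from_rk} to obtain the stated square-root formula, exactly as the paper does. The only place the paper does substantially more work is in proving that the flow actually converges to such a stationary point --- boundedness of $(a_k,\vec{B}_k)$, a Taylor-expansion argument showing $\frac{d\mathcal{L}_k}{dt}\to 0$, and a uniqueness-of-accumulation-point argument combining the conserved quantities with the sign of $a_k$ --- a step your plan asserts from the ``$a_k$ bounded away from zero'' hypothesis rather than establishes.
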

\begin{proof}
First, we show that $\frac{d\mathcal{L}_k}{dt} \le 0$ with equality only holding when the gradient is 0.
\begin{align}
\frac{d \mathcal{L}_k}{dt} &= \frac{d \mathcal{L}_k}{da_k}\frac{da_k}{dt} + \sum_i^{D_c}\frac{d \mathcal{L}_k}{dB_{k,i}}\frac{dB_{k,i}}{dt} \\
&= -\eta \frac{d_k}{D}\left(\frac{d \mathcal{L}_k}{da_k}\frac{d \mathcal{L}_k}{da_k} + \sum_i^{D_c}\frac{d \mathcal{L}_k}{dB_{k,i}}\frac{d \mathcal{L}_k}{dB_{k,i}}\right) \le 0.
\end{align}
The equality holds only when 
\begin{equation}
\frac{d \mathcal{L}_k}{da_k} = \frac{da_k}{dt}=0 \quad \text{and} \quad \frac{d \mathcal{L}_k}{dB_{k,i}} =\frac{dB_{k,i}}{dt}= 0\,.
\end{equation}

We show that both $a_k$ and $\Vec{B}_k$ are bounded throughout whole dynamics. As
\begin{equation}
\mathcal{L}_k = \left| \Phi \left( \Vec{B}_ka_k - \frac{\Vec{\SSS}}{\sqrt{D_c}} \right) \right|^2 \ge \sigma^2 \left| UU^T \left( \Vec{B}_k a_k - \frac{\Vec{\SSS}}{\sqrt{D_c}} \right) \right|^2
\end{equation}
for $\sigma^2$ the smallest nonzero eigenvalue of $\Phi \Phi^T$, where $\Phi = USV$. This shows that
\begin{equation}
UU^T \left( \Vec{B}_k a_k - \frac{\Vec{\SSS}}{\sqrt{D_c}} \right)
\end{equation}
is bounded, so $UU^T \Vec{B}_k a_k$ is bounded. Meanwhile, in \cref{lemma2}, we showed that $(1-UU^T) \frac{d \Vec{B}_k}{dt}=0$, so $(1-UU^T) \Vec{B}_k a_k$ is bounded. This shows that $\Vec{B}_k a_k$ is bounded. As $a_k^2 - |\Vec{B}_k|^2$ is constant (\cref{lemma3}) and $|\Vec{B}_k a_k| = |a_k| |\Vec{B}_k|$ is bounded, this shows that both $a_k$ and $|\Vec{B}_k|$ are bounded.

The dynamics moving in some bounded region always has at least one accumulation point, which we denote as $p$. We will show that $\frac{d \mathcal{L}_k}{dt}=0$ at $p$. The function $\mathcal{L}_k(t)$ in $t$ is a decreasing differential function which is positive. We also note that $\frac{d^2 \mathcal{L}_k(t) }{ dt^2}$ is globally bounded, as it can be expressed in polynomial expression in $(a_k, \Vec{B}_k)$ and we showed that $(a_k(t), \Vec{B}_k(t))$ is bounded. From Taylor's theorem, one can obtain
\begin{equation}
\inf \mathcal{L}_k(t) \le \mathcal{L}_k(t_1+t_2) \le \mathcal{L}_k(t_1) + t_2 \frac{d\mathcal{L}_k}{dt}(t_1) + \frac{t_2^2}{2}M
\end{equation}
for $M=\sup | \frac{d^2 \mathcal{L}_k(t) }{ dt^2}|$. Choosing $t_2 = -\frac{d\mathcal{L}_k}{dt}(t_1) M^{-1}$ shows that
\begin{equation}
\mathcal{L}_k(t_1) - \frac{1}{2M} \left( \frac{d\mathcal{L}_k}{dt}(t_1) \right)^2 \ge \inf \mathcal{L}_k(t)
\end{equation}
and letting $t_1 \rightarrow \infty$ here gives
\begin{equation}
\lim_{t_1 \rightarrow \infty} \frac{1}{2M} \left( \frac{d\mathcal{L}_k}{dt}(t_1) \right)^2 \le \lim_{t_1 \rightarrow \infty} (\mathcal{L}_k(t_1) - \inf  \mathcal{L}_k(t)) =0
\end{equation}
so $\frac{d \mathcal{L}_k}{dt} \rightarrow 0$ as $t \rightarrow \infty$. Meanwhile, as $p$ is accumulation point of $(a_k, B_k)$, $\frac{d\mathcal{L}_k}{dt}(p)$ is accumulation point of $\frac{d\mathcal{L}_k}{dt}(a_k(t), \vec{B}_k(t))$. As $\lim_{t \rightarrow \infty} \frac{d \mathcal{L}_k}{dt}(t)=0$, the only accumulation point of $\frac{d \mathcal{L}_k}{dt}(t)$ is zero, which shows that $\frac{d\mathcal{L}_k}{dt}(p)=0$.

We have seen that $a_k^2 - |\Vec{B}_k|^2$ and $(I - UU^T)\Vec{B}_k$ are conserved in our dynamics. A quantity conserved in dynamics should also be conserved at $p$, so $p=(a,\Vec{B})$ should satisfy the following conditions:
\begin{itemize}
    \item $a^2 -|\Vec{B}|^2 = a_k(0)^2 - |\Vec{B}_k(0)|^2$ (\cref{lemma3});
    \item $(I- UU^T) \Vec{B} = (I - U U^T) \Vec{B}_k(0)$ (\cref{lemma2});
    \item $\frac{d\mathcal{L}_k}{dt}(a, \Vec{B})=0$, or equivalently the gradient is $0$ at $p$.
\end{itemize}
We will solve for $p$ satisfying those three conditions. The third condition is equivalent to that
\begin{equation}
a UU^T \left( \Vec{B}a  - \frac{\Vec{\SSS}} {\sqrt{D_c} } \right) =0.
\end{equation}
As $a_k(T)$ is eventually bounded away from zero, we have $a \neq 0$, so
\begin{equation}
UU^T \left( \Vec{B} a - \frac{\Vec{\SSS}}{\sqrt{D_c}} \right) = 0.
\end{equation}
It follows that
\begin{equation} \label{eq:extended_solution_b}
 \Vec{B} = UU^T \Vec{B} + (I - U U^T)\Vec{B} = UU^T \frac{\Vec{\SSS}}{\sqrt{D_c}} a^{-1} + (I - U U^T) \Vec{B}_k(0)
\end{equation}
and substituting to first condition gives
\begin{equation}
a^2 - \frac{1}{a^2} \left| UU^T \frac{\Vec{\SSS}}{\sqrt{D_c}} \right|^2  - \left| (I - U U^T) \Vec{B}_k(0) \right|^2 = a_k(0)^2 - |\Vec{B}_k(0)|^2.
\end{equation}
This is equivalent to a quadratic equation in $a^2$, and has a following solution of 
\begin{equation} \label{eq:extended_solution_a}
a^2 = \sqrt{ \left| UU^T \frac{\Vec{\SSS}}{\sqrt{D_c}} \right|^2 +\frac{(a_k(0)^2 - |UU^T \Vec{B}_k(0)|^2)^2}{4}} +\frac{a_k(0)^2 - |UU^T \Vec{B}_k(0)|^2}{2}.
\end{equation}
This shows that there are two candidates for $p$, with $a$ given as two square roots of \cref{eq:extended_solution_a} and $B$ determined from $a$ by \cref{eq:extended_solution_b}. It is impossible for $\mathcal{L}_k(t)$ to have accumulation points both in regions $a>0$ and $a<0$, as it would imply $a_k(t)=0$ happens infinitely many often, contradicting that $a_k$ is eventually bounded away from zero. Thus it follows that $\mathcal{L}_k(t)$ can only have one accumulation point. As dynamics having unique accumulation point should converge, it follows that
\begin{equation}
(a, \Vec{B}) = (a_k(\infty), \Vec{B}_k(\infty)).
\end{equation}
One can check that the $U^{\perp}$-component of $\Vec{B}_k(\infty) a_k(\infty)$ is given as
\begin{equation}
(I - UU^T) \Vec{B}_k(\infty) a_k(\infty) = (I - UU^T) \Vec{B}_k(0) a_k(0)
\end{equation}
and this is bounded by $|(1-UU^T) B_k(0)| (S+a_k(0))$, so by our assumption this is negligible. Thus, we find that $\Vec{B}_k(\infty)a_k(\infty)$ is the pseudo-inverse solution, which is also found by the linear model with $e_{k,l}$ as basis functions. We can calculate $\mathcal{L}_k(\infty)$ using the result from kernel (linear) regression \cite{canatar2021spectral,jacot2020kernel, cui2021generalization, el2024double, simon2021theory} (for a summary, see tables 1 and 2 in appendix A of \cite{simon2021theory}). 
Using the terminology in table 1 of \cite{simon2021theory}, the sample size is $d_k$; the number of parameters is $D_c$; ridge and noise are absent; the eigenfunctions are $[e_{k,1}, \cdots, e_{k,D_c}]$; the eigen coefficients are $\mathbf{E}_X[e_{k,i}(x)\SSS g_k(x)] = \SSS D_c^{-1/2}$ (\cref{eq:toy2_eigfunc_cond}); eigenvalues are uniform; the learnability is $d_k/D_c$ for all $i$; and the overfitting coefficient is $(1-d_k/D_c)^{-1}$. Taking into account that we have halved the MSE loss (\cref{eq:loss}), the test loss is
\begin{equation}
\mathcal{L}_k(\infty) = \frac{\SSS^2}{2}\left(1-\frac{d_k}{D_c}\right). 
\end{equation}
We obtain the result by using \cref{eq:skill_loss_from_rk}.
\end{proof}

\subsection{$\boldsymbol{N_c}$ basis functions for a skill}\label{derivation:toy_param_extend}
\begin{proposition}
Let the extended multilinear model \cref{eq:toy3} be trained with gradient flow on $D \rightarrow \infty$ i.i.d samples for the setup in \cref{sec:toy_model} with $\ns \rightarrow \infty$ (input distribution: \cref{eq:probs}, target function: \cref{eq:target}, and MSE loss: \cref{eq:loss}, initialization: that of \cref{prop1}). For a model with the following finite $N$ basis functions
\begin{equation}
[e_{1,1}, ~ \cdots, ~ e_{1,N_c}, ~ e_{2,1}, ~ \cdots, ~ e_{q,r}],
\end{equation}
where quotient $q = \lfloor(N-1)/N_c\rfloor + 1$ and remainder $r$ is such that $(q-1)N_c + r = N$. The skill strength at $T \rightarrow \infty$ becomes
\begin{equation}
\mathcal{R}_k(\infty)=\left\{\begin{matrix}
k > q : & 0 ~~ \\
k = q : & \SSS\frac{r}{N_c} \\
k < q:& \SSS .
\end{matrix}\right.
\end{equation}
\end{proposition}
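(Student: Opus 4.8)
The plan is to reduce the analysis to a single skill at a time, exactly as the loss decomposition in \cref{lemma2} does, and then to reuse the convergence machinery of \cref{prop1} in the complementary (\emph{underparametrized}) regime. For a fixed skill index $k$, write $m_k$ for the number of basis functions the model carries for that skill, so that $m_k = N_c$ when $k<q$, $m_k = r$ when $k=q$, and $m_k = 0$ when $k>q$ (this is just reading off \cref{eq:basis_seq}). Because the supports of the $e_{k,l}$ are mutually exclusive across $k$ (the second property of \cref{eq:toy2_eigfunc_cond2}), the empirical loss splits into independent per-skill losses and the parameters $(a_k,\vec{B}_k)$ evolve autonomously, so each case is treated in isolation. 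The case $k>q$ is immediate: the model carries no term supported on $I=k$, hence $f_T(I=k,x)\equiv 0$ and \cref{eq:skill_learned_corr} gives $\mathcal{R}_k(\infty)=0$.

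For $k\le q$ the key structural fact is that taking $D\to\infty$ forces $d_k\to\infty$, so the $k^{\text{th}}$ feature matrix $\Phi\in\mathbb{R}^{m_k\times d_k}$ from \cref{eq:feature_matrix} has $d_k\ge m_k$ and full row rank; consequently $\Phi\Phi^T$ is invertible and $UU^T=I_{m_k}$. This is precisely the opposite of the overparametrized setting of \cref{prop1}, and it removes the minimum-norm ambiguity there: in place of a subspace of interpolating solutions there is now a single critical point (and the assumption that the $U^{\perp}$-components are negligible becomes vacuous, since $U^{\perp}=0$). I would then invoke the convergence argument of \cref{prop1} essentially verbatim: the conserved quantity $a_k^2-|\vec{B}_k|^2$ of \cref{lemma3} keeps the trajectory bounded, $\mathcal{L}_k$ is a Lyapunov function whose time derivative vanishes only at critical points, and the hypothesis that $a_k$ is eventually bounded away from zero forces a unique accumulation point, so gradient flow converges and its limit minimizes the population loss $\mathcal{L}_k=\tfrac12\,\mathbf{E}_{X\mid I=k}[(\SSS g_k - f_T)^2]$.

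Since $UU^T=I_{m_k}$, that minimizer is simply the $L^2(X\mid I=k)$ orthogonal projection of the target $\SSS g_k$ onto $\mathrm{span}\{e_{k,1},\dots,e_{k,m_k}\}$. Using the orthonormality of the $e_{k,l}$ together with the relation $g_k=\sum_{l'}e_{k,l'}/\sqrt{N_c}$ (the first and third properties of \cref{eq:toy2_eigfunc_cond2}), one gets $\mathbf{E}_{X\mid I=k}[g_k e_{k,l}]=1/\sqrt{N_c}$, so the projection coefficients satisfy $a_k(\infty)B_{k,l}(\infty)=\SSS/\sqrt{N_c}$ for $l=1,\dots,m_k$. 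Substituting back into \cref{eq:skill_learned_corr} gives
\begin{equation*}
\mathcal{R}_k(\infty) = \mathbf{E}_{X\mid I=k}\!\Big[g_k \sum_{l=1}^{m_k} a_k B_{k,l}\, e_{k,l}\Big] = \sum_{l=1}^{m_k}\frac{\SSS}{\sqrt{N_c}}\cdot\frac{1}{\sqrt{N_c}} = \SSS\,\frac{m_k}{N_c},
\end{equation*}
and inserting $m_k=N_c$, $r$, $0$ for $k<q$, $k=q$, $k>q$ respectively yields the three cases in the statement.

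The step I expect to be the real obstacle is the convergence claim for $k\le q$: showing that the nonconvex gradient flow on the product parametrization $a_k B_{k,l}$ actually reaches the global minimizer rather than stalling at a spurious critical point or letting $a_k\to0$. My intention is not to redo this from scratch but to verify carefully that the Lyapunov/accumulation-point/conserved-quantity argument of \cref{prop1} transfers to the $d_k\ge m_k$ regime; the main thing to check is that full-rankness of $\Phi\Phi^T$ (guaranteed by $d_k\to\infty$) makes the nonzero-$a_k$ critical point unique up to the sign of $a_k$, which the positive initialization pins down. A secondary point is the passage from the empirical to the population projection as $D\to\infty$, which follows from $\tfrac{1}{d_k}\Phi\Phi^T\to I_{m_k}$ and $\tfrac{1}{d_k}\Phi y\to(\SSS/\sqrt{N_c})\mathbf{1}$ by the law of large numbers.
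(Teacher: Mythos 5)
Your proposal is correct and takes essentially the same route as the paper's proof: both decompose the loss per skill via the mutually exclusive supports, use $D \rightarrow \infty$ to turn the feature Gram matrix into (a multiple of) the identity, conclude $a_k(\infty)B_{k,l}(\infty) = \SSS/\sqrt{N_c}$ for each available basis function, and read off $\mathcal{R}_k(\infty) = \SSS\, m_k/N_c$ from the correlation $\mathbf{E}_{X|I=k}[e_{k,l}\,g_k] = 1/\sqrt{N_c}$. The only real difference is at the convergence step, which the paper compresses into ``assuming the initialization in \cref{prop1}, we can show that $a_k(\infty)B_{k,l}(\infty)=\SSS/\sqrt{N_c}$'' after passing to decoupled scalar ODEs, whereas you justify it by rerunning the Lyapunov/conserved-quantity/accumulation-point argument of \cref{prop1} in the full-rank ($UU^T = I_{m_k}$) regime --- a more explicit treatment of precisely the step the paper glosses over, and you also state the trivial $k>q$ case that the paper leaves implicit.
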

\begin{proof}
Because we have $D\rightarrow \infty$ and because $[e_{k,1}, \cdots e_{k,N_c}]$ can express $g_k$ (\cref{eq:toy2_eigfunc_cond2}), it is trivial to show that $\mathcal{R}_k(\infty) = \SSS$ for $k < q$. For $k = q$, the gradient descent dynamics (\cref{eq:b_kl_dynamic}) leads to
\begin{align}
\frac{d\Vec{B}_k}{dt} = -\frac{\eta a_k}{D} \Phi\Phi^T\left(\Vec{B}_ka_k-\frac{\Vec{\SSS}}{\sqrt{N_c}}\right)
\end{align}
where the matrix $\Phi \in \mathbb{R}^{r \times d_k}$ and vector $\Vec{B}_k \in \mathbb{R}^r$ are the feature matrix(\cref{eq:feature_matrix}) and parameters for the $k^{th}$ skill respectively. As $D \rightarrow \infty$, the matrix $\Phi\Phi^T$ becomes a rank $r$ identity matrix scaled by the frequency of the skill:
\begin{equation}
\lim_{D \rightarrow \infty} \frac{1}{D}(\Phi\Phi^T)_{ll'} = \mathbf{E}_{I,X}\left[e_{k,l}(k,X)e_{k,l'}(k,X)\right] = \mathcal{P}(k)\delta_{l,l'}.
\end{equation}
Plugging in $\Phi\Phi^T$,
\begin{align}
\frac{dB_{k,l}}{dt} = -\eta \mathcal{P}(k) a_k\left(B_{k,l}a_k-\frac{\SSS}{\sqrt{N_c}}\right).
\end{align}
Assuming the initialization in \cref{prop1}, we can show that $a_k(\infty)B_{k,l}(\infty)=\SSS/\sqrt{N_c}$ for $l \leq r$. From \cref{eq:skill_learned_corr}, the skill strength $\mathcal{R}_k(\infty)$  is
\begin{align}
\mathcal{R}_k(\infty) &= \sum_{l=1}^r \frac{\SSS}{\sqrt{N_c}} \mathbf{E}_X \left[e_{k,l}(k,X) g_k(k,X)\right]\\
&= \SSS \frac{r}{N_c},
\end{align}
where we used \cref{eq:toy2_eigfunc_cond2} for the linear correlation between $e_{k,l}$ and $g_k$.
\end{proof}
\section{Time emergence example in NN}\label{app:time_emergence_limit}
In this section, we discuss an example for the time emergence case (\cref{fig:money}(a)) in which the saturation of skill in an NN consists of multiple saturating `modes' as in \cref{fig:dynamic_emergence_tail_app}.

\begin{figure}[!htbp]
    \centering
    \subfloat[neuron modes for a parity function]{%
        \includegraphics[width=0.36 \textwidth]{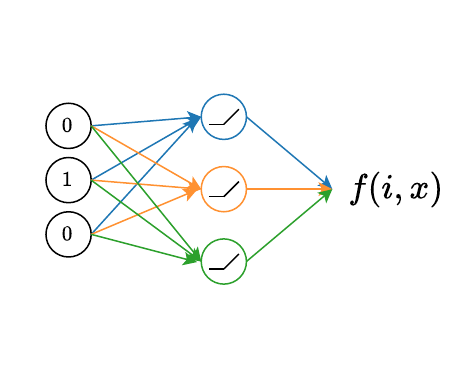}%
        }%
    \subfloat[mode/skill strength]{%
        \includegraphics[width=0.4 \textwidth]{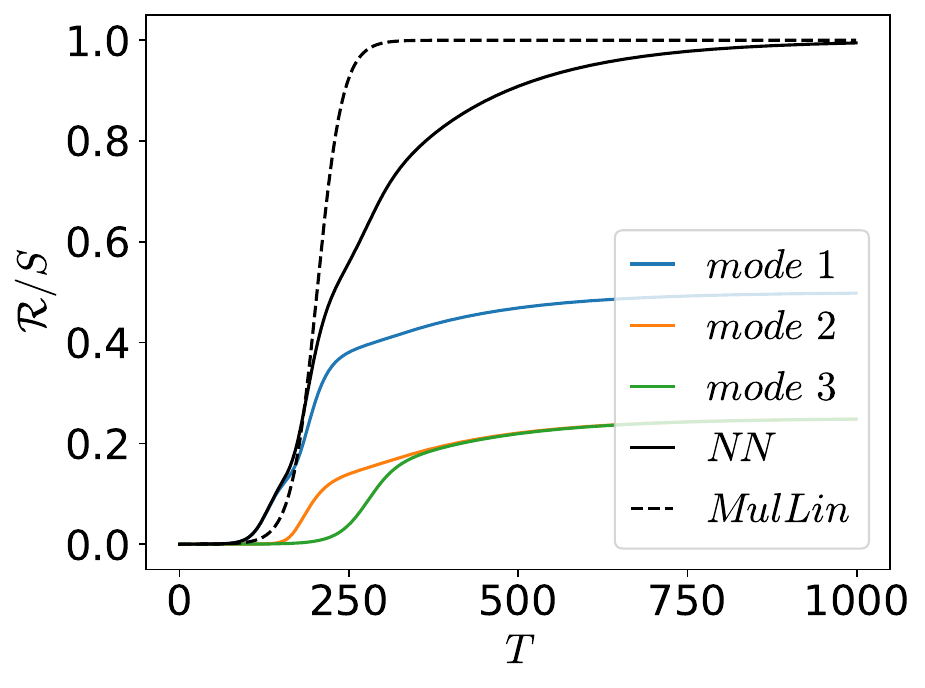}%
        }%
    \caption{\textbf{Modes in NN.} A 2-layer MLP with ReLU activations with a width of 3 and weight sharing (\cref{eq:def_weight_shared_nn}) is trained to fit the parity function. \textbf{(a):} The skill strength $\mathcal{R}$, because of the last layer's linearity, can be decomposed into skill strength from each hidden neuron or each `mode' (shown in different colors, \cref{eq:skill_str_app}). \textbf{(b):} The skill strength for each mode follows a near-sigmoidal curve with different emergent/saturation times (colors) whose sum results in the total skill strength (solid black). Note that different saturation times of each mode result in a deviation from the prediction of the multilinear model with $\mathcal{B}^2=1/3$ (dashed black).}\label{fig:dynamic_emergence_tail_app}
\end{figure}
\paragraph{Task.} We assume an input $X \in \mathbb{R}^{3 \times 8}$ (note that we are not using $X$ as a random variable) that is all $8$ possible inputs for bits with dimension $3$. The target $Y$ is the parity function scaled by $\SSS$.
\begin{equation}
X = \bigl(\begin{smallmatrix}
0 & 0 & 0 & 0 & 1 & 1 & 1 & 1\\ 
0 & 0 & 1 & 1 & 0 & 0 & 1 & 1 \\
0 & 1 & 0 & 1 & 0 & 1 & 0 & 1
\end{smallmatrix}\bigr), \quad\quad 
Y = \bigl(\begin{smallmatrix}
\SSS & -\SSS & -\SSS & \SSS & -\SSS & \SSS & \SSS & -\SSS\\ 
\end{smallmatrix}\bigr)
\end{equation}

\paragraph{NN.} We assume a 2-layer width 3 NN with ReLU activation with the input dimension 3 (\cref{fig:dynamic_emergence_tail_app}(a)). 
The NN has 16 parameters, but to simplify the argument, we use weight sharing so NN has only $4$ parameters: 
\begin{equation}
f(x; \alpha,\beta,\gamma, c) = w^T\sigma(Wx + b) + c 
\end{equation}
where $\sigma$ is the ReLU activation and $W,b,w$ are
\begin{equation}\label{eq:def_weight_shared_nn}
W = \bigl(\begin{smallmatrix}
-\alpha & \alpha & -\alpha\\ 
-\beta & \beta  & -\beta \\ 
\gamma & -\gamma & \gamma
\end{smallmatrix}\bigr), \quad\quad 
b = \bigl(\begin{smallmatrix}
0 \\ 
\beta \\ 
-\gamma 
\end{smallmatrix}\bigr), \quad\quad 
w = \bigl(\begin{smallmatrix}
-2\alpha \\ 
\beta \\ 
\gamma 
\end{smallmatrix}\bigr).
\end{equation}
\paragraph{Modes.} It is easy to see that $\alpha=\beta=\gamma=\sqrt{2\SSS}$ and $c=-\SSS$ leads to the target parity function. We note that one parameter except $c$ (i.e. $\alpha, \beta, \gamma$) maps to one neuron or a mode (colors in \cref{fig:dynamic_emergence_tail_app}(a)). We define the first mode $f^{(1)}$ as
\begin{align}
f^{(1)}(x) &= w_1\sigma(W_1^Tx + b_1) = -2\alpha^2 \sigma(x_2 -x_1-x_3) \\
&= -2\alpha^2 h_1 (x), \quad\quad h_1 (x) := \sigma(x_2 -x_1-x_3),
\end{align}
where $w_1, b_1$ are the first entry of $w, b$ respectively and $W_1$ is the first row of $W$. Note that $f^{(1)}(x)$ takes a form similar to the multilinear model (\cref{eq:toy}) but with $h_1$ as the respective basis. We define $f^{(2)}, f^{(3)}$ similarly, and the sum of modes becomes the NN:
\begin{equation}
f(x) = \sum_{q=1}^3f^{(i)}(x) + c,
\end{equation}
which resembles the multilinear model with different skills.

\paragraph{Mode strength.} Analogous to the skill strength in \cref{eq:skill_learned_corr}, we define mode $q$'s strength $\mathcal{R}^{(q)}$ as
\begin{equation}
\mathcal{R}^{(q)} = \frac{1}{8\SSS^2}Y^Tf^{(q)}(X),
\end{equation}
where $f^{(q)}(X) = [f^{(q)}(X_1), \cdots, f^{(q)}(X_8)]$ and $X_j$ are the $j^{th}$ column of $X$. By the linearity of the expectation, 
\begin{equation}\label{eq:skill_str_app}
\mathcal{R} = \sum_{q=1}^3 \mathcal{R}^{(q)}.
\end{equation}
Note that constant $c$ always has zero correlation (inner product) to the target ($Y$).

\paragraph{Analysis.} The dynamics of each mode $\mathcal{R}^{(q)}(x)$ differs from that of the multilinear model (\cref{eq:toy_theo}) because $h_q(x)$ often depends on the parameter, and the dynamics are no longer decoupled among each mode. Nevertheless, each mode follows a sigmoid-like growth (\cref{fig:dynamic_emergence_tail_app}(b)). We note that each mode has a different saturation time scale or is updated at different frequencies. A mode with a longer time scale leads to a longer `tail' of saturation as discussed in the main text.

\paragraph{Update frequency.} Because of the non-linearity, each mode differs in the gradients it receives. We can explicitly calculate the gradient for each parameter as:
\begin{align}
\frac{d \alpha^2}{dt} &= 2\eta \alpha^2 (-\SSS - (-2\alpha^2 + 2\beta^2 +c)) \\ 
\frac{d \beta^2}{dt} &= -\eta \beta^2 (\SSS - (-2\alpha^2 + 5\beta^2  +5c)) \\ 
\frac{d \gamma^2}{dt} &= -\eta \gamma^2 (\SSS - (\gamma^2 +c)) \\ 
\frac{d c}{dt} &= -\eta (2\alpha^2 -5\beta^2 -\gamma^2 -8c).
\end{align}
We immediately notice that $c$ will grow the fastest for small initialization ($\alpha, \beta, \gamma, c \ll 1$) because it saturates exponentially while other parameters saturate sigmoidally. Considering that $\SSS$ is always the largest term and $c$ saturate to $\SSS$ quickly, we notice that the saturation is in the order of $\alpha^2$ ($\approx 2\SSS + 2c \approx 4\SSS$),  $\beta^2$ ($\approx -\SSS + 5c \approx 4\SSS$), and $\gamma^2$($\approx 2\SSS$). We observe that our crude approximation holds in \cref{fig:dynamic_emergence_tail_app}(b): the first ($\alpha$) and the second ($\beta$) modes saturate at similar timescale, while the third mode ($\gamma$) requires approximately twice the time for saturation.

\section{Details of the multilinear model}\label{app:linear_intuition}
The multilinear model (\cref{fig:toy_setup}(a)) has two identifying properties: 1) the layerwise structure and 2) $g_k$ as the basis functions. In this section, we discuss the role of each property in more detail.

\begin{figure}[htp] 
    \centering
    \subfloat[Multilinear model illustration]{%
        \includegraphics[width=0.4 \textwidth]{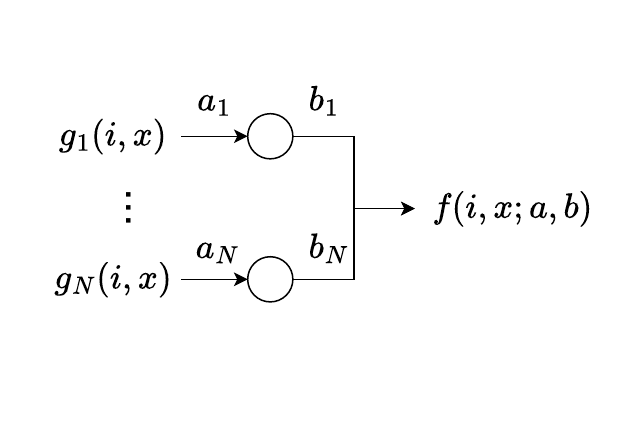}
        }%
    \subfloat[Decoupled dynamics]{%
        \includegraphics[width=0.45 \textwidth]{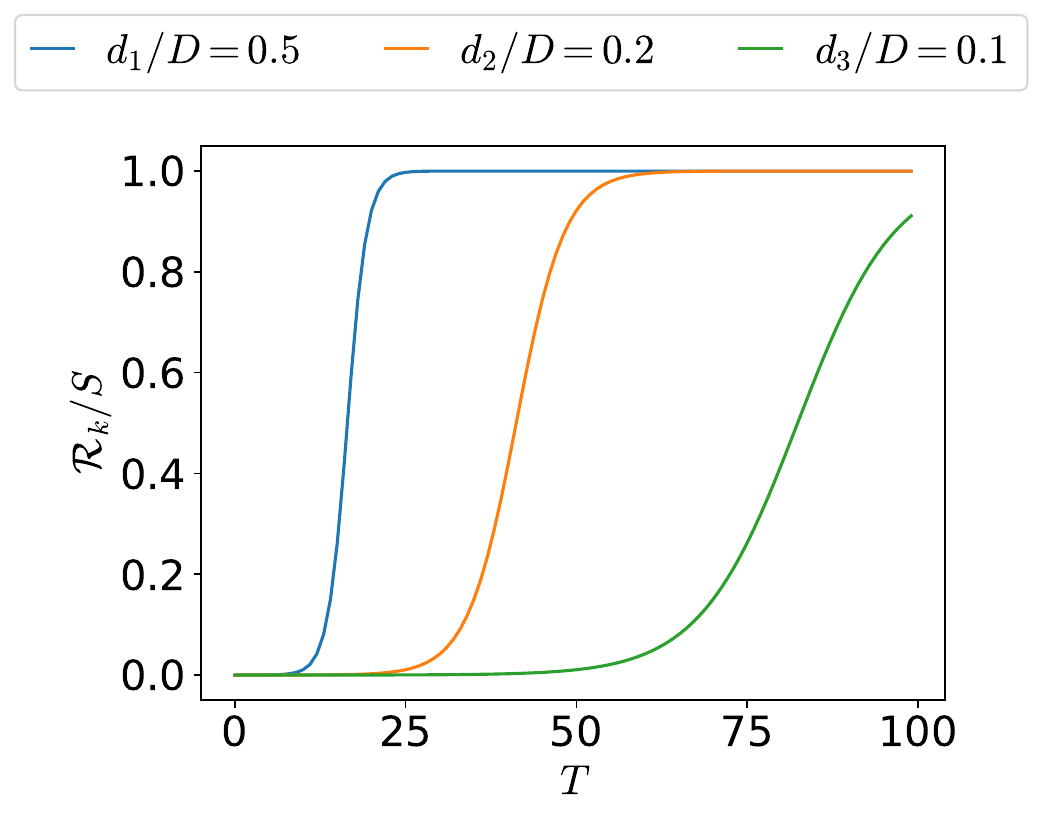}%
        }%
    \caption{\textbf{Multilinear model.} \textbf{(a):} An illustration of the multilinear model which is multilinear in terms of parameters, generating a layerwise structure. The model also has the skill functions $g_k$s as basis functions. \textbf{(b):} The dynamics of the multilinear model are decoupled and each skill strength ($\mathcal{R}_k$) shows a sigmoidal growth in time. Note that less frequent skills have a more delayed growth.}\label{fig:toy_setup}
\end{figure}

\paragraph{Multilinearity.} The product of two parameters ($a_kb_k$) creates the layerwise structure (\cref{fig:toy_setup}(a)) that gives rise to the emerging dynamics (sudden saturation or sigmoidal growth) in \cref{fig:toy_setup}(b). The time emergence of NN is well-described by the sigmoidal dynamics (\cref{fig:money}(a)); a non-sigmoidal saturation dynamics, for example, that of linear models (\cref{fig:dynamics_linear_comparison}(a)), would inadequately describe the time emergence. Such dynamics have first been studied by Saxe et al. \cite{saxe2013exact} (See  \cref{app:nonlinear_dynamics} for an overview).



Assuming a sufficiently fast decay of $d_k$ for the skills, the sigmoidal growth results in a stage-like training (\cref{para:stage}) where one skill fully saturates before the next skill emerges. In \cref{para:stage}, we discuss how the stage-like training can describe the quanta model \cite{michaud2023quantization} and how NNs, without explicit $g_k$s, decouple each skill.


Finally, note that even though sigmoidal saturation has a resemblance to the test accuracy in grokking \cite{power2022grokking}, our model is irrelevant to grokking because $\mathcal{R}_k$ -- which is defined over the expectation over the $k^{th}$ skill (\cref{eq:skill_learned_corr}) -- appears both in the empirical loss (\cref{eq:toy_theo}) and the test loss: failing to describe the discrepancy between train and test accuracy in grokking. 


\begin{figure}[!htbp]
    \centering
    \includegraphics[width=1.0 \textwidth]{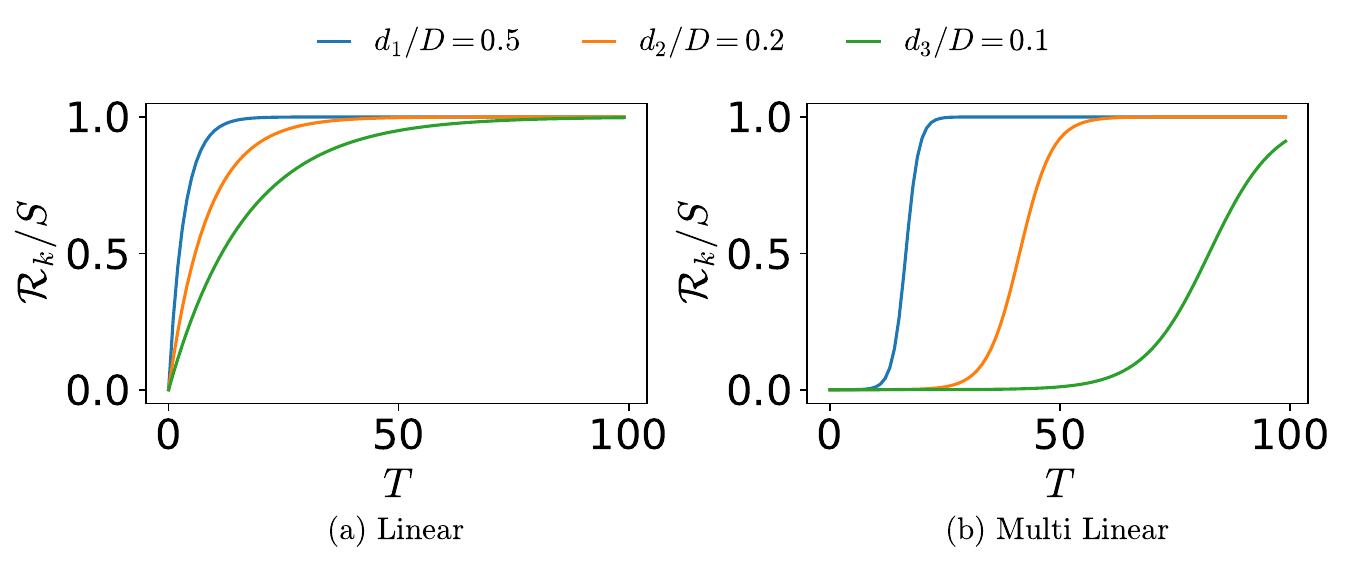}%
    \caption{\textbf{Dynamics of linear and multilinear model.} \textbf{(a):} Skill strength dynamics of the linear model (\cref{eq:linear}) \textbf{(b):} Skill strength dynamics of the multilinear model (\cref{eq:toy_theo}). For the linear model, $\mathcal{R}_k$ emerges from $T=0$ for all $d_k/D >0$: obstructing the stage-like training. For the multilinear model, $\mathcal{R}_k$ shows a delayed emergence depending on $d_k/D$: allowing the stage-like training and describing the sigmoidal time emergence in \cref{fig:money}(a). }\label{fig:dynamics_linear_comparison}
\end{figure}

\paragraph{Connection to linear models.}
In \cref{sec:scaling_law,app:scaling}, we have shown how the scaling laws follow from the basis functions $g_k$ that decouples the loss. To analyze the role of $g_k$, we can ask whether a simpler linear model with $g_k$ as basis functions (\cref{eq:linear_model}) also recovers the scaling laws. The answer is yes and we outline how a linear model can recover all scaling laws. In addition, we also outline how extended linear models -- extended similar to \cref{sec:emergence} such that skills are decoupled -- can recover all emergence behaviors shown in \cref{app:derivation_extended} except the time emergence.

By replacing $a_kb_k$ with $w_k$, we obtain the linear model with skill basis functions:
\begin{equation}\label{eq:linear_model}
f_T(i,x; w) = \sum_{k=1}^N w_k(T) g_k(i,x).
\end{equation}
The dynamics of the linear model under gradient flow is 
\begin{equation}\label{eq:linear}
\mathcal{R}_k(T) = w_k(T) = \SSS (1-e^{-\eta \frac{d_k}{D}T}),
\end{equation}
where we assumed $w_k(0)=0$. The linear model follows an exponential saturation of the skill strength in contrast to the sigmoidal saturation of the multilinear model (\cref{fig:dynamics_linear_comparison}).

Nevertheless, the linear model \cref{eq:linear} results in the same scaling laws in \cref{sec:scaling_law}. For the time scaling law, we recover the relationship between $d\mathcal{L}_k/dT$ and $d\mathcal{L}_k/dk$ in \cref{derivation:time_scaling} because $\mathcal{R}_k(T)$ is a function of $\frac{d_k}{D}T$ only (where $d_k/D = \mathcal{P}_s(k)$ for $D \rightarrow \infty$). For the data scaling law, we recover \cref{corr1} because each $w_k$ (i.e. $\mathcal{R}_k$) is decoupled. For the parameter scaling law, we recover \cref{corr2} trivially as the linear model shares the same basis functions.

The data and parameter emergence in \cref{sec:emergence} can be obtained from the linear model in \cref{eq:linear_model} if we extend the model analogous to \cref{eq:toy2,eq:toy3}. For example, we can extend the model for data emergence as 
\begin{equation}
f_T(i,x; W) = \sum_{k=1}^{N} \sum_{l=1}^{D_c}W_{k,l}(T)e_{k,l}(i,x),
\end{equation}
where the matrix $W \in \mathbb{R}^{N \times D_c}$ is an extension of $w \in \mathbb{R}^N$ in \cref{eq:linear_model}, $D_c$ is a fixed scalar, and $e_{k,l}(i,x):\{0,1\}^{\ns+n_b} \rightarrow \mathbb{R}$ are functions with the following properties:
\begin{equation}
\mathbf{E}_{X|I=k}\left[e_{k,l}e_{k,l'}\right] = \delta_{ll'},  \quad\ e_{k,l}(I \neq k,x) = 0, \quad \sum_{l=1}^{D_c}\frac{1}{\sqrt{D_c}}e_{k,l} = g_k. 
\end{equation}
The equivalence can be shown by \cref{lemma2} which states that the multilinear model finds the minimum norm solution: the solution that the linear model finds in a ridgeless regression setup.

Thus, for our setup, the basis functions play a critical role in the scaling laws and data/parameter emergences. The choice of basis functions, also known as the task-model alignment (see  \cite{canatar2021spectral,simon2021theory}), determines the linear model's scaling laws and emergence behaviors. See Bordelon et al. \cite{bordelon2024dynamical} for a study of the scaling laws in linear models.

\newpage
\section{Additional plots and tables}\label{app:add_plots}

\begin{figure}[htp] 
    \centering
    \includegraphics[width=0.99 \textwidth] {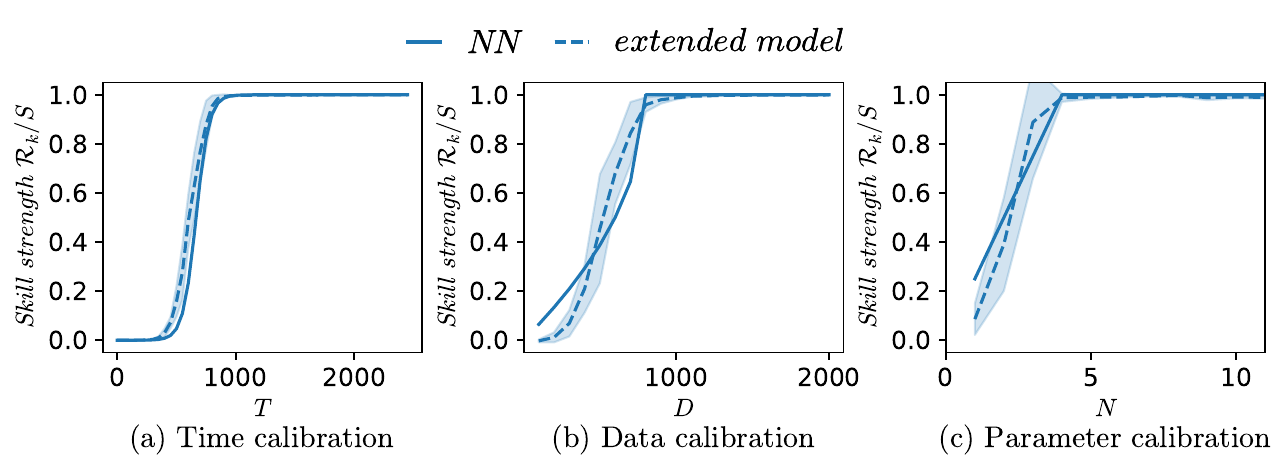}%

\caption{\textbf{Calibration and prediction on emergence.} The calibration of the extended multilinear model (solid) on the 2-layer NN (dashed) for $\ns=1$ system. For the calibrated parameters, we have $\mathcal{B}^2=\At$ for time (\cref{eq:toy_theo_extended}), $D_c =800$ for data (\cref{eq:d_c_shot}), and $N_c = 4$ for hidden layer width (\cref{eq:param_emergence}).}\label{fig:emergences} 
\end{figure}

\begin{table}[!htbp]
    \caption{\textbf{Samples of skill strength $\boldsymbol{\mathcal{R}_k/\SSS}$.} The table shows the skill strength at $N=10$ for 10 different runs of the parameter emergence experiment (\cref{fig:money}(c)). Note that the variance of $\mathcal{R}_k/\SSS$ is amplified by the outliers -- shaded columns -- that learn a less frequent skill at the cost of a more frequent skill (second column) or fail to learn a skill (seventh column).  \\}\label{fig:parameter_emergence_table}
    \centering
    \begin{tabular}{P{0.9cm}|| P{0.5cm} d{.5cm} P{0.5cm} P{.5cm}P{0.5cm} P{.5cm}d{0.5cm} P{.5cm}P{0.5cm} P{.5cm}}
\toprule
\,{\color{RoyalBlue}$k=1$} & 0.98 & 0.98 & 0.98 & 0.98 & 0.98 & 0.98 & 0.98 & 0.98 & 0.98 & 0.98 \\
\,{\color{Orange}$k=2$}& 4.5 & 0.95 & 0.95 & 0.95 & 0.96 & 0.96 & 0.04 & 0.96 & 0.96 & 0.95 \\
\,{\color{Green}$k=3$} & 0.6 & 0.0 & 0.72 & 0.90 & 0.92 & 0.64 & 0.88 & 0.8 & 0.58 & 0.52 \\
\,{\color{Red}$k=4$} & 0.0 & 0.78 & 0.0 & 0.0 & 0.0 & 0.0 & 0.0 & 0.0 & 0.0 & 0.0 \\
\,{\color{Violet}$k=5$} & 0.0 & 0.0 & 0.0 & 0.0 & 0.0 & 0.0 & 0.0 & 0.0 & 0.0 & 0.0 \\
\bottomrule
\end{tabular}

\end{table}

\begin{figure}[htp] 
    \centering
    \includegraphics[width=0.99 \textwidth] {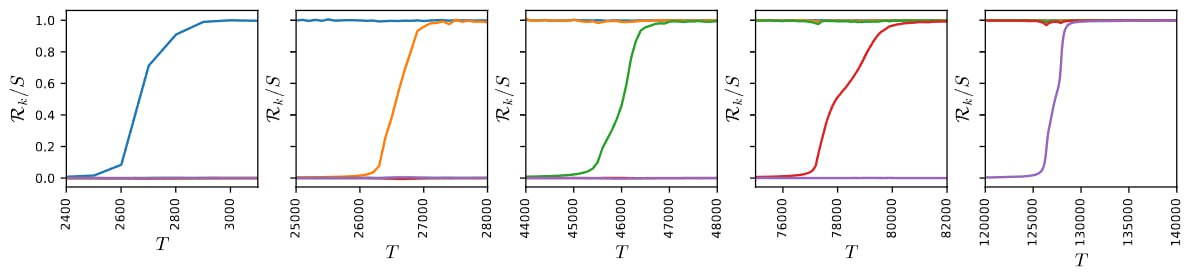}%
\caption{\textbf{Enlarged emergence.} Enlarged view of skill emergence from \cref{fig:transformer}, showing that saturations also follow a sigmoidal pattern. The $x$-axis is measured in steps.}\label{fig:enlarged_emergences} 
\end{figure}

\include{appendix_additional_plots}
\section{Rigorous derivation of the scaling laws }\label{app:rigorous}
In \cref{app:scaling}, we discussed the scaling laws in simplified settings, favoring intuition over mathematical rigor. Building upon the intuitive understanding developed in \cref{app:scaling}, we now turn our attention to a rigorous analysis of the scaling laws. In this section, we will derive general scaling laws by considering a comprehensive set of parameters and variables. Our goal is to establish the conditions under which these scaling laws hold and to quantify the associated error terms. By explicitly analyzing the error terms, this section aims to provide a rigorous assessment of the validity and limitations of our scaling law estimates.

\begin{table}[!htbp]
\centering
\caption{\textbf{Scaling laws and their conditions.} The leftmost column indicates the condition for the `large resource' -- large enough to be treated as infinity, while the second column is the condition between the other two resources for the scaling law (third column). The last two columns show where the statement for the prefactor constant (e.g. $\mathcal{A}_N$ for scaling law $\mathcal{L}=\mathcal{A}_NN^{-\alpha}$) and the scaling law (with the assumptions and explicit error terms) are given. 
Note that whenever $T$ appears in theorems and corollaries, $\eta \SSS$ is multiplied to make it dimensionless. \label{table:general_scaling} \\}
\begin{tabular}{ P{2.7cm} P{2.1cm} P{2.9cm} P{1.65cm} P{2.5cm} }
 \toprule
 Large resource & Condition & Scaling law & Constant &  Statement \\
 \midrule
 $D \gg T^3$ & $N^{\alpha+1} = o(T)$ & $\mathcal{L}=\mathcal{A}_N N^{-\alpha}$ &  \cref{thm1} & \cref{thm1}  \\
 $D \gg NT^2,T^3$ & $N^{\alpha+1} \gg T$ & $\mathcal{L}=\mathcal{A}_T T^{-\alpha/(\alpha+1)}$ & \cref{thm4} & \cref{thm2,thm3}  \\
 $D \gg T^3$ & $N^{\alpha+1} \approx T$ & $\mathcal{L}=\mathcal{A}_C C^{-\alpha/(\alpha+2)}$ & \cref{cor:computeOptimal} & \cref{cor:compute}  \\
 $T \gg D (\log D)^{1+\epsilon}$ & $N^{\alpha+1}=o(D)$ & $\mathcal{L}=\mathcal{A}_N N^{-\alpha}$ & \cref{thm5} & \cref{thm5} \\
 $T \gg D (\log D)^{1+\epsilon}$ & $N^{\alpha+1} \gg D$ & $\mathcal{L}=\mathcal{A}_D D^{-\alpha/(\alpha+1)}$ & \cref{thm5} & \cref{thm5}  \\
 \bottomrule
\end{tabular}
\end{table}

\subsection{General set up, repeated}
We go back to the most general settings possible. Our starting point is \cref{eq:l_k_shown}, which describes the dynamics of $\mathcal{R}_k$ and $\mathcal{L}_k$ valid for $k \le N$:
\begin{equation}
\mathcal{L}_k = \frac{\SSS^2}{2\left(1+\left(\frac{\SSS}{\mathcal{R}_k(0)} - 1\right)^{-1}e^{2\eta \frac{d_k}{D} \SSS T}\right)^2} \tag{\ref{eq:l_k_shown}}
\end{equation}
We do not use skills for indices $k>N$ in our model, but we can still denote
\begin{equation}
\mathcal{R}_k = 0 \quad \text{and} \quad \mathcal{L}_k = \frac{S^2}{2}.
\end{equation}
For $\mathcal{P}_s(k)= \C k^{-\alpha-1}$, the total loss is given as
\begin{equation} \label{eq:L_sum_formula}
\mathcal{L} = \sum_{k=1}^{n_s} \mathcal{P}_s(k) \mathcal{L}_k \\
= \sum_{k=1}^{N} \mathcal{P}_s(k) \mathcal{L}_k + \sum_{k=N+1}^{n_s} \mathcal{P}_s(k) \frac{S^2}{2}.
\end{equation}
When $n_s, N, T$ are all set, their dependency with the data is only determined by the statistics $d_k$, the number of data with $i^{(j)}=k$. We assumed that $(i,x) \in I \times \{0,1\}^{n_d}$ was collected as random samples with $i$ following the Zipfian distribution of size $n_s$ and exponent $\alpha+1$, or equivalently $P(i=k) = \mathcal{P}_s(k)= A k^{-\alpha-1}$ for $1 \le k \le n_s$. Then $(d_1, \cdots, d_{n_s})$ is a vector denoting the number of occurrences in $D$ independent sampling from that distribution. It follows that $d_i$ follows binomial distribution $B(D, \mathcal{P}_s(k))$.

In this complete perspective, our loss is dependent on all of those parameters and variables
\begin{equation}
\mathcal{L}  = \mathcal{L}(n_S, \mathcal{D}, \mathcal{R}_{init}, N, T)
\end{equation}
where $\mathcal{R}_{init}=(\mathcal{R}_1(0), \cdots, \mathcal{R}_{N}(0))$ denotes the vector representing initial condition. We will also simply denote $r_k = \mathcal{R}_k(0)$. We will not assume much on $r_k$, but we absolutely need $0<r_k<S$ for dynamics to hold, and we also should have
\begin{equation}
\sum_{k=1}^{n_s} \mathcal{P}_s(k) r_k^2 = \mathbf{E}[f(0)^2] \ll S^2.
\end{equation}
We will not impose any particular distribution on $\mathcal{R}_{init}$. Instead, we will try to identify sufficient conditions on $r_k$ for our desired result to hold, and those conditions will differ by the situation we are considering. For example, in \cref{thm2,thm3} where we prove time scaling law $\mathcal{L} = \Theta(T^{-\alpha/(\alpha+1)})$ for large enough $D$ and bottleneck $T$, we only require $\epsilon<r_k< S/2$ for some $\epsilon>0$. However, the exact constant depends on the distribution of $r_k$, and figuring out the explicit constant seems to be only feasible when we fix $r_k = r$ as in \cref{thm4}.

\subsection{Estimates for large $D$}

We will first consider the situation where $D$ becomes the `large resource' so that its effect on the loss function is negligible.  The number of data $d_k$ follows binomial distribution $B(D, \mathcal{P}_s(k))$, so $d_k/D$ converges to $\mathcal{P}_s(k)$ for large enough $D$. So taking the limit of $\mathcal{L}$ when we let $D \rightarrow \infty$ has the effect of replacing $d_k/D$ by $\mathcal{P}_s(k)$ in the expression of $\mathcal{L}$. We will establish an explicit inequality comparing the difference between $\mathcal{L}$ and this limit.

\begin{lemma} \label{lemma4:BerryEsseen}
For a function $F : \mathbb{R} \rightarrow \mathbb{R}$ with its total variation $V(F)$ bounded, we have
\begin{equation}
\left| \mathbf{E}_{\mathcal{D}} \left[ F(\frac{d_k}{D}) \right] - \mathbf{E}_{z \sim \mathcal{N}(\mathcal{P}_s(k), \mathcal{P}_s(k)(1-\mathcal{P}_s(k))/D)} \left[ F(z)\right] \right| < \frac{V(F)}{\sqrt{D} \sqrt{\mathcal{P}_s(k)(1-\mathcal{P}_s(k))}}
\end{equation}
where $\mathcal{N}(\mu, \sigma^2)$ denotes normal distribution of mean $\mu$ and variance $\sigma^2$.
\end{lemma}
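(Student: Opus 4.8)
The plan is to reduce the claim to a Kolmogorov-distance bound between the law of $d_k/D$ and its normal approximation, and then to invoke the Berry--Esseen theorem. Write $p=\mathcal{P}_s(k)$, let $G_1$ be the CDF of $d_k/D$ (a step function, since $d_k\sim B(D,p)$), and let $G_2$ be the CDF of $\mathcal{N}(p,p(1-p)/D)$. Since $\mathbf{E}_{\mathcal{D}}[F(d_k/D)]=\int F\,dG_1$ and $\mathbf{E}_z[F(z)]=\int F\,dG_2$, the quantity to control is $\int F\,d(G_1-G_2)$.

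First I would establish the duality between the total variation of the test function and the Kolmogorov (sup) distance of the two measures. Integrating by parts in the Riemann--Stieltjes sense gives
\[
\int_{\mathbb{R}} F\,d(G_1-G_2)=\Big[F(x)\big(G_1(x)-G_2(x)\big)\Big]_{-\infty}^{\infty}-\int_{\mathbb{R}}\big(G_1(x)-G_2(x)\big)\,dF(x).
\]
The boundary term vanishes because $G_1-G_2\to 0$ as $x\to\pm\infty$ (both are CDFs with identical tails) while $F$, being of bounded variation on $\mathbb{R}$, has finite limits at $\pm\infty$ and is bounded. Bounding the remaining integral by the sup norm of $G_1-G_2$ times the total variation $\int|dF|=V(F)$ yields
\[
\left|\mathbf{E}_{\mathcal{D}}[F(d_k/D)]-\mathbf{E}_z[F(z)]\right|\le\Big(\sup_x|G_1(x)-G_2(x)|\Big)\,V(F).
\]

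Next I would bound the Kolmogorov distance $\sup_x|G_1-G_2|$. Because $G_1$ and $G_2$ share the same mean $p$ and variance $p(1-p)/D$, the standardization $z\mapsto(z-p)/\sqrt{p(1-p)/D}$ carries $G_2$ exactly onto the standard normal CDF $\Phi$ and carries $G_1$ onto the CDF of $(d_k-Dp)/\sqrt{Dp(1-p)}$, so the distance is unchanged. Writing $d_k=\sum_{i=1}^D Y_i$ with $Y_i$ i.i.d.\ Bernoulli($p$), whose variance is $\sigma^2=p(1-p)$ and third absolute central moment is $\rho=p(1-p)(p^2+(1-p)^2)$, the Berry--Esseen theorem gives
\[
\sup_x|G_1(x)-G_2(x)|\le\frac{C\rho}{\sigma^3\sqrt{D}}=\frac{C\,(p^2+(1-p)^2)}{\sqrt{D}\,\sqrt{p(1-p)}},
\]
where $C$ is the absolute Berry--Esseen constant.

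Finally I would close the constant. Using the elementary identity $p^2+(1-p)^2=1-2p(1-p)\le 1$ together with the known bound $C<1$ for the i.i.d.\ Berry--Esseen constant (indeed $C\le 0.4748$), the factor $C\,(p^2+(1-p)^2)$ is strictly below $1$, so $\sup_x|G_1-G_2|<1/(\sqrt{D}\sqrt{p(1-p)})$, and combining with the duality bound gives the claimed strict inequality (for nonconstant $F$; the degenerate case $V(F)=0$ is trivial). I expect the main technical obstacle to be the integration-by-parts step: since $F$ is assumed only to be of bounded variation, one must handle the Riemann--Stieltjes integral against the discontinuous $G_1$ and any coinciding jumps of $F$ with care, or equivalently justify the total-variation/Kolmogorov duality directly rather than assuming $F$ smooth. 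Everything downstream is the standard Berry--Esseen estimate plus the elementary bound on $p^2+(1-p)^2$.
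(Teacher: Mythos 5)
Your proposal is correct and takes essentially the same route as the paper: the paper's proof is a one-line invocation of the Berry--Esseen inequality (with constant $1$) applied to $d_k \sim B(D,\mathcal{P}_s(k))$, which implicitly relies on exactly the two ingredients you spell out, namely the total-variation/Kolmogorov duality $\left|\int F\,d(G_1-G_2)\right| \le V(F)\,\sup_x|G_1(x)-G_2(x)|$ and the Bernoulli moment bound $\rho/\sigma^3 = (p^2+(1-p)^2)/\sqrt{p(1-p)} \le 1/\sqrt{p(1-p)}$. Your version merely makes explicit the details the paper leaves to the reader (including the care needed for the duality when $F$ and the binomial CDF share jump points, which is genuine but handled by proving the duality directly, e.g.\ via a layer-cake decomposition of the monotone parts of $F$ rather than Riemann--Stieltjes integration by parts).
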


\begin{proof}
This is just an application of the Berry-Esseen inequality (with constant $1$, see \cite{shevtsova2007sharpening} for modern treatment) applied to $d_k$ following binomial distribution $B(D, \mathcal{P}_s(k))$.
\end{proof}

\begin{lemma} \label{lemma5:normaldistBound}
Let $F : \mathbb{R} \rightarrow \mathbb{R}$ be a $C^2$ function such that $F''$ is bounded. Then we have
\begin{equation}
    \left| \mathbf{E}_{z \sim \mathcal{N}(\mathcal{P}_s(k), \mathcal{P}_s(k)(1-\mathcal{P}_s(k))/D)} \left[ F(z)\right] - F(\mathcal{P}_s(k)) \right|  \\ 
\le \frac{\mathcal{P}_s(k)(1-\mathcal{P}_s(k))}{2D} \mathrm{sup} |F''| .
\end{equation} 
\end{lemma}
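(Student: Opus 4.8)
The plan is to use a second-order Taylor expansion of $F$ around the mean and to exploit the fact that, for a Gaussian centered at that mean, the first-order term vanishes in expectation. Write $\mu := \mathcal{P}_s(k)$ and $\sigma^2 := \mathcal{P}_s(k)(1-\mathcal{P}_s(k))/D$ for the mean and variance of the normal distribution in question. First I would invoke Taylor's theorem with remainder: for each fixed $z$ there is a point $\xi = \xi(z)$ between $\mu$ and $z$ with
\begin{equation}
F(z) = F(\mu) + F'(\mu)(z-\mu) + \tfrac{1}{2}F''(\xi)(z-\mu)^2.
\end{equation}

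Next I would take expectation over $z \sim \mathcal{N}(\mu, \sigma^2)$. Because the distribution is centered at $\mu$, the linear contribution disappears, $\mathbf{E}[z-\mu]=0$, leaving only the quadratic remainder:
\begin{equation}
\mathbf{E}_{z}\!\left[F(z)\right] - F(\mu) = \tfrac{1}{2}\,\mathbf{E}_{z}\!\left[F''(\xi)\,(z-\mu)^2\right].
\end{equation}
Finally I would bound the remainder by pulling $\sup|F''|$ outside the integral, using $|F''(\xi)| \le \sup|F''|$ pointwise together with $\mathbf{E}_z[(z-\mu)^2] = \sigma^2$:
\begin{equation}
\left|\mathbf{E}_{z}\!\left[F(z)\right] - F(\mu)\right| \le \tfrac{1}{2}\sup|F''|\;\mathbf{E}_{z}\!\left[(z-\mu)^2\right] = \tfrac{\sigma^2}{2}\sup|F''|.
\end{equation}
Substituting $\sigma^2 = \mathcal{P}_s(k)(1-\mathcal{P}_s(k))/D$ reproduces the claimed inequality exactly.

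This argument is essentially elementary, so there is no serious obstacle; the only technical point worth a line is that the Lagrange remainder $\xi(z)$ need not be chosen measurably in $z$. This is harmless: one simply defines the remainder directly as $R(z) := F(z) - F(\mu) - F'(\mu)(z-\mu)$, which is manifestly measurable, and the boundedness of $F''$ gives the pointwise estimate $|R(z)| \le \tfrac{1}{2}\sup|F''|\,(z-\mu)^2$ (either by the integral form of the remainder or by the mean value theorem applied to $F'$). Integrating this pointwise bound against the Gaussian density and using $\mathbf{E}_z[(z-\mu)^2]=\sigma^2$ yields the same conclusion while avoiding any measurability concern.
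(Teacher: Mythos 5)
Your proof is correct and follows essentially the same route as the paper: a second-order Taylor expansion around the mean $\mathcal{P}_s(k)$, cancellation of the linear term under the centered Gaussian, and a pointwise bound $\tfrac{1}{2}\sup|F''|(z-\mu)^2$ on the remainder integrated against the Gaussian. Your closing remark on measurability is exactly how the paper implicitly handles it, by working with the pointwise Taylor bound on the remainder rather than a Lagrange point $\xi(z)$.
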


\begin{proof}
First, we apply Taylor's theorem to show that
\begin{equation}
\left| F(z) - F(\mathcal{P}_s(k)) - F'(\mathcal{P}_s(k)) (z - \mathcal{P}_s(k)) \right| \le \frac{(z-\mathcal{P}_s(k))^2}{2} \sup |F''|.
\end{equation}
Taking expectation when $z$ follows normal distribution $\mathcal{N}(\mathcal{P}_s(k), \frac{\mathcal{P}_s(k)(1-\mathcal{P}_s(k))}{D})$ gives
\begin{align}
\left| \mathbf{E}_{z}\left[  F(z) - F(\mathcal{P}_s(k)) \right] \right| = & \left| \mathbf{E}_{z}\left[  F(z) - F(\mathcal{P}_s(k)) - F'(\mathcal{P}_s(k)) (z - \mathcal{P}_s(k)) \right] \right| \\
\le & \mathbf{E}_z \left[ \left| F(z) - F(\mathcal{P}_s(k)) -  F'(\mathcal{P}_s(k)) (z - \mathcal{P}_s(k)) \right| \right] \\
\le & \mathbf{E}_z \left[ \frac{(z-\mathcal{P}_s(k))^2}{2} \sup |F''|  \right] \\
= & \frac{\mathcal{P}_s(k)(1-\mathcal{P}_s(k))}{2D} \mathrm{sup} |F''|.
\end{align}
\end{proof}

\begin{proposition} \label{prop3:largeD_individual} We have
\begin{equation} \label{eq:prop3_main_statement}
\left| \mathbf{E}_{\mathcal{D}} \left[ \mathcal{L}_k\right] - \frac{S^2}{2\left(1+\left(\frac{\SSS}{r_k} - 1\right)^{-1}e^{2\eta \mathcal{P}_s(k) \SSS T}\right)^2} \right| <  \frac{2^{\alpha} S^2}{\sqrt{D \mathcal{P}_s(k)}} + \frac{4 S^4 \eta^2 T^2 \mathcal{P}_s(k)}{D}.
\end{equation}
\end{proposition}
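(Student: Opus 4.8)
The plan is to recognise $\mathcal{L}_k$ as a fixed deterministic function of the single random quantity $d_k/D$, and then chain the two comparison lemmas already established. Concretely, set $c_k = (\SSS/\mathcal{R}_k(0)-1)^{-1}>0$ and $\beta = 2\eta\SSS T$, so that by \cref{eq:l_k_shown} we have $\mathcal{L}_k = F(d_k/D)$ with $F(u) := \tfrac{\SSS^2}{2}(1+c_k e^{\beta u})^{-2}$, and the quantity we compare against is exactly $F(\mathcal{P}_s(k))$. Writing $z\sim\mathcal{N}(\mathcal{P}_s(k),\mathcal{P}_s(k)(1-\mathcal{P}_s(k))/D)$, the triangle inequality gives
\begin{equation}
\left|\mathbf{E}_{\mathcal{D}}[\mathcal{L}_k] - F(\mathcal{P}_s(k))\right| \le \left|\mathbf{E}_{\mathcal{D}}\!\left[F\!\left(\tfrac{d_k}{D}\right)\right] - \mathbf{E}_{z}[F(z)]\right| + \left|\mathbf{E}_{z}[F(z)] - F(\mathcal{P}_s(k))\right|,
\end{equation}
where the first term is controlled by \cref{lemma4:BerryEsseen} and the second by \cref{lemma5:normaldistBound}. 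It then remains to estimate the two analytic constants those lemmas require, namely the total variation $V(F)$ and $\sup|F''|$.

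For the first term I would compute $V(F)$. Since the factor $c_k e^{\beta u}$ is increasing, $F$ is strictly monotone decreasing in $u$, so its total variation over $\mathbb{R}$ equals $\lim_{u\to-\infty}F(u) - \lim_{u\to+\infty}F(u) = \SSS^2/2 - 0 = \SSS^2/2$. Substituting into \cref{lemma4:BerryEsseen} yields a bound of $\tfrac{\SSS^2}{2\sqrt{D\,\mathcal{P}_s(k)(1-\mathcal{P}_s(k))}}$, which I would absorb into the claimed $\tfrac{2^{\alpha}\SSS^2}{\sqrt{D\mathcal{P}_s(k)}}$ by checking $\tfrac{1}{2\sqrt{1-\mathcal{P}_s(k)}}\le 2^{\alpha}$. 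This reduces to $\mathcal{P}_s(k)\le 1-4^{-(\alpha+1)}$, which holds because $\mathcal{P}_s(k)\le\mathcal{P}_s(1)=\C\le(1+2^{-(\alpha+1)})^{-1}$ (using $n_s\ge 2$), and a short algebraic check with $t=2^{-(\alpha+1)}<\tfrac12$ confirms $(1+t)^{-1}\le 1-t^2$.

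For the second term I would bound $\sup|F''|$. Writing $\phi(u)=(1+c_k e^{\beta u})^{-1}\in(0,1)$, so that $F=\tfrac{\SSS^2}{2}\phi^2$ and $\phi' = -\beta\phi(1-\phi)$, a direct differentiation gives $F'' = \beta^2\SSS^2\,\phi^2(1-\phi)(2-3\phi)$. Since $\phi\in(0,1)$ forces $\phi^2(1-\phi)\le 1$ and $|2-3\phi|\le 2$, I obtain the crude uniform bound $\sup|F''|\le 2\beta^2\SSS^2 = 8\eta^2\SSS^4 T^2$ using $\beta=2\eta\SSS T$. Feeding this into \cref{lemma5:normaldistBound} and bounding $1-\mathcal{P}_s(k)\le 1$ gives $\tfrac{\mathcal{P}_s(k)}{2D}\cdot 8\eta^2\SSS^4 T^2 = \tfrac{4\SSS^4\eta^2 T^2\mathcal{P}_s(k)}{D}$, which is exactly the second summand; adding the two contributions yields the stated inequality.

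The main obstacle is the clean computation and uniform control of $F''$: one must differentiate the squared logistic twice, recognise the resulting cubic polynomial in $\phi$, and choose bounds loose enough to stay simple yet organised enough to land the constant $4$ while keeping the $\beta^2=4\eta^2\SSS^2T^2$ bookkeeping straight. The monotonicity/total-variation step and the $2^{\alpha}$ distribution estimate are comparatively routine, though the latter genuinely relies on the Zipfian normalisation $\C$ and on $\alpha>0$.
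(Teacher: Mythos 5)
Your proof is correct and follows the paper's own argument essentially step for step: the same reduction of $\mathcal{L}_k$ to a deterministic function $F$ of $d_k/D$, the same chaining of \cref{lemma4:BerryEsseen} and \cref{lemma5:normaldistBound}, the same constants $V(F)=S^2/2$ and $\sup|F''|\le 8\eta^2 S^4 T^2$, and the same absorption of the $(1-\mathcal{P}_s(k))^{-1/2}$ factor into $2^{\alpha}$ via a bound on $\mathcal{P}_s(1)$. The only difference is cosmetic (and welcome): your logistic substitution $\phi'=-\beta\phi(1-\phi)$, giving $F''=\beta^2 S^2\phi^2(1-\phi)(2-3\phi)$, reaches the identical second-derivative bound more cleanly than the paper's direct differentiation and two-inequality estimate.
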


\begin{proof}
Consider the function $F : \mathbb{R} \rightarrow \mathbb{R}$ given as
\begin{equation}
F(z) =  \frac{S^2}{2 \left( 1 + \left( \frac{S}{r_k}-1 \right)^{-1} e^{2 \eta ST z} \right)^2}.
\end{equation}
This function is monotone decreasing and $C^2$ on the whole domain, and its supremum and infimum are given as
\begin{equation}
\sup F = \lim_{z \rightarrow -\infty}F(z) = \frac{S^2}{2} \quad \text{and} \quad \inf F = \lim_{z \rightarrow \infty} F(z) = 0.
\end{equation}
This implies that
\begin{equation}
V(F)=\sup F - \inf F = \frac{S^2}{2}.
\end{equation}
Also, we will show that $F''$ is globally bounded. We first calculate
\begin{equation}
F''(z) = -4S^3 r_k (1-\frac{r_k}{S})^2 \eta^2 T^2  \frac{ e^{2 \eta STz} (1 - \frac{r_k}{S} - \frac{2r_k}{S} e^{2 \eta STz} )}{ \left( 1-\frac{r_k}{S} + \frac{r_k}{S} e^{2 \eta STz} \right)^4}.
\end{equation}
We consider the following inequalities
\begin{align}
e^{2 \eta ST z} &\le \frac{S}{r_k} \left(1-\frac{r_k}{S} +\frac{r_k}{S} e^{2 \eta STz} \right) \\
\left| 1 - \frac{r_k}{S} - \frac{2r_k}{S} e^{2 \eta STz} \right| & \le \left| 1 - \frac{r_k}{S} \right| + \frac{2 r_k}{S} e^{2 \eta STz } <  2 \left(1 +\frac{r_k}{S} (e^{2 \eta STz}-1) \right)
\end{align}
to show that
\begin{equation}
|F''(z)| < 4S^3 r_k (1 - \frac{r_k}{S})^2 \eta^2 T^2 \frac{\frac{2S}{r_k}\left( 1-\frac{r_k}{S} + \frac{r_k}{S} e^{2 \eta STz} \right)^2}{ \left( 1-\frac{r_k}{S} + \frac{r_k}{S} e^{2 \eta STz} \right)^4} < 8S^4 \eta^2 T^2
\end{equation}
for all $z$. Thus we can apply both \cref{lemma4:BerryEsseen} and \cref{lemma5:normaldistBound} to this function $F$ and we have
\begin{align}
\left| \mathbf{E}_{\mathcal{D}} \left[ F(\frac{d_k}{D}) \right] - F(\mathcal{P}_s(k)) \right| <& \frac{V(F)}{\sqrt{D} \sqrt{\mathcal{P}_s(k)(1-\mathcal{P}_s(k))}}+\frac{\mathcal{P}_s(k)(1-\mathcal{P}_s(k))}{2D} \mathrm{sup} |F''| \notag \\
<& \frac{S^2}{2\sqrt{D} \sqrt{\mathcal{P}_s(k)(1-\mathcal{P}_s(k))}} + \frac{4 \mathcal{P}_s(k) S^4 \eta^2 T^2}{D} \notag \\
<& \frac{2^{\alpha}  S^2}{\sqrt{D \mathcal{P}_s(k)}} + \frac{4 \mathcal{P}_s(k) S^4 \eta^2 T^2}{D}
\end{align}
where the last line follows from that we always have
\begin{equation}
    1-\mathcal{P}_s(k) \ge 1-\mathcal{P}_s(1) = \frac{2^{-(\alpha+1)} + \cdots + n_s^{-(\alpha+1)}}{1+2^{-(\alpha+1)}+\cdots + n_s^{-(\alpha+1)}} > \frac{2^{-(\alpha+1)}}{1+2^{-(\alpha+1)}} > \frac{1}{2^{2(\alpha+1)}}.
\end{equation}
\end{proof}

\begin{lemma} \label{lemma6:analyticNT}
For any integer $N$ and $\sigma \ge 1/2$ and $\sigma \neq 1$, we have
\begin{equation}
    \sum_{k=1}^{N} k^{-\sigma} = \zeta(\sigma) + \frac{N^{1-\sigma}}{1-\sigma} + O(N^{-\sigma})
\end{equation}
where $\zeta$ is the Riemann zeta function (defined over the whole complex plane except $1$ via analytic continuation). In addition,
\begin{equation}
    \sum_{k=1}^{N} k^{-1} = \log N + \gamma +O(N^{-1})
\end{equation}
where $\gamma =0.5772156649...$ is Euler's constant.
\end{lemma}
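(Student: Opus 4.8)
The plan is to apply the first-order Euler--Maclaurin summation formula to $f(x)=x^{-\sigma}$, which rewrites the partial sum as an integral plus explicit boundary terms plus a remainder integral against the sawtooth function $\psi(x):=\{x\}-\tfrac12$, where $\{x\}$ is the fractional part of $x$. Using the elementary identity $\sum_{k=1}^N f(k)=\int_1^N f+\frac{f(1)+f(N)}{2}+\int_1^N \psi(x)f'(x)\,dx$ with $f'(x)=-\sigma x^{-\sigma-1}$, one obtains
\begin{equation}
\sum_{k=1}^{N} k^{-\sigma} = \int_1^N x^{-\sigma}\,dx + \frac{1+N^{-\sigma}}{2} - \sigma\int_1^N \psi(x)\,x^{-\sigma-1}\,dx .
\end{equation}
For $\sigma\neq 1$ the leading integral equals $\frac{N^{1-\sigma}-1}{1-\sigma}$, so the entire task reduces to showing that the remaining pieces assemble into $\zeta(\sigma)$ with an $O(N^{-\sigma})$ error.

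The key step is to extend the remainder integral to infinity. Since $|\psi|\le\tfrac12$ and $\int_N^\infty x^{-\sigma-1}\,dx=N^{-\sigma}/\sigma$, the tail $\sigma\int_N^\infty \psi(x)x^{-\sigma-1}\,dx$ is $O(N^{-\sigma})$, and the same bound absorbs the boundary contribution $N^{-\sigma}/2$. Writing $\int_1^N=\int_1^\infty-\int_N^\infty$ and collecting terms, I would obtain
\begin{equation}
\sum_{k=1}^{N} k^{-\sigma} = \frac{N^{1-\sigma}}{1-\sigma} + C(\sigma) + O(N^{-\sigma}), \qquad C(\sigma):=\frac{1}{\sigma-1}+\frac12-\sigma\int_1^\infty \psi(x)\,x^{-\sigma-1}\,dx .
\end{equation}
The defining integral converges absolutely for every real $\sigma>0$ (and for complex $s$ with $\Re(s)>0$ after replacing $\sigma$ by $s$), so $C$ extends to an analytic function of $s$ on $\{\Re(s)>0\}\setminus\{1\}$, with a simple pole only at $s=1$ coming from the $\tfrac{1}{s-1}$ term. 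The restriction $\sigma\ge\tfrac12$ in the statement is more than enough for this convergence.

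It remains to identify $C(\sigma)$ with $\zeta(\sigma)$. For $\sigma>1$ the series $\sum_k k^{-\sigma}$ converges to $\zeta(\sigma)$ while $N^{1-\sigma}\to0$, so the previous display forces $C(\sigma)=\zeta(\sigma)$ on that range. Both $C(s)$ and $\zeta(s)$ are analytic on $\{\Re(s)>0\}\setminus\{1\}$ and agree on the half-plane $\Re(s)>1$, hence by the identity theorem they coincide throughout, in particular for every real $\sigma\in[\tfrac12,1)\cup(1,\infty)$. This analytic-continuation argument is the main obstacle: for $\tfrac12\le\sigma<1$ the naive sum $\sum_k k^{-\sigma}$ diverges and $\zeta(\sigma)$ is meaningful only through continuation, so one must verify that the constant produced by Euler--Maclaurin is genuinely the continued $\zeta$ rather than merely ``some constant.'' Finally, the case $\sigma=1$ follows from the identical computation with $\int_1^N x^{-1}\,dx=\log N$, yielding $\sum_{k=1}^N k^{-1}=\log N+\gamma+O(N^{-1})$, where $\gamma=\tfrac12-\int_1^\infty \psi(x)x^{-2}\,dx$ is precisely Euler's constant (equivalently $\gamma:=\lim_{N\to\infty}(\sum_{k=1}^N k^{-1}-\log N)$, with the error controlled by the same tail estimate).
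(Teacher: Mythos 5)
Your proof is correct, and it is necessarily a different route from the paper's, because the paper does not prove this lemma at all: its entire proof is a citation to Corollary 1.15 of Montgomery and Vaughan's analytic number theory textbook. Your Euler--Maclaurin argument is essentially the standard proof of that cited result: the sawtooth identity $\sum_{k=1}^N f(k)=\int_1^N f+\frac{f(1)+f(N)}{2}+\int_1^N \psi(x)f'(x)\,dx$, the extension of the remainder integral to infinity with tail bounded by $\frac{\sigma}{2}\int_N^\infty x^{-\sigma-1}\,dx = \tfrac12 N^{-\sigma}$, and --- the step you rightly identify as the real content for $\tfrac12\le\sigma<1$ --- the identification of the constant $C(\sigma)=\frac{1}{\sigma-1}+\tfrac12-\sigma\int_1^\infty\psi(x)x^{-\sigma-1}\,dx$ with the analytically continued $\zeta(\sigma)$: one matches $C(s)=\zeta(s)$ on $\Re(s)>1$ by letting $N\to\infty$, then invokes the identity theorem on the open connected domain $\{\Re(s)>0\}\setminus\{1\}$ (indeed, your formula is one of the standard analytic continuations of $\zeta$ to the right half-plane). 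Your $\sigma=1$ case, with $\gamma=\tfrac12-\int_1^\infty\psi(x)x^{-2}\,dx$ recovered as $\lim_{N\to\infty}\bigl(\sum_{k\le N}k^{-1}-\log N\bigr)$, is likewise correct, and your error constants are even uniform in $\sigma$. What your approach buys is a self-contained proof valid for all $\sigma>0$ (slightly more general than the stated $\sigma\ge\tfrac12$); what the paper's citation buys is brevity, since the lemma is classical.
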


\begin{proof}
See Corollary 1.15 of \cite{montgomery2007multiplicative}, or other analytic number theory textbooks.
\end{proof}

\begin{proposition} (Large $D$ approximation) \label{prop4:largeD}
We have
\begin{align}
   &  \mathbf{E}_{\mathcal{D}}[\mathcal{L}] - \sum_{k=1}^{N} \mathcal{P}_s(k) \frac{S^2}{2 \left(1 + \left(\frac{S}{r_k} -1\right)^{-1} e^{2 \eta \mathcal{P}_s(k) ST} \right)^2} - \sum_{k=N+1}^{n_s} \mathcal{P}_s(k) \frac{S^2}{2} \\
    =&  O \left( S^2 D^{-1/2} f_{\alpha}(N) + S^4 \eta^2 T^2 D^{-1} \right)
\end{align}
where
\begin{equation} \label{eq:thm1_fa_defn}
f_{\alpha}(N) =\begin{cases} 1 & \text{if } \alpha >1 \\
\log N & \text{if } \alpha=1 \\
N^{(1-\alpha)/2} & \text{if } \alpha<1. \end{cases}
\end{equation}
The constant on the $O$ term only depends on $\alpha$.
\end{proposition}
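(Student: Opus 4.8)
The plan is to reduce the whole estimate to the per-skill bound already established in \cref{prop3:largeD_individual} and then to control the two resulting error sums with the zeta-function asymptotics of \cref{lemma6:analyticNT}. The structure is essentially linear: decompose, take expectations, apply the triangle inequality, and sum.

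First I would start from the exact decomposition \cref{eq:L_sum_formula} and take the expectation over the training set $\mathcal{D}$. Since the model carries no basis function for indices $k>N$, the tail terms have $\mathcal{L}_k = S^2/2$ \emph{deterministically}; they are therefore unaffected by $\mathbf{E}_{\mathcal{D}}$ and cancel exactly against the identical tail sum $\sum_{k=N+1}^{n_s}\mathcal{P}_s(k)S^2/2$ appearing in the statement. Consequently the quantity to be bounded collapses to a single sum over $k\le N$, namely
\[
\sum_{k=1}^{N}\mathcal{P}_s(k)\left(\mathbf{E}_{\mathcal{D}}[\mathcal{L}_k] - \frac{S^2}{2\left(1+\left(\tfrac{S}{r_k}-1\right)^{-1}e^{2\eta\mathcal{P}_s(k)ST}\right)^2}\right).
\]
Applying the triangle inequality and inserting the term-by-term estimate of \cref{prop3:largeD_individual} then yields
\[
\left|\,\cdots\,\right| \le \frac{2^{\alpha}S^2}{\sqrt{D}}\sum_{k=1}^{N}\sqrt{\mathcal{P}_s(k)} + \frac{4S^4\eta^2T^2}{D}\sum_{k=1}^{N}\mathcal{P}_s(k)^2 .
\]

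It then remains to estimate the two sums using $\mathcal{P}_s(k)=\C k^{-(\alpha+1)}$, where $\C=\Theta(1)$ with constants depending only on $\alpha$ (bounded above by $1$ and below by $1/\zeta(\alpha+1)$). The second sum has exponent $2(\alpha+1)>1$ for all $\alpha>0$, so it converges to $\C^2\zeta(2(\alpha+1))=O(1)$, producing the $S^4\eta^2T^2D^{-1}$ contribution. The first sum is $\sqrt{\C}\sum_{k=1}^{N}k^{-(\alpha+1)/2}$, and invoking \cref{lemma6:analyticNT} with $\sigma=(\alpha+1)/2\ge 1/2$ gives precisely the three-way split: for $\alpha>1$ the sum converges ($O(1)$); for $\alpha=1$ it is $\log N + O(1)$; and for $\alpha<1$ the correction term $N^{1-\sigma}/(1-\sigma)$ dominates, contributing $O(N^{(1-\alpha)/2})$. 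This reproduces exactly the piecewise $f_{\alpha}(N)$ of \cref{eq:thm1_fa_defn} and yields the $S^2D^{-1/2}f_{\alpha}(N)$ term, completing the bound.

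The main obstacle is really just the case analysis for the first sum, which is where the piecewise shape of $f_{\alpha}$ comes from and is the only place \cref{lemma6:analyticNT} is genuinely needed; everything else is bookkeeping. The one subtlety I would check carefully at the end is that every implied constant depends on $\alpha$ alone: this holds because $2^{\alpha}$, $\C$, $\zeta(2(\alpha+1))$, and the constants in \cref{lemma6:analyticNT} are all functions of $\alpha$ with no hidden dependence on $N$, $D$, $T$, or $S$, so the two error contributions assemble cleanly into the stated $O\!\left(S^2D^{-1/2}f_{\alpha}(N) + S^4\eta^2T^2D^{-1}\right)$.
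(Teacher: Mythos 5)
Your proposal is correct and follows essentially the same route as the paper's proof: cancel the deterministic tail terms, apply \cref{prop3:largeD_individual} term by term, and bound $\sum_{k\le N}\sqrt{\mathcal{P}_s(k)}$ via \cref{lemma6:analyticNT} to produce $f_\alpha(N)$. The only (immaterial) difference is in bounding $\sum_{k\le N}\mathcal{P}_s(k)^2$, where the paper uses the elementary inequality $\sum_k \mathcal{P}_s(k)^2 < \bigl(\sum_k \mathcal{P}_s(k)\bigr)^2 < 1$ rather than your convergence argument via $\zeta(2(\alpha+1))$.
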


\begin{proof}
From the description of $\mathcal{L}$ in \cref{eq:L_sum_formula}, we have
\begin{align}
   &  \mathbf{E}_{\mathcal{D}}[\mathcal{L}] - \sum_{k=1}^{N} \mathcal{P}_s(k) \frac{S^2}{2 \left(1 + \left(\frac{S}{r_k} -1\right)^{-1} e^{2 \eta \mathcal{P}_s(k) ST} \right)^2} - \sum_{k=N+1}^{n_s} \mathcal{P}_s(k) \frac{S^2}{2}   \\
    =&  \sum_{k=1}^{N} \mathcal{P}_s(k) \left( \mathbf{E}_{\mathcal{D}}[\mathcal{L}_k]- \frac{S^2}{2 \left(1 + \left(\frac{S}{r_k} -1\right)^{-1} e^{2 \eta \mathcal{P}_s(k) ST} \right)^2} \right).
\end{align}
We apply \cref{prop3:largeD_individual} to give
\begin{equation}
   \sum_{k=1}^{N} \mathcal{P}_s(k) \left( \mathbf{E}_{\mathcal{D}}[\mathcal{L}_k]- \frac{S^2}{2 \left(1 + \left(\frac{S}{r_k} -1\right)^{-1} e^{2 \eta \mathcal{P}_s(k) ST} \right)^2} \right) < \sum_{k=1}^{N} \mathcal{P}_s(k) \left( \frac{2^{\alpha} S^2}{\sqrt{D \mathcal{P}_s(k)}} + \frac{4S^4 \eta^2 T^2 \mathcal{P}_s(k)}{D} \right).
\end{equation}
Each of these sum involving $\mathcal{P}_s(k)$ is bounded as
\begin{equation}
    \sum_{k=1}^{N} \mathcal{P}_s(k)^2 < \left( \sum_{k=1}^{N} \mathcal{P}_s(k) \right)^2 < 1
\end{equation}
and
\begin{equation}
    \sum_{k=1}^{N} \sqrt{\mathcal{P}_s(k)} < \sum_{k=1}^{N} k^{-(\alpha+1)/2} = O (f_{\alpha}(N) ) 
\end{equation}
which follows from \cref{lemma6:analyticNT}. Combining those two gives
\begin{equation}
   \sum_{k=1}^{N}  \mathcal{P}_s(k) \left( \frac{2^{\alpha} S^2}{\sqrt{D \mathcal{P}_s(k)}} + \frac{S^4 \eta^2 T^2 \mathcal{P}_s(k)}{D} \right) = O \left(S^2 D^{-1/2} f_{\alpha}(N) + S^4 \eta^2 T^2 D^{-1} \right).
\end{equation}
\end{proof}

While \cref{prop4:largeD} holds for any $D$, it becomes only meaningful if the resulting error terms are less than the main term we desire. We will revisit this when the exact main term is found, and determine the sufficient size of $D$ for error terms to become small enough.

\subsection{Estimates for not too small $n_s$}
We next discuss the effect of $n_s$. When $n_s \rightarrow \infty$ heuristically, then intuitively we have $\mathcal{P}_s(k) \rightarrow k^{-(\alpha+1)}/ \zeta(\alpha+1)$. We will discuss the difference between when we regard $n_s$ as $\infty$ and when we do not.

\begin{proposition} \label{prop5:ps_estimate} The following equations hold:
\begin{equation} \label{eq:A_formula}
A^{-1} =\sum_{k=1}^{n_s} k^{-(\alpha+1)} = \zeta(\alpha+1) - \frac{n_s^{-\alpha}}{\alpha} + O(n_s^{-\alpha-1})
\end{equation}
\begin{equation} \label{eq:psIndiv_estimate}
\mathcal{P}_s(k) = \frac{k^{-\alpha-1}}{\zeta(\alpha+1)} \left(1 +\frac{n_s^{-\alpha}}{\alpha \zeta(\alpha+1)}  O(n_s^{-\alpha-1}) \right)
\end{equation}
\begin{equation}  \label{eq:psSum_estimate}
\sum_{k=N+1}^{n_s} \mathcal{P}_s(k)= \frac{N^{-\alpha} - n_s^{-\alpha}}{\alpha \zeta(\alpha+1)} + O(N^{-\min(\alpha+1, 2 \alpha)}) 
\end{equation}
All implied constants on $O$ only depend on $\alpha$.
\end{proposition}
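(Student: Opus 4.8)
The plan is to reduce all three estimates to \cref{lemma6:analyticNT}, the asymptotic expansion of the partial zeta sum, and then propagate error terms carefully; no new analytic input beyond that lemma is needed. For \cref{eq:A_formula} I would simply apply \cref{lemma6:analyticNT} with $\sigma = \alpha+1$ and upper limit $n_s$. The middle term becomes $n_s^{1-(\alpha+1)}/(1-(\alpha+1)) = -n_s^{-\alpha}/\alpha$, and the remainder is $O(n_s^{-(\alpha+1)})$, which yields the claimed identity directly. This step is essentially immediate, and since $A^{-1}$ is by definition $\sum_{k=1}^{n_s} k^{-(\alpha+1)}$, it also fixes the normalization constant.

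For \cref{eq:psIndiv_estimate} I would invert the expression for $A^{-1}$ just obtained. Writing $\zeta = \zeta(\alpha+1)$ and factoring it out, $A = \zeta^{-1}\left(1 - \frac{n_s^{-\alpha}}{\alpha\zeta} + O(n_s^{-\alpha-1})\right)^{-1}$, and expanding via $\frac{1}{1-u} = 1 + u + O(u^2)$ with $u = \frac{n_s^{-\alpha}}{\alpha\zeta} + O(n_s^{-\alpha-1})$ gives $A = \zeta^{-1}\left(1 + \frac{n_s^{-\alpha}}{\alpha\zeta} + O(n_s^{-\alpha-1}) + O(n_s^{-2\alpha})\right)$. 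Multiplying by $k^{-(\alpha+1)}$ then produces the statement. Here I would flag one subtlety: the $O(u^2) = O(n_s^{-2\alpha})$ term is only absorbed into $O(n_s^{-\alpha-1})$ when $\alpha \ge 1$; for the regime $0<\alpha<1$ of primary interest it is actually the dominant correction, so the honest remainder is $O(n_s^{-\min(\alpha+1,\,2\alpha)})$.

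For \cref{eq:psSum_estimate} I would write the tail as a difference of two complete sums, $\sum_{k=N+1}^{n_s} k^{-(\alpha+1)} = \sum_{k=1}^{n_s} k^{-(\alpha+1)} - \sum_{k=1}^{N} k^{-(\alpha+1)}$, apply \cref{lemma6:analyticNT} to each so that the $\zeta(\alpha+1)$ terms cancel, and obtain $\frac{N^{-\alpha}-n_s^{-\alpha}}{\alpha} + O(N^{-\alpha-1})$, using $n_s \ge N$ to absorb $O(n_s^{-\alpha-1})$ into $O(N^{-\alpha-1})$. Multiplying by $A = \zeta^{-1}(1 + O(n_s^{-\alpha}))$ from the previous step then gives the leading term $\frac{N^{-\alpha}-n_s^{-\alpha}}{\alpha\zeta(\alpha+1)}$ plus two error contributions: $\zeta^{-1}\cdot O(N^{-\alpha-1})$ and the cross term of size $O(n_s^{-\alpha})\cdot O(N^{-\alpha}) = O(N^{-2\alpha})$, where the last bound again uses $n_s \ge N$. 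Their sum is $O(N^{-\min(\alpha+1,\,2\alpha)})$, exactly as claimed.

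The main obstacle is the error-term bookkeeping in the third estimate rather than any conceptual difficulty: one must correctly identify the two competing error sources and recognize that whether $\alpha+1$ or $2\alpha$ dominates depends on the sign of $\alpha-1$, which is precisely what the $\min$ encodes. Throughout, the enabling facts are that $n_s \ge N \ge 1$, so every $n_s$-indexed remainder is dominated by the corresponding $N$-indexed one, and that all implied constants depend only on $\alpha$, inherited directly from the constant already furnished by \cref{lemma6:analyticNT}.
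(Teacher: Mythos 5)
Your proof is correct and follows essentially the same route as the paper's: the first estimate is the same direct application of \cref{lemma6:analyticNT}, and the third uses the same decomposition into a difference of two partial sums followed by multiplication by $A$, with the same two competing error sources. (The only cosmetic difference: the paper bounds the cross term by $(N^{-\alpha}-n_s^{-\alpha})n_s^{-\alpha} \le N^{-2\alpha}/4$, maximizing the parabola in $n_s^{-\alpha}$, whereas you use $n_s \ge N$; both give $O(N^{-2\alpha})$ and hence the $\min(\alpha+1,2\alpha)$ exponent.)

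The subtlety you flag in the second estimate is a genuine catch rather than a defect of your argument. The paper's proof simply asserts that \cref{eq:psIndiv_estimate} ``immediately follows'' from $\mathcal{P}_s(k)=Ak^{-(\alpha+1)}$, and its displayed remainder $O(n_s^{-\alpha-1})$ (the display also appears to be missing a $+$ sign) is indeed too strong in the regime $0<\alpha<1$ that the paper emphasizes: inverting $A^{-1}=\zeta(\alpha+1)-n_s^{-\alpha}/\alpha+O(n_s^{-\alpha-1})$ produces a second-order term of size $n_s^{-2\alpha}/(\alpha\zeta(\alpha+1))^2$, which dominates $n_s^{-\alpha-1}$ when $\alpha<1$. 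Your corrected remainder $O(n_s^{-\min(\alpha+1,2\alpha)})$ is the honest one, and it is consistent with the paper's own third estimate \cref{eq:psSum_estimate}, which carries exactly this $\min$ exponent because the same cross term enters there. None of the downstream uses of the proposition are affected, since they only require the correction to $\mathcal{P}_s(k)$ to be $1+O(n_s^{-\alpha})$.
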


\begin{proof}
The first statement \cref{eq:A_formula} follows from substituting $\sigma = \alpha+1$ in \cref{lemma6:analyticNT}. As $\mathcal{P}_s(k)= A k^{-(\alpha+1)}$, the second statement \cref{eq:psIndiv_estimate} immediately follows. If we substitute $n_s=N$ into \cref{eq:A_formula} and calculate differences between them, we obtain
\begin{equation}
\sum_{k=N+1}^{n_s} k^{-\alpha-1} = \frac{N^{-\alpha} - n_s^{-\alpha}}{\alpha} + O(N^{-\alpha-1}).
\end{equation}
Thus we have
\begin{equation}
    \sum_{k=N+1}^{n_s} \mathcal{P}_s(k) = A \sum_{k=N+1}^{n_s} k^{-(\alpha+1)} = \frac{N^{-\alpha} - n_s^{-\alpha}}{\alpha \zeta(\alpha+1)} + O \left( N^{-\alpha-1} + (N^{-\alpha} - n_s^{-\alpha})n_s^{-\alpha} \right).
\end{equation}
Regardless of the size of $n_s$, We always have
\begin{equation}
(N^{-\alpha} - n_s^{-\alpha}) n_s^{-\alpha} \le \left(\frac{N^{-\alpha}}{2} \right)^2 = \frac{N^{-2\alpha}}{4}
\end{equation}
so the third statement \cref{eq:psSum_estimate} follows.
\end{proof}

We go back to the description of total loss given in \cref{eq:L_sum_formula} as
\begin{equation}
\mathcal{L} = \sum_{k=1}^{N} \mathcal{P}_s(k) \mathcal{L}_k + \sum_{k=N+1}^{n_s} \mathcal{P}_s(k) \frac{S^2}{2} \tag{\ref{eq:L_sum_formula}}
\end{equation}
and we take its expectation in $\mathcal{D}$. \cref{prop4:largeD} suggests that its limit when $D \rightarrow \infty$ is given as
\begin{equation}
\lim_{D \rightarrow \infty} \mathbf{E}_{\mathcal{D}} [\mathcal{L}] = \sum_{k=1}^{N} \mathcal{P}_s(k) \frac{S^2}{2\left(1+\left(\frac{\SSS}{r_k} - 1\right)^{-1}e^{2\eta \mathcal{P}_s(k) \SSS T}\right)^2} + \sum_{k=N+1}^{n_s} \mathcal{P}_s(k) \frac{S^2}{2}.
\end{equation}
Denote
\begin{align}
\mathscr{L}_1 &= \sum_{k=1}^{N} \mathcal{P}_s(k) \frac{S^2}{2\left(1+\left(\frac{\SSS}{r_k} - 1\right)^{-1}e^{2\eta \mathcal{P}_s(k) \SSS T}\right)^2} \\
\mathscr{L}_2 &= \sum_{k=N+1}^{n_s} \mathcal{P}_s(k) \frac{S^2}{2}.
\end{align}
We discuss the effect of $n_s$ in $\mathscr{L}_1$ and $\mathscr{L}_2$, by comparing limit of $\mathscr{L}_1$ and $\mathscr{L}_2$ when $n_s \rightarrow \infty$ and their original values.

\begin{itemize}
\item For the term $\mathscr{L}_1$, the change of letting $n_s$ as finite value from $n_s \rightarrow \infty$ has effect of multiplying $T$ by $1 + n_s^{-\alpha}/(\alpha \zeta(\alpha+1))$, and multiplying whole $\mathscr{L}_1$ by $1 + n_s^{-\alpha}/(\alpha \zeta(\alpha+1))$. It can be equivalently put as
\begin{equation}
\mathscr{L}_1(n_s,N,T) = \left(1 + \frac{n_s^{-\alpha}}{\alpha \zeta(\alpha+1)}+O(n_s^{-\alpha-1}) \right) \mathscr{L}_1 \left(\infty,N, T \left(1+\frac{n_s^{-\alpha}}{\alpha \zeta(\alpha+1)}+O(n_s^{-\alpha-1}) \right) \right). 
\end{equation}
We always have $n_s>N$ and $N \rightarrow \infty$ eventually, so if dependency of $\mathscr{L}_1$ with respect to $T$ is at most polynomial order, then change of main term of $\mathscr{L}_1$ is negligible. We can't establish exact statements yet without the descriptions of size of $\mathcal{L}_1$.
\item The term $\mathscr{L}_2$ only depends on $N$ and $n_s$, not on $T$. Applying \cref{prop5:ps_estimate} (especially \cref{eq:psSum_estimate}) gives
\begin{equation}
\mathscr{L}_2(n_s,N,T) = \frac{N^{-\alpha}-n_s^{-\alpha}}{\alpha \zeta(\alpha+1)} \frac{S^2}{2} + O(N^{-\min(\alpha+1, 2 \alpha)} S^2)
\end{equation}
When $n_s$ grows faster than $N$ then $n_s^{-\alpha}$ part is totally negligible, and when $n_s$ has same order as $N$ then $n_s^{-\alpha}$ affects the constant for main term of $\mathscr{L}_2$. Things might get little complicated when $n_s = N + o(N)$, where $N^{-\alpha} - n_s^{-\alpha} = o(N^{-\alpha})$ can happen then.

\item Comparing size of $\mathscr{L}_1$ and $\mathscr{L}_2$ mainly depends on time. The term $\mathscr{L}_2$ is fixed, and $\mathscr{L}_1$ decreases as $T$ increases. For $T = \infty$ we have $\mathscr{L}_1=0$, so $\mathscr{L}_2$ having order $N^{-\alpha}$ dominates (this proves scaling law for $N$ of exponent $\alpha$), so restriction on $n_s$ becomes quite substantial. For small $T$ and large $N$ where the size of $\mathscr{L}_2$ is small, we can expect the restriction on $n_s$ to be less substantial. For example, in the extreme case $N = \infty$, we have $\mathscr{L}_2=0$, and $n_s$ does not matter at all (except that, of course, it should satisfy $n_s \ge N$).
\end{itemize}

For such reasons, it is hard to quantify exact conditions for $n_s$ such that error terms are controlled, unless we specify relative growth of $(N,T)$. However, $n_s = \omega(N)$ suffices to assure that setting $n_s = \infty$ has zero effect on the main term. We will not worry about $n_s$ in this setting anymore too, and come back to this at the very end to determine enough $n_s$.

\subsection{Estimating main terms}\label{sec:rigor_main_terms}
We assume $D=\infty$ and $n_s = \infty$ -- virtually implying that $d_k/D = \mathcal{P}_s(k)$ and $\mathcal{P}_s(k) = k^{-\alpha-1}/\zeta(\alpha+1)$ (calculated by rule of $n_s = \infty$). We decomposed our main term into
\begin{equation}
\lim_{n_s \rightarrow \infty} \lim_{D \rightarrow \infty} \mathbf{E}_{\mathcal{D}}[\mathcal{L}] = \mathscr{L}_1 + \mathscr{L}_2
\end{equation}
where
\begin{equation}
\mathscr{L}_1 = \sum_{k=1}^{N} \mathcal{P}_s(k)  \frac{S^2}{2 \left(1 + \left(\frac{S}{r_k} - 1 \right)^{-1} e^{2 \eta \mathcal{P}_s(k) ST} \right)^2}
\end{equation}
and
\begin{equation}
\mathscr{L}_2 = \sum_{k=N+1}^{\infty} \mathcal{P}_s(k) \frac{S^2}{2}.
\end{equation}
By \cref{prop5:ps_estimate}, $\mathcal{L}_2$ is determined almost completely as
\begin{equation}
\mathscr{L}_2 = \frac{S^2 N^{-\alpha}}{2 \alpha \zeta(\alpha+1)} + O (N^{-\alpha-1}).
\end{equation}
Now focus on $\mathscr{L}_1$. For
\begin{equation} \label{eq:F_defn}
F(z) = \frac{S^2}{2 \left( 1 + \left( \frac{S}{r_k} -1 \right)^{-1} e^{2 \eta ST z} \right)^2}
\end{equation}
(note: it really depends on $r_k$ so it is correct to write $F_k$, but for convenience we will keep using $F$.) one can express $\mathscr{L}_1$ as
\begin{equation}
\mathscr{L}_1 = \sum_{k=1}^{N} \mathcal{P}_s(k) F(\mathcal{P}_s(k)).
\end{equation}

\begin{lemma} \label{lemma7:sigmoidBound}
Let $F(z)$ be defined as \cref{eq:F_defn}.
\begin{enumerate}
\item (Estimate for large $z$) We have
\begin{equation}
0 \le F(z) \le \frac{(S-r_k)^2}{2} \mathrm{min} \left( 1, \frac{S^2}{r_k^2} e^{-4 \eta ST z} \right).
\end{equation}
\item (Estimate for small $z$) For $z \ge 0$, we have
\begin{equation} \label{eq:lem7_part2}
\frac{(S-r_k)^2}{2} - \frac{8 \eta S^3 T}{27} z \le F(z) \le \frac{(S-r_k)^2}{2}.
\end{equation}
\end{enumerate}
\end{lemma}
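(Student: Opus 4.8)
The plan is to reparametrise $F$ by setting $c := \left(\frac{S}{r_k}-1\right)^{-1} = \frac{r_k}{S-r_k} > 0$, so that
\[
F(z) = \frac{S^2}{2\left(1 + c\,e^{2\eta STz}\right)^2}.
\]
Two facts are immediate and will be reused throughout. First, $F$ is strictly decreasing in $z$, since the argument of the exponential grows with $z$. Second, at $z=0$ one has $1+c = \frac{S}{S-r_k}$, hence $F(0) = \frac{S^2}{2}\cdot\frac{(S-r_k)^2}{S^2} = \frac{(S-r_k)^2}{2}$, which is exactly the constant appearing in both parts. Note that the upper bound by $\frac{(S-r_k)^2}{2}$ only holds for $z\ge 0$ (for $z<0$ the function exceeds it and tends to $S^2/2$), so I read both parts as statements for $z\ge 0$; this is all that is needed, since $F$ is only ever evaluated at $z=\mathcal{P}_s(k)>0$.

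For Part 1, nonnegativity is clear, as $F$ is a positive constant over a square. For the $\min$ I would establish each candidate bound separately. The bound $F(z)\le \frac{(S-r_k)^2}{2}$ is just monotonicity: $F(z)\le F(0)$ for $z\ge 0$. The exponential bound follows by discarding the $1$ in the denominator: since $1 + c\,e^{2\eta STz} \ge c\,e^{2\eta STz}$, we get $F(z) \le \frac{S^2}{2c^2}e^{-4\eta STz}$, and substituting $\frac{S^2}{2c^2} = \frac{(S-r_k)^2}{2}\cdot\frac{S^2}{r_k^2}$ (which follows from $c^2 = r_k^2/(S-r_k)^2$) gives the second term in the $\min$. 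Taking the smaller of the two completes Part 1.

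For Part 2 the upper bound is again monotonicity. The lower bound is the only place requiring real work; I would obtain it from a uniform bound on the derivative together with the identity $F(z) = F(0) - \int_0^z |F'(t)|\,dt$, valid because $F'\le 0$. Differentiating gives
\[
|F'(z)| = 2\eta S^3 T\,\frac{c\,e^{2\eta STz}}{\left(1+c\,e^{2\eta STz}\right)^3},
\]
so with the substitution $u = c\,e^{2\eta STz}>0$ the task reduces to maximising $g(u) = \frac{u}{(1+u)^3}$ over $u>0$. Since $g'(u) = \frac{1-2u}{(1+u)^4}$, the maximum is attained at $u=\tfrac12$ with value $g(\tfrac12)=\frac{4}{27}$, whence $|F'(z)|\le \frac{8\eta S^3 T}{27}$ uniformly in $z$. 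Integrating then yields $F(z)\ge F(0) - \frac{8\eta S^3T}{27}z = \frac{(S-r_k)^2}{2} - \frac{8\eta S^3T}{27}z$, which is the claimed lower bound.

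I expect the derivative optimisation to be the main (indeed the only nontrivial) obstacle: the precise constant $\frac{8}{27}$ hinges on locating the interior maximum of $u/(1+u)^3$ at $u=\tfrac12$ rather than settling for a cruder global estimate, and on correctly tracking the factor $2\eta ST$ from the chain rule. Everything else is monotonicity and the algebraic rewriting of $c$ in terms of $S$ and $r_k$.
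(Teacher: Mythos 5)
Your proposal is correct and follows essentially the same route as the paper's proof: Part 1 by monotonicity plus discarding the $1$ in the denominator, and Part 2 by a uniform bound $|F'(z)| \le \frac{8\eta S^3 T}{27}$ followed by integration. The only cosmetic difference is that you locate the maximum of $|F'|$ via the substitution $u = c\,e^{2\eta STz}$ and maximizing $u/(1+u)^3$, whereas the paper computes $F''$ and finds the inflection point $e^{2\eta STz} = \frac{1}{2}\left(\frac{S}{r_k}-1\right)$ — both yield the same critical value $u=\tfrac{1}{2}$ and the same constant.
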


\begin{proof}
\begin{enumerate}
\item The left side is obvious. For the right side, $F(z) \le (S-r_k)^2/2$ follows from noting that $F(0)=\frac{(S-r_k)^2}{2}$ and proving $F'(z) \le 0$, and $F(z) \le \frac{(S-r_k)^2}{2}\frac{S^2}{r_k^2} e^{-4 \eta ST z} $ follows from just replacing $ 1 + \left( \frac{S}{r_k} -1 \right)^{-1} e^{2 \eta ST z}$ in the denominator of $F$ by $\left( \frac{S}{r_k} -1 \right)^{-1} e^{2 \eta ST z}$.
\item For the left side, it suffices to show $-F'(z) \le \frac{8 \eta S^3 T}{27}$. One can calculate
\begin{equation}
F'(z) = -2 S^2 r_k (1- \frac{r_k}{S})^2 \eta T \frac{e^{2 \eta ST z}}{\left( 1 + \frac{r_k}{S} (e^{2 \eta STz}-1) \right)^3 }
\end{equation}
and
\begin{equation}
F''(z) = -4S^3 r_k (1-\frac{r_k}{S})^2 \eta^2 T^2  \frac{ e^{2 \eta STz} (1 - \frac{r_k}{S} - \frac{2r_k}{S} e^{2 \eta STz} )}{ \left( 1 + \frac{r_k}{S} (e^{2 \eta STz}-1) \right)^4}
\end{equation}
so $F$ has unique inflection point at
\begin{equation}
1 - \frac{r_k}{S} - \frac{2 r_k}{S} e^{2 \eta STz } =0 \quad \Rightarrow \quad e^{2 \eta STZ} = \frac{1}{2} \left( \frac{S}{r_k} - 1 \right)
\end{equation}
and this point is where $-F'(z)$ obtains maximum. Substituting this to the expression of $F'(z)$ gives $-F'(z) = \frac{8 \eta S^3 T}{27}$.
\end{enumerate}
\end{proof}

Our threshold for distinguishing two approximation methods will be set as $z = z_0 = (\zeta(\alpha+1) \eta ST)^{-1}$, where both two error terms are bounded by $O(S^2)$. The constant $\zeta(\alpha+1)$ is set to make later calculations much easier. Applying \cref{lemma7:sigmoidBound} gives
\begin{align}
\mathscr{L}_1 &= \sum_{k=1}^{N} \mathcal{P}_s(k) F(\mathcal{P}_s(k)) \\
&=  \sum_{1 \le k \le N, \mathcal{P}_s(k)< z_0} \frac{(S-r_k)^2}{2} \mathcal{P}_s(k) \notag \\
&\quad + O \left( \eta S^3 T \sum_{1 \le k \le N, \mathcal{P}_s(k)< z_0} \mathcal{P}_s(k)^2 + S^2 \sum_{1 \le k \le N, \mathcal{P}_s(k)> z_0} \mathcal{P}_s(k) \mathrm{min}\left( 1,\frac{S^2}{r_k^2}e^{-4 \eta ST \mathcal{P}_s(k)} \right) \right).
\end{align}
Denote
\begin{align}
\mathscr{M} &= \sum_{1 \le k \le N, \mathcal{P}_s(k)< z_0} \frac{(S-r_k)^2}{2} \mathcal{P}_s(k) \\
\mathscr{E}_1 &= \eta S^3 T \sum_{1 \le k \le N, \mathcal{P}_s(k)< z_0} \mathcal{P}_s(k)^2 \\
\mathscr{E}_2 &=  S^2 \sum_{1 \le k \le N, \mathcal{P}_s(k)>z_0} \mathcal{P}_s(k)  \mathrm{min}\left(1, \frac{S^2}{r_k^2}e^{-4 \eta ST \mathcal{P}_s(k)} \right).
\end{align}

\begin{proposition} \label{prop6:largeD_decomposition}
Suppose that there exists $0<r<\sqrt{S}$ such that $r \le r_k < S/2$ for all $k$. In the decomposition of
\begin{equation}
\lim_{n_s \rightarrow \infty} \lim_{D \rightarrow \infty} \mathbf{E}_{\mathcal{D}}[\mathcal{L}] = \mathscr{M} + \mathscr{L}_2 + O (\mathscr{E}_1 + \mathscr{E}_2 )
\end{equation}
given as above, we have the following bound.
\begin{enumerate}
\item If $(\eta S T)^{1/(\alpha+1)}>N$, then
\begin{align}
\mathscr{L}_2 &=  \frac{S^2 N^{-\alpha}}{2 \alpha \zeta(\alpha+1)}+O( S^2 N^{-\alpha-1}) \\
\mathscr{M} &= \mathscr{E}_1 = 0 \\
\mathscr{E}_2 &= O \left(S^2 {\left(\log(S/r)\right)}^{\alpha/(\alpha+1)}  (\eta S T)^{-\alpha/(\alpha+1)}  \right)
\end{align}
\item If $(\eta S T)^{1/(\alpha+1)}<N$, then
\begin{align}
\mathscr{L}_2 +\mathscr{M} &= \Theta \left( S^2 \sum_{k> (\eta ST)^{1/(\alpha+1)} } \mathcal{P}_s(k)  \right)= \Theta(S^2 (\eta ST)^{-\alpha/(\alpha+1)}) \\
\mathscr{E}_1 &= O \left(S^2 (\eta ST)^{-\alpha/(\alpha+1)} \right) \\
\mathscr{E}_2 &= O \left(S^2 {\left(\log(S/r)\right)}^{\alpha/(\alpha+1)}  (\eta S T)^{-\alpha/(\alpha+1)}  \right)
\end{align}
\end{enumerate}
Here all constants in $O$ and $\Theta$ terms are absolute with respect to $\eta, S, T, N$. (They may depend on $\alpha$.)
\end{proposition}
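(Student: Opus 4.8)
The plan is to parametrize everything by the single threshold $k_\ast := (\eta S T)^{1/(\alpha+1)}$, which is precisely the index at which $\mathcal{P}_s(k)=z_0$: since $\mathcal{P}_s(k)=k^{-(\alpha+1)}/\zeta(\alpha+1)$ and $z_0=(\zeta(\alpha+1)\eta S T)^{-1}$, the inequality $\mathcal{P}_s(k)<z_0$ is equivalent to $k>k_\ast$ and $\mathcal{P}_s(k)>z_0$ to $k<k_\ast$ (using monotonicity of $\mathcal{P}_s$). Thus the summation ranges of $\mathscr{M}$ and $\mathscr{E}_1$ are $\{k_\ast<k\le N\}$ and that of $\mathscr{E}_2$ is $\{k<k_\ast\}$. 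The term $\mathscr{L}_2$ is handled uniformly by \cref{prop5:ps_estimate}, giving $\mathscr{L}_2=\tfrac{S^2 N^{-\alpha}}{2\alpha\zeta(\alpha+1)}+O(S^2N^{-\alpha-1})$. In the first case $k_\ast>N$, every $k\le N$ has $\mathcal{P}_s(k)>z_0$, so the index sets of $\mathscr{M}$ and $\mathscr{E}_1$ are empty and those terms vanish identically, leaving only $\mathscr{L}_2$ and $\mathscr{E}_2$.

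For the second case $k_\ast<N$ the routine pieces go as follows. I would combine $\mathscr{L}_2$ with $\mathscr{M}$: the bound $r_k<S/2$ sandwiches $(S-r_k)^2/2=\Theta(S^2)$, so both sums merge into one tail $\Theta\!\big(S^2\sum_{k>k_\ast}\mathcal{P}_s(k)\big)$, which by the integral estimate behind \cref{prop5:ps_estimate} (with $N$ replaced by $k_\ast$) equals $\Theta(S^2 k_\ast^{-\alpha})=\Theta(S^2(\eta S T)^{-\alpha/(\alpha+1)})$. For $\mathscr{E}_1=\eta S^3 T\sum_{k>k_\ast}\mathcal{P}_s(k)^2$, comparing $\sum_{k>k_\ast}k^{-2(\alpha+1)}$ to its integral gives $O(k_\ast^{-(2\alpha+1)})$; substituting $\eta S^3 T=S^2(\eta S T)$ and $k_\ast^{-(2\alpha+1)}=(\eta S T)^{-(2\alpha+1)/(\alpha+1)}$ collapses the exponent to $-\alpha/(\alpha+1)$, as claimed.

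The main obstacle is $\mathscr{E}_2$, where the $\min$ must be resolved. I would introduce a second threshold $\tilde k$ defined by $\mathcal{P}_s(\tilde k)=\tfrac{\log(S/r)}{2\eta S T}$, i.e.\ the point where $\tfrac{S^2}{r^2}e^{-4\eta S T\mathcal{P}_s(\tilde k)}=1$, so that $\tilde k=\Theta\!\big(k_\ast(\log(S/r))^{-1/(\alpha+1)}\big)$. Using $r\le r_k$ to replace $S^2/r_k^2$ by $S^2/r^2$, the uniform bound $\tfrac{S^2}{r^2}e^{-4\eta S T\mathcal{P}_s(k)}\le1$ holds exactly for $k\le\tilde k$, so I split the sum at $\tilde k$. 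On $\tilde k<k<k_\ast$ I bound $\min\le1$, and the contribution is the tail $S^2\sum_{k>\tilde k}\mathcal{P}_s(k)=\Theta(S^2\tilde k^{-\alpha})=\Theta\!\big(S^2 k_\ast^{-\alpha}(\log(S/r))^{\alpha/(\alpha+1)}\big)$ — exactly the advertised bound. On $k\le\tilde k$ the exponential factor is active; here I bound $S^2\sum\mathcal{P}_s(k)\tfrac{S^2}{r^2}e^{-4\eta S T\mathcal{P}_s(k)}$ by its integral and substitute $w=4\eta S T\mathcal{P}_s(k)$, reducing it to $\Theta(k_\ast^{-\alpha})\cdot\tfrac{S^4}{r^2}\int_{w_0}^{\infty}w^{-1/(\alpha+1)}e^{-w}\,dw$ with $w_0=2\log(S/r)$. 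The key clean step is the elementary inequality $\int_{w_0}^{\infty}w^{-s}e^{-w}\,dw\le w_0^{-s}e^{-w_0}$ (valid for every $w_0>0$ since $w^{-s}$ is decreasing), whose factor $e^{-w_0}=(r/S)^2$ cancels the prefactor $S^2/r^2$ and leaves $O\!\big(S^2 k_\ast^{-\alpha}(\log(S/r))^{-1/(\alpha+1)}\big)$. Since $-1/(\alpha+1)<\alpha/(\alpha+1)$, this is dominated by the $k>\tilde k$ piece, yielding the stated $\mathscr{E}_2$ bound; the delicate bookkeeping is ensuring the $w_0^{-1/(\alpha+1)}$ and the threshold scaling $\tilde k\propto k_\ast(\log(S/r))^{-1/(\alpha+1)}$ combine correctly and that the condition $r<\sqrt S$ keeps $w_0=2\log(S/r)>0$ so the monotonicity bound applies, with all implied constants depending only on $\alpha$.
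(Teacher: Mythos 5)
Your proposal is correct and follows essentially the same route as the paper's proof: the same threshold $k_\ast=(\eta ST)^{1/(\alpha+1)}$ identifying the summation ranges, the same treatment of $\mathscr{L}_2$, $\mathscr{L}_2+\mathscr{M}$ and $\mathscr{E}_1$ via the tail estimates of \cref{prop5:ps_estimate}, and for $\mathscr{E}_2$ the same resolution of the $\min$ at the point where $\tfrac{S^2}{r^2}e^{-4\eta ST\mathcal{P}_s(k)}=1$ followed by a Riemann-sum-to-integral comparison. Your split of the sum at $\tilde k$ (rather than the paper's split of the integral at $z=(2\eta ST)^{-1}\log(S/r)$) and your elementary bound $\int_{w_0}^{\infty}w^{-s}e^{-w}\,dw\le w_0^{-s}e^{-w_0}$ are just a repackaging of the paper's incomplete-gamma step $\Gamma(s,x)=O(x^{s-1}e^{-x})$, so the two arguments coincide in substance.
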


\begin{proof}
We first note that the condition $\mathcal{P}_s(k) < z_0 = (\zeta(\alpha+1) \eta ST)^{-1}$ is equivalent to
\begin{equation}
\mathcal{P}_s(k) < z_0 = (\zeta(\alpha+1) \eta S T)^{-1} \, \Leftrightarrow \, k^{-\alpha-1} < \frac{1}{\eta ST} \, \Leftrightarrow \, k > (\eta ST)^{1/(\alpha+1)}.
\end{equation}
Thus we can rephrase the descriptions of terms as
\begin{align}
\mathscr{M} &= \sum_{(\eta ST)^{1/(\alpha+1)}< k \le N} \frac{(S-r_k)^2}{2} \mathcal{P}_s(k) \\
\mathscr{E}_1 &= \eta S^3 T \sum_{(\eta ST)^{1/(\alpha+1)}< k \le N} \mathcal{P}_s(k)^2 \\
\mathscr{E}_2 &=  S^2 \sum_{k \le \min((\eta ST)^{1/(\alpha+1)},N) } \mathcal{P}_s(k)  \mathrm{min}\left(1, \frac{S^2}{r_k^2}e^{-4 \eta ST \mathcal{P}_s(k)} \right).
\end{align}

Applying \cref{prop5:ps_estimate} easily shows that
\begin{equation}
    \mathscr{L}_2 = \frac{S^2 N^{-\alpha}} { 2 \alpha \zeta(\alpha+1)} + O(S^2 N^{-\alpha - 1}).
\end{equation}
For $\mathscr{M}$ and $\mathscr{E}_1$, we will consider them by dividing two cases depending on whether $(\eta ST)^{1/(\alpha+1)}>N$ or $(\eta ST)^{1/(\alpha+1)}<N$. If $(\eta ST)^{1/(\alpha+1)}>N$, then the condition $(\eta ST)^{1/(\alpha+1)}<k \le N$ is never satisfied, so $\mathscr{M} = \mathscr{E}_1=0$. Now suppose $(\eta ST)^{1/(\alpha+1)}<N$. We first note that
\begin{equation}
\mathscr{L}_2 + \mathscr{M} = \sum_{(\eta ST)^{1/(\alpha+1)}< k \le N} \frac{(S-r_k)^2}{2} \mathcal{P}_s(k) + \sum_{k>N} \frac{S^2}{2} \mathcal{P}_s(k).        
\end{equation}
As $(S-r_k)^2 = \Theta(S^2)$, we can let
\begin{equation}
\mathscr{L}_2 + \mathscr{M} = \Theta \left( S^2 \sum_{k>(\eta ST)^{1/(\alpha+1)}} \mathcal{P}_s(k) \right)
\end{equation}
and using \cref{prop5:ps_estimate} gives the desired estimate $\mathscr{L}_2 +\mathscr{M} =\Theta(S^2 (\eta ST)^{-\alpha/(\alpha+1)})$. For $\mathscr{E}_1$, estimating sum of $\mathcal{P}_s(k)^2$ using \cref{lemma6:analyticNT} gives
\begin{equation}
\mathscr{E}_1 =  O \left(\eta S^3 T \sum_{k>(\eta ST)^{1/(\alpha+1)}} k^{-2(\alpha+1)} \right)= O \left( S^2(\eta S T)^{-\alpha/(\alpha+1)} \right).
\end{equation}
For $\mathscr{E}_2$ we always have
\begin{equation}
    \mathscr{E}_2 \le  S^2 \sum_{k \le (\eta ST)^{1/(\alpha+1)} } \mathcal{P}_s(k)  \mathrm{min}\left(1, \frac{S^2}{r^2}e^{-4 \eta ST \mathcal{P}_s(k)} \right)
\end{equation}
regardless of the size of $N$, so it suffices to bound this sum. If we denote $l = (\eta ST)^{1/(\alpha+1)}$ and define
\begin{equation}
    F_2(z) = \min \left(1 , \frac{S^2}{r^2} e^{-4 \eta ST z} \right),
\end{equation}
it suffices to show the bound
\begin{equation}
    \sum_{k \le l} \mathcal{P}_s(k) F_2(\mathcal{P}_s(k)) =  O \left( {\left(\log(S/r)\right)}^{\alpha/(\alpha+1)}  (\eta S T)^{-\alpha/(\alpha+1)}  \right).
\end{equation}
We will approximate this sum as
\begin{align}
\sum_{k \le l} \mathcal{P}_s(k) F_2(\mathcal{P}_s(k)) =&  \sum_{k \le l} (\mathcal{P}_s(k)- \mathcal{P}_s(k+1)) \frac{ \mathcal{P}_s(k)}{\mathcal{P}_s(k+1)- \mathcal{P}_s(k)} F_2(\mathcal{P}_s(k)) \\
=&\sum_{k \le l} (\mathcal{P}_s(k)- \mathcal{P}_s(k+1)) \frac{k^{-\alpha-1}} {(\alpha+1) k^{-\alpha-2}(1+O(k^{-1}))} F_2(\mathcal{P}_s(k)) \\
=& O\left( \sum_{k \le l} (\mathcal{P}_s(k)- \mathcal{P}_s(k+1)) \mathcal{P}_s(k)^{-1/(\alpha+1)} F_2(\mathcal{P}_s(k)) \right).
\end{align}
to obtain the form of Riemann sum approximation for the integral of
\begin{equation}
\int_{z = \mathcal{P}_s(l)}^{\infty} z^{-1/(\alpha+1)} F_2(z) dz
\end{equation}
at $\mathcal{P}_s(l) < \mathcal{P}_s(l-1) < \cdots < \mathcal{P}_s(1)$. As $F_2(z)$ is decreasing function, this Riemann sum is always less than the integral, so we obtain
\begin{equation}
    \sum_{k \le l} \mathcal{P}_s(k) F_2(\mathcal{P}_s(k)) = O \left( \int_{z=\mathcal{P}_s(l)}^{\infty} z^{-1/(\alpha+1)} F_2(z) dz \right).
\end{equation}
We note that $\mathcal{P}_s(l) = (\zeta(\alpha+1) \eta ST)^{-1}$. The threshold for $F_2(z)$ to become $1$ is given at
\begin{equation}
    \frac{S^2}{r^2} e^{-4 \eta STz} = 1 \quad \Leftrightarrow \quad z = \frac{1}{2 \eta ST} \log \frac{S}{r}.
\end{equation}
As $r<\sqrt{S}$, this value is always greater than $\mathcal{P}_s(l)$. Thus we can divide our integral as
\begin{align}
    & \int_{(\zeta(\alpha+1) \eta ST)^{-1}}^{\infty} z^{-1/(\alpha+1)} F_2(z) dz \\
    =&  \int_{(\zeta(\alpha+1) \eta ST)^{-1} }^{(2 \eta ST)^{-1} \log(S/r)} z^{-1/(\alpha+1)} dz + \int_{(2 \eta ST)^{-1} \log(S/r)}^{\infty} z^{-1/(\alpha+1)} \frac{S^2}{r^2} e^{- 4 \eta STz } dz.
\end{align}
The first part is bounded by
\begin{equation}
\int_{(\zeta(\alpha+1) \eta ST)^{-1} }^{(2 \eta ST)^{-1} \log(S/r)} z^{-1/(\alpha+1)} dz
= O \left( \left( (2 \eta ST)^{-1} \log(S/r) \right)^{\alpha/(\alpha+1)} \right)
\end{equation}
which can be shown to be $O \left( {\left(\log(S/r)\right)}^{\alpha/(\alpha+1)}  (\eta S T)^{-\alpha/(\alpha+1)}  \right)$. For the second part, we apply substitution of $w = 4 \eta ST z$ to show
\begin{align}
\int_{(2 \eta ST)^{-1} \log(S/r)}^{\infty} z^{-1/(\alpha+1)} \frac{S^2}{r^2} e^{- 4 \eta STz } dz =& \frac{S^2}{r^2} (4 \eta ST)^{-\alpha/(\alpha+1)} \int_{2 \log(S/r)}^{\infty} w^{-1/(\alpha+1)} e^{-w} dw \\
=& \frac{S^2}{r^2} (4 \eta ST)^{-\alpha/(\alpha+1)} \Gamma \left( \frac{\alpha}{\alpha+1}, 2 \log \frac{S}{r} \right)
\end{align}
and applying the asymptotic $\Gamma(s,x) = O(x^{s-1} e^{-x})$ suggests that this is bounded by
\begin{equation}
\ll \frac{S^2}{r^2} (4 \eta ST)^{-\alpha/(\alpha+1)} \left(\log \frac{S}{r} \right)^{-1/(\alpha+1)} e^{-2 \log(S/r)} = O\left( (\eta ST)^{-\alpha/(\alpha+1)}  \right).
\end{equation}

\end{proof}

\begin{theorem}\label{thm1} (Parameter scaling law) Assume the following conditions: $n_s>N$ with $\lim (N/n_s) = \gamma <1$ ($\gamma$ can be zero), and there exists $0<r<\sqrt{S}$ such that $r<\mathcal{R}_k(0)<S/2$ for all $k$. If $N, T \rightarrow \infty$ while satisfying $N^{\alpha+1} = o(T)$, the expected loss $\mathbf{E}_{\mathcal{D}}[\mathcal{L}]$ for all datasets $\mathcal{D}$ of size $D$ satisfies
\begin{align}\label{eq:thm1_parameter_scaling}
\mathbf{E}_{\mathcal{D}}[\mathcal{L}] &= \frac{S^2(1-\gamma^{\alpha})}{2 \alpha \zeta(\alpha+1) }  N^{-\alpha} \notag \\
& \quad+ O \left( S^2 N^{-\min(\alpha+1, 2 \alpha)} + S^2 \left(\log(S/r)\right)^{\alpha/(\alpha+1)} (\eta ST)^{-\alpha/(\alpha+1)} \right) \notag \\
&\quad + O \left( S^2 D^{-1/2} f_{\alpha}(N) + S^4 \eta^2 T^2 D^{-1} \right), 
\end{align}
where
\begin{equation}
f_{\alpha}(N) =\begin{cases} 1 & \text{if } \alpha >1 \\
\log N & \text{if } \alpha=1 \\
N^{(1-\alpha)/2} & \text{if } \alpha<1. \end{cases}
\end{equation}
The constant on the $O$ term only depends on $\alpha$. When $D \gg T^3$, then all the error terms involving $D$ are negligible.
\end{theorem}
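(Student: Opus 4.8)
The plan is to assemble the three workhorse estimates already established and then to verify that, in the regime $N^{\alpha+1}=o(T)$, every error term is subordinate to the main term $\Theta(S^2 N^{-\alpha})$. First I would peel off the data dependence: applying \cref{prop4:largeD} replaces $\mathbf{E}_{\mathcal{D}}[\mathcal{L}]$ by its $D\to\infty$ limit $\mathscr{L}_1+\mathscr{L}_2$ (with finite $n_s$) at the cost of the additive error $O(S^2 D^{-1/2}f_\alpha(N)+S^4\eta^2 T^2 D^{-1})$, which is exactly the third line of the claimed bound. From here the target reduces to estimating $\mathscr{L}_1+\mathscr{L}_2$.

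Next I would identify which branch of \cref{prop6:largeD_decomposition} applies. Since $\eta S$ is absorbed into $T$ and $N^{\alpha+1}=o(T)$, we have $(\eta ST)^{1/(\alpha+1)}>N$ for all large $T$, placing us in Case 1. There $\mathscr{M}=\mathscr{E}_1=0$, so $\mathscr{L}_1=O(\mathscr{E}_2)=O\left(S^2(\log(S/r))^{\alpha/(\alpha+1)}(\eta ST)^{-\alpha/(\alpha+1)}\right)$, which matches the $T$-dependent error in the second line. Thus the entire main term must come from $\mathscr{L}_2$.

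The remaining and most delicate point is recovering the $(1-\gamma^{\alpha})$ coefficient, which does not appear in \cref{prop6:largeD_decomposition} because that proposition is stated with $n_s=\infty$. To respect the finite $n_s$ of the hypothesis, I would evaluate $\mathscr{L}_2=\tfrac{S^2}{2}\sum_{k=N+1}^{n_s}\mathcal{P}_s(k)$ directly using estimate \cref{eq:psSum_estimate} of \cref{prop5:ps_estimate}, giving $\mathscr{L}_2=\tfrac{S^2}{2}\cdot\tfrac{N^{-\alpha}-n_s^{-\alpha}}{\alpha\zeta(\alpha+1)}+O(S^2 N^{-\min(\alpha+1,2\alpha)})$. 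Writing $n_s^{-\alpha}=(N/n_s)^{\alpha}N^{-\alpha}$ and using $N/n_s\to\gamma$ turns the bracket into $(1-\gamma^{\alpha})N^{-\alpha}$ up to $o(N^{-\alpha})$, producing the stated leading term together with its $N^{-\min(\alpha+1,2\alpha)}$ companion error. The parallel finite-$n_s$ correction to $\mathscr{L}_1$ only multiplies an already-$O(\mathscr{E}_2)$ quantity by $1+O(n_s^{-\alpha})$ and is absorbed.

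Finally I would check subordination under $D\gg T^3$: then $S^4\eta^2 T^2 D^{-1}=O(T^{-1})$ and $S^2 D^{-1/2}f_\alpha(N)=O(T^{-3/2}N^{(1-\alpha)/2})$, and both are $o(N^{-\alpha})$ because $N^{\alpha+1}=o(T)$ forces $N^{-\alpha}\gg T^{-\alpha/(\alpha+1)}\gg T^{-1}$, while $N^{(1+\alpha)/2}=o(T^{1/2})$ controls the second. I expect the main obstacle to be precisely this bookkeeping: justifying that the $n_s=\infty$ decomposition of \cref{prop6:largeD_decomposition} perturbs cleanly to finite $n_s$, and confirming that the whole hierarchy of error terms collapses below $N^{-\alpha}$ in the joint limit $N,T\to\infty$ with $N^{\alpha+1}=o(T)$.
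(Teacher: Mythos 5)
Your proposal is correct and follows essentially the same route as the paper's proof: both combine \cref{prop4:largeD} for the $D$-dependent error, Case 1 of \cref{prop6:largeD_decomposition} (valid since $N^{\alpha+1}=o(T)$ forces $(\eta ST)^{1/(\alpha+1)}>N$) to reduce $\mathscr{L}_1$ to the $O\bigl(S^2(\log(S/r))^{\alpha/(\alpha+1)}(\eta ST)^{-\alpha/(\alpha+1)}\bigr)$ error, and \cref{eq:psSum_estimate} of \cref{prop5:ps_estimate} on the finite-$n_s$ tail $\mathscr{L}_2$ to produce the $(1-\gamma^{\alpha})N^{-\alpha}$ main term with its $N^{-\min(\alpha+1,2\alpha)}$ companion, before verifying that $D\gg T^3$ subordinates the data error terms. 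The only difference is ordering (you peel off $D$ first, the paper starts at $n_s=D=\infty$ and perturbs), which is immaterial.
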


\begin{proof}
In the situation $n_s = \infty$ and $D = \infty$, \cref{prop6:largeD_decomposition} shows that
\begin{equation}
\mathbf{E}_{\mathcal{D}}[\mathcal{L}] = \frac{S^2}{2 \alpha \zeta(\alpha+1)} N^{-\alpha} + O\left( S^2 N^{-(\alpha+1)} + S^2 \left(\log(S/r)\right)^{\alpha/(\alpha+1)} (\eta ST)^{-\alpha/(\alpha+1)} \right).
\end{equation}
We consider the effect of $n_s$ first. As $\mathscr{L}_1$ becomes an error term in this estimation, letting $n_s$ as a finite value has no effect on overall estimation. The term $\mathscr{L}_2$ accounts for the main term, and letting $n_s$ as finite value changes it to
\begin{equation}
\frac{N^{-\alpha}-n_s^{-\alpha}}{\alpha \zeta(\alpha+1)} \frac{S^2}{2} + O(N^{-\min(\alpha+1, 2 \alpha)} S^2).
\end{equation}
This accounts for the factor $(1-\gamma^{\alpha})$ on the main term and $O(N^{-\min(\alpha+1, 2 \alpha)} S^2)$ added to the error term. The effect of $D$ is exactly described in \cref{prop4:largeD}, contributing the error term of $O\left( S^2 D^{-1/2} f_{\alpha}(N) + S^4 \eta^2 T^2 D^{-1} \right)$. Regarding the sufficient condition for $D$, if $D \gg T^3$ then we have
\begin{equation}
    S^4 \eta T^2 D^{-1} \ll T^{-\alpha/(\alpha+1)}, \quad S^2 D^{-1/2} f_{\alpha}(N) \ll T^{-3/2} N^{1/2} \ll T^{-1}
\end{equation}
so all error terms involving $D$ are less than $O(T^{-\alpha/(\alpha+1)})$.
\end{proof}

For the situation $T = O(N^{\alpha+1})$ however, the error terms $\mathscr{E}_1$ and $\mathscr{E}_2$ are of same size, so we can only say that the main term is of $O(S^2(\eta S T)^{-\alpha/(\alpha+1)})$.

\begin{theorem}\label{thm2} (Upper bound for the time scaling law) Assume the following conditions: $n_s>N$, and there exists there exists $0<r<\sqrt{S}$ such that $r<\mathcal{R}_k(0)<S/2$ for all $k$. If $N, T \rightarrow \infty$ while satisfying $\eta ST = O(N^{\alpha+1})$, the expected loss $\mathbf{E}_{\mathcal{D}}[\mathcal{L}]$ is
\begin{equation}
\mathbf{E}_{\mathcal{D}}[\mathcal{L}] = O\left( S^2 \left(\log(S/r)\right)^{\alpha/(\alpha+1)} (\eta ST)^{-\alpha/(\alpha+1)} + S^2 D^{-1/2} f_{\alpha}(N) + S^4 \eta^2 T^2 D^{-1} \right)
\end{equation}
with constant on $O$ only depending on $\alpha$ and $\limsup ((\eta ST)^{1/(\alpha+1)}/N)$, with $f_{\alpha}$ defined as in \cref{thm1}. If $D \gg NT^2$ and $D \gg T^3$, then all the error terms involving $D$ are negligible.
\end{theorem}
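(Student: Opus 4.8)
The plan is to obtain \cref{thm2} by assembling three results already established: the large-$D$ approximation \cref{prop4:largeD}, the $n_s$-dependence estimates \cref{prop5:ps_estimate}, and the case-split decomposition \cref{prop6:largeD_decomposition}. Since we only seek an upper bound, at each reduction I need control in one direction only, which lightens the bookkeeping considerably. First I would peel off the finite-$D$ dependence: \cref{prop4:largeD} writes $\mathbf{E}_{\mathcal{D}}[\mathcal{L}]$ as its $D\to\infty$ limit plus $O(S^2 D^{-1/2} f_\alpha(N) + S^4\eta^2 T^2 D^{-1})$, which are precisely the last two error terms in the claimed bound. It therefore remains to bound the $D=\infty$ loss $\mathscr{L}_1 + \mathscr{L}_2$ (at the true finite $n_s$) by $O\!\big(S^2 (\log(S/r))^{\alpha/(\alpha+1)} (\eta ST)^{-\alpha/(\alpha+1)}\big)$.

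Second, I would remove the finiteness of $n_s$. Because $n_s > N$, the tail $\mathscr{L}_2 = \sum_{k=N+1}^{n_s}\mathcal{P}_s(k) S^2/2$ is no larger than its $n_s=\infty$ value, so enlarging $n_s$ to $\infty$ only weakens an upper bound. For $\mathscr{L}_1$, \cref{prop5:ps_estimate} shows the effect of finite $n_s$ is to rescale both $T$ and the whole sum by a bounded factor $1 + O(n_s^{-\alpha})$, which perturbs $(\eta ST)^{-\alpha/(\alpha+1)}$ by at most a constant. Hence it suffices to bound the $n_s=\infty$ quantity $\mathscr{M} + \mathscr{L}_2 + O(\mathscr{E}_1 + \mathscr{E}_2)$ supplied by \cref{prop6:largeD_decomposition}.

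Third comes the core step: invoking \cref{prop6:largeD_decomposition} in the regime $\eta ST = O(N^{\alpha+1})$, equivalently $(\eta ST)^{1/(\alpha+1)} = O(N)$. When $(\eta ST)^{1/(\alpha+1)} < N$ (Case 2 of that proposition), each of $\mathscr{L}_2 + \mathscr{M}$, $\mathscr{E}_1$, and $\mathscr{E}_2$ is $O\!\big(S^2 (\log(S/r))^{\alpha/(\alpha+1)} (\eta ST)^{-\alpha/(\alpha+1)}\big)$, the $\mathscr{E}_2$ term carrying the logarithmic factor being the largest, which is exactly the target. The delicate point is that $\eta ST = O(N^{\alpha+1})$ permits $\eta ST$ to overshoot $N^{\alpha+1}$ by a bounded factor, landing in Case 1; there $\mathscr{M}=\mathscr{E}_1=0$ and $\mathscr{L}_2 = O(S^2 N^{-\alpha})$, and I would trade $N^{-\alpha}$ for $(\eta ST)^{-\alpha/(\alpha+1)}$ at the cost of the factor $(\eta ST / N^{\alpha+1})^{\alpha/(\alpha+1)} \le (\limsup (\eta ST)^{1/(\alpha+1)}/N)^{\alpha}$. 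This is precisely why the implied constant is permitted to depend on $\limsup((\eta ST)^{1/(\alpha+1)}/N)$.

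Finally I would check that $D \gg NT^2$ and $D \gg T^3$ render both $D$-error terms smaller than the main term $(\eta ST)^{-\alpha/(\alpha+1)}$, mirroring the verification closing \cref{thm1}. The main obstacle is the boundary gluing in the third step: ensuring that Cases 1 and 2 of \cref{prop6:largeD_decomposition} merge into a single $O(\cdot)$ that is uniform across the whole regime $\eta ST = O(N^{\alpha+1})$, with the constant faithfully tracked through the $\limsup$ ratio. Everything else reduces to substitution into results already in hand, since the genuinely hard estimate — the bound on $\mathscr{E}_2$ via the Riemann-sum comparison and the incomplete-Gamma asymptotic inside \cref{prop6:largeD_decomposition} — has already been carried out.
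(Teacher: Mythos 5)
Your proposal is correct and follows essentially the same route as the paper's proof: peel off the finite-$D$ error via \cref{prop4:largeD}, reduce to $n_s=\infty$ at the cost of a constant factor, and then apply the two cases of \cref{prop6:largeD_decomposition}, with the case $(\eta ST)^{1/(\alpha+1)}\ge N$ handled by converting $N^{-\alpha}$ into $(\eta ST)^{-\alpha/(\alpha+1)}$ times a factor controlled by $\limsup((\eta ST)^{1/(\alpha+1)}/N)$ --- exactly the paper's argument, including the final verification that $D\gg NT^2$ and $D\gg T^3$ suppress the $D$-error terms. The only cosmetic difference is that you make the Case 1/Case 2 gluing and the one-directional $n_s$ monotonicity explicit, where the paper compresses these into the remark that $\mathscr{E}_1,\mathscr{E}_2$ are bounded regardless of $N$ and that finite $n_s$ affects only the constant.
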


\begin{proof}
The error term regarding $D$ can be obtained in the same way as \cref{thm1}, so we will let $D = \infty$ for the rest of the proof. Also, we can let $n_s = \infty$, as we observed that it contributes at most to the constant factor of the upper bound and does not change the scaling.

In the decomposition of \cref{prop6:largeD_decomposition}, we always have
\begin{equation}
\mathscr{E}_2 = O \left( S^2 \left(\log(S/r)\right)^{\alpha/(\alpha+1)} (\eta ST)^{-\alpha/(\alpha+1)} \right)
\end{equation}
and
\begin{equation}
    \mathscr{E}_1 = O\left( S^2 (\eta ST)^{-\alpha/(\alpha+1)} \right)
\end{equation}
holding regardless of $N$, so it only remains to consider $\mathscr{L}_2+ \mathscr{M}$. If $(\eta ST)^{1/(\alpha+1)} < N$, then $\mathscr{L}_2 + \mathscr{M}$ is of size $O\left( S^2 (\eta S T)^{-\alpha/(\alpha+1)} \right)$. If $(\eta ST)^{1/(\alpha+1)} \ge N$, then $N$ and $(\eta ST)^{1/(\alpha+1)}$ has same order, so $\mathscr{L}_2 + \mathscr{M} = \mathscr{L}_2 = \Theta(S^2 N^{-\alpha})$ is $O\left( S^2 (\eta S T)^{-\alpha/(\alpha+1)} \right)$. Thus in either cases we have the desired bound.
\end{proof}

\begin{theorem}\label{thm3} (Lower bound for the time scaling law) 
Assume the following conditions: $n_s>N$ and $0<\mathcal{R}_k(0)<S/2$. If $N, T \rightarrow \infty$ while satisfying $(8 \zeta(\alpha+1)^{-1} \eta ST)^{1/(\alpha+1)} < N$, the expected loss $\mathbf{E}_{\mathcal{D}}[\mathcal{L}]$ is 
\begin{equation}
\mathbf{E}_{\mathcal{D}}[\mathcal{L}] \ge  \kappa S^2 (\eta S T)^{-\alpha/(\alpha+1)} +O \left(\eta^{-1} S T^{-1} +S^2 D^{-1/2} f_{\alpha}(N) + S^4 \eta^2 T^2 D^{-1} \right)
\end{equation}
for $\kappa$ and constant on $O$ only depending on $\alpha$, with $f_{\alpha}$ defined as in \cref{thm1}. If $D \gg NT^2$ and $D \gg T^3$, then all the error terms involving $D$ are negligible.
\end{theorem}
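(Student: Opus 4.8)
The plan is to prove the lower bound by controlling the idealized limit $\mathscr{L}_1+\mathscr{L}_2$ (i.e. $D=\infty$, $n_s=\infty$) from below, and then reattach the finite-resource corrections. The finite-$D$ error is exactly the content of \cref{prop4:largeD}, contributing $O(S^2D^{-1/2}f_\alpha(N)+S^4\eta^2T^2D^{-1})$, which $D\gg NT^2,T^3$ renders subdominant to $(\eta ST)^{-\alpha/(\alpha+1)}$ just as in \cref{thm1,thm2}; and passing from $n_s=\infty$ to a finite $n_s>N$ only rescales the leading constant, as in the $n_s$ discussion preceding \cref{thm1}. So the heart of the matter is a clean lower bound on $\mathscr{L}_1+\mathscr{L}_2=\sum_{k=1}^N\mathcal{P}_s(k)F(\mathcal{P}_s(k))+\frac{S^2}{2}\sum_{k>N}\mathcal{P}_s(k)$, where $F$ is as in \cref{eq:F_defn}.

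The essential input is the small-$z$ estimate \cref{lemma7:sigmoidBound}(2): $F(z)\ge \frac{(S-r_k)^2}{2}-\frac{8\eta S^3 T}{27}z$. Unlike the upper bound of \cref{thm2}, a lower bound cannot discard this linear correction, so I would pick the threshold $l_1=(8\zeta(\alpha+1)^{-1}\eta ST)^{1/(\alpha+1)}$ appearing in the hypothesis, for which $\mathcal{P}_s(l_1)=l_1^{-(\alpha+1)}/\zeta(\alpha+1)=\frac{1}{8\eta ST}$. Using $r_k<S/2$ (so $(S-r_k)^2>S^2/4$), for every $k\ge l_1$ the correction obeys $\frac{8\eta S^3 T}{27}\mathcal{P}_s(k)\le\frac{8\eta S^3 T}{27}\cdot\frac{1}{8\eta ST}=\frac{S^2}{27}\le\frac14(S-r_k)^2$, whence $F(\mathcal{P}_s(k))\ge\frac12\cdot\frac{(S-r_k)^2}{2}\ge\frac{S^2}{16}$. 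Discarding the nonnegative $k<l_1$ terms ($F\ge0$) and combining the $l_1<k\le N$ part of $\mathscr{L}_1$ with all of $\mathscr{L}_2$ gives
\begin{equation}
\mathscr{L}_1+\mathscr{L}_2 \ge \frac{S^2}{16}\sum_{k>l_1}\mathcal{P}_s(k),
\end{equation}
where the hypothesis $l_1<N$ is exactly what guarantees the range $l_1<k\le N$ is nonempty, so the $\mathscr{L}_1$ contribution is not vacuous.

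It then remains to evaluate the tail. By \cref{lemma6:analyticNT} (equivalently \cref{prop5:ps_estimate}),
\begin{equation}
\sum_{k>l_1}\mathcal{P}_s(k)=\frac{l_1^{-\alpha}}{\alpha\zeta(\alpha+1)}+O(l_1^{-\alpha-1}),
\end{equation}
and since $l_1=\Theta((\eta ST)^{1/(\alpha+1)})$ the leading term is $\Theta((\eta ST)^{-\alpha/(\alpha+1)})$ while the correction is $O((\eta ST)^{-1})$. Multiplying by $S^2$ produces the claimed main term $\kappa S^2(\eta ST)^{-\alpha/(\alpha+1)}$ with $\kappa=\kappa(\alpha)>0$, together with the residual $O(\eta^{-1}ST^{-1})$ (this is precisely where that error term originates). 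Reinstating the finite-$D$ error from \cref{prop4:largeD} completes the stated bound.

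The main obstacle is the constant bookkeeping in the second paragraph: the cutoff $l_1$ must be taken large enough (the factor $8$ in the hypothesis, calibrated so that $\mathcal{P}_s(l_1)=\frac{1}{8\eta ST}$ beats the $\frac{27}{128\eta ST}$ threshold forced by \cref{lemma7:sigmoidBound}(2)) that the linear correction is dominated by the leading sigmoid value, while $l_1$ must still be small enough that $\sum_{k>l_1}\mathcal{P}_s(k)$ retains the full $\Theta((\eta ST)^{-\alpha/(\alpha+1)})$ order; threading both requirements simultaneously to yield a genuinely positive $\kappa(\alpha)$ is the crux, and is what distinguishes this lower bound from the slack upper bound of \cref{thm2}.
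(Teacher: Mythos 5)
Your sigmoid estimate (via \cref{lemma7:sigmoidBound}), your treatment of the finite-$D$ error through \cref{prop4:largeD}, and your tail evaluation via \cref{lemma6:analyticNT,prop5:ps_estimate} are all sound, and in the idealized $n_s=\infty$ setting your argument does produce the correct main term. The genuine gap is in the reduction step: you assert that passing from $n_s=\infty$ to a finite $n_s>N$ ``only rescales the leading constant.'' That is true for $\mathscr{L}_1$ (a finite $n_s$ raises the normalization $A$ above $\zeta(\alpha+1)^{-1}$, so every term of $\mathscr{L}_1$ only grows), but it is false for $\mathscr{L}_2$, on which your bound crucially relies: with finite $n_s$,
\begin{equation}
\mathscr{L}_2 \;=\; \frac{S^2}{2}\sum_{k=N+1}^{n_s}\mathcal{P}_s(k) \;=\; \frac{S^2\left(N^{-\alpha}-n_s^{-\alpha}\right)}{2\alpha\zeta(\alpha+1)} + O\!\left(S^2 N^{-\min(\alpha+1,2\alpha)}\right),
\end{equation}
and the paper's discussion preceding \cref{thm1} explicitly warns that $N^{-\alpha}-n_s^{-\alpha}=o(N^{-\alpha})$ can occur when $n_s = N+o(N)$. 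The hypotheses of \cref{thm3} permit exactly this degenerate regime: they require only $n_s>N$ and $l_1:=(8\zeta(\alpha+1)^{-1}\eta ST)^{1/(\alpha+1)}<N$, so one may take $N=\lceil l_1\rceil+1$ and $n_s=N+1$. In that case your lower bound $\frac{S^2}{16}\sum_{l_1<k\le n_s}\mathcal{P}_s(k)$ consists of a bounded number of terms, each of size roughly $(8\eta ST)^{-1}$, hence is $O(\eta^{-1}ST^{-1})$; it is absorbed into the error term, and the claimed main term $\kappa S^2(\eta ST)^{-\alpha/(\alpha+1)}$ is never established. Your argument would go through if $N\ge 2l_1$, or under the side condition $n_s\ge 2N$ (the very condition the paper invokes in its remark after \cref{thm3}), but not under the hypotheses as stated.

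The paper's proof sidesteps this by lower-bounding $\mathscr{L}_1$ alone, summing over the dyadic window $(4\zeta(\alpha+1)^{-1}\eta ST)^{1/(\alpha+1)}<k<(8\zeta(\alpha+1)^{-1}\eta ST)^{1/(\alpha+1)}$, which by hypothesis sits strictly inside $1<k<N$. On this window the same sigmoid estimate gives $F(\mathcal{P}_s(k))>S^2/20$, and the window's mass, proportional to $\left(4^{-\alpha/(\alpha+1)}-8^{-\alpha/(\alpha+1)}\right)(\eta ST)^{-\alpha/(\alpha+1)}$, is of full order no matter how close $N$ is to $l_1$ or $n_s$ to $N$; finite $n_s$ only increases each $\mathcal{P}_s(k)$, so the bound is uniform in $n_s$. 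This also reveals the real role of the factor $8$ in the hypothesis: it is the \emph{upper} edge of the window that must fit below $N$ (the sigmoid threshold itself corresponds to the factor $4$), not merely a calibration beating the $\tfrac{27}{128\eta ST}$ threshold as you describe. To repair your proof, replace the full tail $\sum_{k>l_1}$ by this window sum inside $\mathscr{L}_1$ and drop $\mathscr{L}_2$ from the argument altogether.
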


\begin{proof}
The error term regarding $D$ can be obtained in the same way as \cref{thm1}, so we will let $D = \infty$ for the rest of the proof. We only show the lower bound for $\mathscr{L}_1$, holding regardless of $N$ and $n_s$. In \cref{lemma7:sigmoidBound} (\cref{eq:lem7_part2}) we have
\begin{equation}
F(z) \ge \frac{(S-r_k)^2}{2} - \frac{8 \eta S^3 T}{27} z \ge \frac{S^2}{8} - \frac{ 8 \eta S^3 T} {27}z
\end{equation}
for $z \ge 0$, so if $z \le (4\eta S T)^{-1}$ then $F(z) \ge S^2/8-2S^2/27 >S^2/20$. The condition $\mathcal{P}_s(k) \le (4\eta S T)^{-1}$ is equivalent to that $k \ge (4 \zeta(\alpha+1)^{-1} \eta ST)^{1/(\alpha+1)}$. In evaluating $\mathscr{L}_1 = \sum_{k=1}^{N} \mathcal{P}_s(k) F(\mathcal{P}_s(k))$, we will only add over $k$ in range of
\begin{equation}
(4 \zeta(\alpha+1)^{-1} \eta ST)^{1/(\alpha+1)} < k < (8 \zeta(\alpha+1)^{-1} \eta ST)^{1/(\alpha+1)}.
\end{equation}
From the assumption, this interval sits inside $1<k<N$. For such $k$ we use upper bound of $F(\mathcal{P}_s(k)) > S^2/20$. Then by using \cref{prop5:ps_estimate} we can obtain
\begin{align}
\mathscr{L}_1 & \ge \frac{S^2}{20} \sum_{(4 \zeta(\alpha+1)^{-1} \eta ST)^{1/(\alpha+1)} < k < (8 \zeta(\alpha+1)^{-1} \eta ST)^{1/(\alpha+1)}} \mathcal{P}_s(k) \\
&=\frac{S^2}{20} \left( \frac{ (\zeta(\alpha+1)^{-1} \eta ST)^{-\alpha/(\alpha+1)}}{\alpha \zeta(\alpha+1)} (4^{-\alpha/(\alpha+1)} - 8^{-\alpha/(\alpha+1)}) + O \left( (\eta ST)^{-1} \right) \right).
\end{align}
The possible effect of $n_s$ on the main term is to multiply both the main term by and $T$ by $(1 + n_s^{-\alpha})$, so it increases the bound.
\end{proof}

The condition $(8 \zeta(\alpha+1)^{-1} \eta ST)^{1/(\alpha+1)} < N$ is not absolutely necessary for lower bound. The condition $(\eta S T)^{1/(\alpha+1)} = \Theta(N)$ and $n_s \ge 2N$ would suffice and one can formulate a similar theorem, although the constant of lower bound might be much smaller if $(\eta S T)^{1/(\alpha+1)}/N$ is small.

Lastly, we provide a simpler version of those results combined and discuss the special case where the optimal compute $C=NT$, or the given engineering budget, is specified.

\begin{corollary}\label{cor:largeD} (Summary of the large data estimation) Assuming $D \gg NT^2, T^3$ and $n_s \gg N^{1 + \epsilon}$ such that effects of $n_s$ and $D$ are negligible, then for $N, T \rightarrow \infty$ we have
\begin{equation}
\mathbf{E}_{\mathcal{D}}[\mathcal{L}] = \Theta_{\eta, S,r} \left( \mathrm{max}(N^{-\alpha}, T^{-\alpha/(\alpha+1)}) \right),
\end{equation}
where $\Theta_{\eta,S,r}$ denotes that the implied constant depends on $\eta, S,\alpha$ and $r = \min \mathcal{R}_k(0)>0$. In particular, we have
\begin{equation}
N^{\alpha+1} = O(T) \quad \Rightarrow \quad \mathbf{E}_{\mathcal{D}}[\mathcal{L}] = \Theta_{\eta, S,r} (N^{-\alpha})
\end{equation}
and
\begin{equation}
T = O(N^{\alpha+1}) \quad \Rightarrow \quad \mathbf{E}_{\mathcal{D}}[\mathcal{L}] = \Theta_{\eta, S,r} (T^{-\alpha/(\alpha+1)}).
\end{equation}
\end{corollary}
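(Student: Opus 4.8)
The plan is to assemble this corollary directly from the three scaling-law theorems, treating it as bookkeeping rather than a genuinely new argument. Under the standing hypotheses $D \gg NT^2, T^3$ and $n_s \gg N^{1+\epsilon}$, each of \cref{thm1,thm2,thm3} already certifies that its $D$-dependent error terms are negligible, and $n_s \gg N^{1+\epsilon}$ forces $\gamma = \lim(N/n_s) = 0$, so the factor $(1-\gamma^{\alpha})$ appearing in \cref{thm1} equals $1$ and all $n_s$-corrections drop out. It therefore suffices to compare the two candidate main terms $N^{-\alpha}$ and $T^{-\alpha/(\alpha+1)}$. I would first record the elementary crossover identity at $N^{\alpha+1} \asymp T$: when $N^{\alpha+1} = O(T)$ we have $T^{-\alpha/(\alpha+1)} = O(N^{-\alpha})$ so $\max(N^{-\alpha}, T^{-\alpha/(\alpha+1)}) \asymp N^{-\alpha}$, whereas when $T = O(N^{\alpha+1})$ the maximum is $\asymp T^{-\alpha/(\alpha+1)}$. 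Thus the general $\Theta(\max(\cdot))$ statement reduces to the two ``in particular'' implications.

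In the regime $T = O(N^{\alpha+1})$ I would take the upper bound from \cref{thm2} and the lower bound from \cref{thm3}; both scale as $(\eta ST)^{-\alpha/(\alpha+1)}$. Since $\eta, S, r$ are held fixed, the prefactor $(\eta S)^{-\alpha/(\alpha+1)}$ and the $(\log(S/r))^{\alpha/(\alpha+1)}$ factor that appears only in the upper bound are legitimately absorbed into the implied constant of $\Theta_{\eta,S,r}$, yielding $\mathbf{E}_{\mathcal{D}}[\mathcal{L}] = \Theta_{\eta,S,r}(T^{-\alpha/(\alpha+1)})$. In the regime $N^{\alpha+1} = o(T)$, \cref{thm1} supplies both bounds at once, with leading term $\tfrac{S^2}{2\alpha\zeta(\alpha+1)}N^{-\alpha}$ and every remaining term $o(N^{-\alpha})$, hence $\Theta_{\eta,S,r}(N^{-\alpha})$.

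The only genuine gap is the boundary $N^{\alpha+1} = \Theta(T)$, which the first implication's hypothesis $N^{\alpha+1} = O(T)$ includes but \cref{thm1}'s strict $o(T)$ excludes, and where \cref{thm3}'s explicit constant in the condition $(8\zeta(\alpha+1)^{-1}\eta ST)^{1/(\alpha+1)} < N$ may fail. For the upper bound I would invoke \cref{thm2}, which applies whenever $\eta ST = O(N^{\alpha+1})$ and gives $O((\eta ST)^{-\alpha/(\alpha+1)}) = O(N^{-\alpha})$ there. For the lower bound I would use the always-valid inequality $\mathbf{E}_{\mathcal{D}}[\mathcal{L}] \geq \mathscr{L}_2 + (\text{negligible})$, which follows from the decomposition $\mathbf{E}_{\mathcal{D}}[\mathcal{L}] = \mathscr{L}_1 + \mathscr{L}_2 + (\text{negligible})$ together with $\mathscr{L}_1 \geq 0$ and $\mathscr{L}_2 = \tfrac{S^2 N^{-\alpha}}{2\alpha\zeta(\alpha+1)} + O(N^{-\alpha-1})$ from \cref{prop5:ps_estimate}. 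This trivial bound delivers $\Omega(N^{-\alpha})$ uniformly, and since $N^{-\alpha} \gtrsim T^{-\alpha/(\alpha+1)}$ exactly when $T \gtrsim N^{\alpha+1}$, it combines with the \cref{thm3} lower bound $\Omega(T^{-\alpha/(\alpha+1)})$ (valid for genuinely small $T$) to give $\mathbf{E}_{\mathcal{D}}[\mathcal{L}] = \Omega(\max(N^{-\alpha}, T^{-\alpha/(\alpha+1)}))$ in every case.

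Pairing the two regime-wise $O$ upper bounds (obtained by splitting into subsequences according to whether $\eta ST/N^{\alpha+1}$ stays bounded, where \cref{thm2} applies, or diverges, where \cref{thm1} applies) with the uniform $\Omega(\max(\cdot))$ lower bound produces the stated $\mathbf{E}_{\mathcal{D}}[\mathcal{L}] = \Theta_{\eta,S,r}(\max(N^{-\alpha}, T^{-\alpha/(\alpha+1)}))$. I expect no single hard step here: the proof is an exercise in reconciling the slightly mismatched hypotheses of \cref{thm1,thm2,thm3} across the crossover, and the only real care is in verifying that the constant factors $(\eta S)^{-\alpha/(\alpha+1)}$ and $(\log(S/r))^{\alpha/(\alpha+1)}$ are honestly hidden inside $\Theta_{\eta,S,r}$ so that the upper and lower bounds agree up to a constant depending only on $\eta, S, \alpha$ and $r$.
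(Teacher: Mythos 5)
Your proposal is correct and follows essentially the paper's own route: the paper's entire proof of this corollary is the one-liner ``Apply \cref{thm1} if $N^{\alpha+1} = o(T)$ and \cref{thm2} and \cref{thm3} if $N^{\alpha+1} \gg T$.'' Your additional care at the crossover $N^{\alpha+1} = \Theta(T)$ --- taking the upper bound from \cref{thm2} and the lower bound from the trivial inequality $\mathbf{E}_{\mathcal{D}}[\mathcal{L}] \geq \mathscr{L}_2 = \Theta(N^{-\alpha})$ (valid since $\mathcal{L}_k \geq 0$ and $\mathcal{L}_k = S^2/2$ for $k > N$), which matches $\Theta(T^{-\alpha/(\alpha+1)})$ exactly in that regime --- patches a boundary case that the paper's proof glosses over (there \cref{thm1}'s hypothesis $N^{\alpha+1}=o(T)$ fails and \cref{thm3}'s explicit condition $(8\zeta(\alpha+1)^{-1}\eta ST)^{1/(\alpha+1)} < N$ can fail), so your write-up is if anything more complete than the original.
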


\begin{proof}
Apply \cref{thm1} if $N^{\alpha+1} = o(T)$ and \cref{thm2} and \cref{thm3} if $N^{\alpha+1} \gg T$.
\end{proof}

\begin{corollary}\label{cor:compute} (The `computationally optimal' case) 
Denote $C = NT$ and assume the conditions in \cref{cor:largeD}. Then we have
\begin{equation}
\mathbf{E}_{\mathcal{D}}[\mathcal{L}] \gg C^{-\alpha/(\alpha+2)}.
\end{equation}
When $N = \Theta(C^{1/(\alpha+2)})$ and $T = \Theta(C^{(\alpha+1)/(\alpha+2)})$, we achieve $\mathbf{E}_{\mathcal{D}}[\mathcal{L}] = \Theta(C^{-\alpha/(\alpha+2)})$. (Its implied constant may depend on implied constant for growth of $N$ and $T$.)
\end{corollary}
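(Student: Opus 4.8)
The plan is to reduce the entire statement to the clean characterization already established in \cref{cor:largeD}, which asserts that under the stated background hypotheses $\mathbf{E}_{\mathcal{D}}[\mathcal{L}] = \Theta_{\eta,S,r}\left(\max(N^{-\alpha}, T^{-\alpha/(\alpha+1)})\right)$. Once this is in hand, the corollary is a pure constrained-optimization statement: among all admissible splits $(N,T)$ with $NT = C$, how small can $\max(N^{-\alpha}, T^{-\alpha/(\alpha+1)})$ be made? This is precisely the rigorous counterpart of the Lagrange-multiplier heuristic in \cref{derivation:compute_scaling}, and I would replace that calculus argument with a one-line inequality.

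For the lower bound, which must hold for \emph{every} split $NT = C$, I would avoid calculus entirely and use the weighted geometric-mean bound $\max(a,b) \ge a^{\theta} b^{1-\theta}$, valid for all $a,b>0$ and $\theta \in [0,1]$ (since $\max(a,b)^{\theta} \ge a^{\theta}$ and $\max(a,b)^{1-\theta} \ge b^{1-\theta}$). Applying this with $a = N^{-\alpha}$, $b = T^{-\alpha/(\alpha+1)}$, and the balancing choice $\theta = 1/(\alpha+2)$ makes the two exponents coincide: a quick check gives $\alpha\theta = \alpha(1-\theta)/(\alpha+1) = \alpha/(\alpha+2)$, so that $\max(N^{-\alpha}, T^{-\alpha/(\alpha+1)}) \ge N^{-\alpha/(\alpha+2)} T^{-\alpha/(\alpha+2)} = (NT)^{-\alpha/(\alpha+2)} = C^{-\alpha/(\alpha+2)}$. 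Combined with \cref{cor:largeD}, this yields $\mathbf{E}_{\mathcal{D}}[\mathcal{L}] \gg C^{-\alpha/(\alpha+2)}$ no matter how $C$ is divided.

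For the achievability half I would simply verify that the balanced split saturates the bound. Taking $N = \Theta(C^{1/(\alpha+2)})$ and $T = \Theta(C^{(\alpha+1)/(\alpha+2)})$ gives $N^{-\alpha} = \Theta(C^{-\alpha/(\alpha+2)})$ and $T^{-\alpha/(\alpha+1)} = \Theta(C^{-\alpha/(\alpha+2)})$, so both arguments of the maximum are of the same order and \cref{cor:largeD} returns $\mathbf{E}_{\mathcal{D}}[\mathcal{L}] = \Theta(C^{-\alpha/(\alpha+2)})$. I would note that this balanced point lies exactly on the boundary $N^{\alpha+1} \approx T$ separating the two regimes of \cref{cor:largeD}, so either regime's estimate may be invoked and both return the same order, leaving no gap at the crossover.

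The computation is routine; the only point requiring genuine care, and the main obstacle, is consistency of hypotheses. I must confirm that the optimal $(N,T)$ can coexist with the background conditions inherited from \cref{cor:largeD}, namely $D \gg NT^2, T^3$ and $n_s \gg N^{1+\epsilon}$. Substituting the optimal scaling produces concrete demands such as $D \gg C^{(2\alpha+3)/(\alpha+2)}$ and $D \gg C^{3(\alpha+1)/(\alpha+2)}$ (the latter dominating for $\alpha > 0$), so I would phrase the result as conditional on $D$ and $n_s$ being large enough relative to $C$ in this precise sense. Ensuring that the regime in which the clean $\Theta(\max(\cdot,\cdot))$ law holds is nonempty for the compute-optimal allocation, so that the optimization is not vacuous, is essentially the whole content beyond the elementary inequality.
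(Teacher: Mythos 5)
Your proposal is correct and follows essentially the same route as the paper's own proof: both reduce to \cref{cor:largeD} and apply the weighted geometric-mean inequality $\max(N^{-\alpha}, T^{-\alpha/(\alpha+1)}) \ge (N^{-\alpha})^{1/(\alpha+2)}\,(T^{-\alpha/(\alpha+1)})^{(\alpha+1)/(\alpha+2)} = (NT)^{-\alpha/(\alpha+2)}$ for the lower bound, then verify achievability by substituting the balanced split $(N,T) = (\Theta(C^{1/(\alpha+2)}), \Theta(C^{(\alpha+1)/(\alpha+2)}))$ back into \cref{cor:largeD}. Your closing discussion of compatibility between the optimal allocation and the hypotheses $D \gg NT^2, T^3$ and $n_s \gg N^{1+\epsilon}$ is a sensible clarification, though the paper simply inherits those conditions by assumption in the corollary statement.
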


\begin{proof}
The first part follows from
\begin{equation}
\mathbf{E}_{\mathcal{D}}[\mathcal{L}] \gg \mathrm{max}(N^{-\alpha}, T^{-\alpha/(\alpha+1)})
\end{equation}
and
\begin{equation}
\mathrm{max}(N^{-\alpha}, T^{-\alpha/(\alpha+1)}) \ge (N^{-\alpha})^{1/(\alpha+2)} (T^{-\alpha/(\alpha+1)})^{(\alpha+1)/(\alpha+2)} = (NT)^{-\alpha/(\alpha+2)}.
\end{equation}
The second part can be checked by substituting $(N,T) =(C^{1/(\alpha+2)},C^{(\alpha+1)/(\alpha+2)})$ (or their constant multiples) to \cref{cor:largeD}.
\end{proof}

\subsection{Computing the constant for time scaling law}

While we have found the time scaling law $\mathbf{E}[\mathcal{L}] = O(T^{-\alpha/(\alpha+1)})$ holding for $T = O(N^{\alpha+1})$, bounds in \cref{thm2} and \cref{thm3} were chosen rather lazily and do not depict the correct picture. We will find the constant using a more refined estimation, but we require additional assumptions on parameters. We will focus on the setting where $D$ and $n_s$ are large enough to be negligible, $\mathcal{R}_k(0)=r$ is fixed, and $T= O(N^{\alpha+1})$ with fixed constant such that time scaling law holds.

\begin{theorem}\label{thm4} (Constant for time scaling law)
Denote $\mathcal{L}^{\infty}$ as the loss when $D, n_s \rightarrow \infty$ so that their effect is negligible: 
\begin{equation} \label{eq:Linfty_largeD_ns}
\mathcal{L}^{\infty}=\mathcal{L}^{\infty}(T,N) = \sum_{k=1}^{N} \mathcal{P}_s(k) \frac{S^2}{2 \left(1 + \left( \frac{S}{r}-1 \right)^{-1} e^{2 \eta \mathcal{P}_s(k) ST} \right)^2} + \frac{S^2 N^{-\alpha}}{2\alpha \zeta(\alpha+1)}.
\end{equation}
When $T, N \rightarrow \infty$ and $\lim N / (\eta ST)^{1/(\alpha+1)} = \lambda$ for a fixed constant $\lambda \in (0, \infty]$, the following limit exists: 
\begin{equation}
    \mathcal{A}(\lambda) = \lim_{T,N \rightarrow \infty} (\eta ST)^{\alpha/(\alpha+1)} \mathcal{L}^{\infty}(T,N).
\end{equation}
The prefactor constant $\mathcal{A}$ as the a function of $\lambda$ (when $\lambda = \infty$ then let $\lambda^{-\alpha} = \lambda^{-(\alpha+1)}=0$) is 
\begin{equation} \label{eq:time_scale_const}
\mathcal{A}(\lambda)= \frac{\zeta(\alpha+1)^{-1/(\alpha+1)}}{\alpha+1} \int_{\lambda^{-(\alpha+1)}/\zeta(\alpha+1)}^{\infty} u^{-1/(\alpha+1)} \Phi_{S,r} (u) du + \frac{S^2}{2 \alpha \zeta(\alpha+1)} \lambda^{-\alpha},
\end{equation}
where
\begin{equation}
\Phi_{S,r}(u)= \frac{S^2}{2 \left(1 + \left( \frac{S}{r}-1\right)^{-1} e^{2u} \right)^2}.
\end{equation}
\end{theorem}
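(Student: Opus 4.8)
The plan is to treat $\mathcal{L}^{\infty}(T,N)$ as a Riemann sum whose mesh shrinks under the rescaling $u=\eta S T\,\mathcal{P}_s(x)$ and to reduce everything to a single convergent integral. First I would split $\mathcal{L}^{\infty}=\mathscr{S}+\mathscr{T}$, where $\mathscr{S}=\sum_{k=1}^{N}\mathcal{P}_s(k)\,\Phi_{S,r}(\eta\mathcal{P}_s(k)ST)$ and $\mathscr{T}=S^2 N^{-\alpha}/(2\alpha\zeta(\alpha+1))$, using the $n_s=\infty$ rule $\mathcal{P}_s(k)=k^{-(\alpha+1)}/\zeta(\alpha+1)$. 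The tail term is immediate: multiplying by $(\eta ST)^{\alpha/(\alpha+1)}$ and inserting the hypothesis $N/(\eta ST)^{1/(\alpha+1)}\to\lambda$ gives $(\eta ST)^{\alpha/(\alpha+1)}\mathscr{T}\to S^2\lambda^{-\alpha}/(2\alpha\zeta(\alpha+1))$, which is exactly the second summand of $\mathcal{A}(\lambda)$ (with the convention $\lambda^{-\alpha}=0$ when $\lambda=\infty$).

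For the main sum $\mathscr{S}$, I would introduce the continuous extension $g(x)=\mathcal{P}_s(x)\,\Phi_{S,r}(\eta\mathcal{P}_s(x)ST)$ for real $x\ge 1$ and compare $\mathscr{S}$ to $\int_1^N g(x)\,dx$. The substitution $u=\eta ST\,\mathcal{P}_s(x)=(\eta ST/\zeta(\alpha+1))\,x^{-(\alpha+1)}$ turns this integral into
\[
\int_1^N g(x)\,dx=\frac{\zeta(\alpha+1)^{-1/(\alpha+1)}}{\alpha+1}\,(\eta ST)^{-\alpha/(\alpha+1)}\int_{u_N}^{u_1}u^{-1/(\alpha+1)}\,\Phi_{S,r}(u)\,du,
\]
with $u_1=\eta ST/\zeta(\alpha+1)\to\infty$ and $u_N=(\eta ST/\zeta(\alpha+1))N^{-(\alpha+1)}\to\lambda^{-(\alpha+1)}/\zeta(\alpha+1)$ by the hypothesis on $N$. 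Multiplying by $(\eta ST)^{\alpha/(\alpha+1)}$ cancels the prefactor, and since $\int_0^\infty u^{-1/(\alpha+1)}\Phi_{S,r}(u)\,du$ converges (the factor $u^{-1/(\alpha+1)}$ is integrable at $0$ because $\alpha>0$, while $\Phi_{S,r}$ decays exponentially at $\infty$), the integral converges to $\int_{\lambda^{-(\alpha+1)}/\zeta(\alpha+1)}^{\infty}u^{-1/(\alpha+1)}\Phi_{S,r}(u)\,du$, the first summand of $\mathcal{A}(\lambda)$.

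The crux, and the step I expect to be the main obstacle, is controlling the Riemann-sum error $(\eta ST)^{\alpha/(\alpha+1)}\,|\mathscr{S}-\int_1^N g|$ and showing it vanishes. The bare estimate $|\mathscr{S}-\int_1^N g|\le V_{[1,N]}(g)+g(N)$, with $V$ the total variation, is too crude if applied naively since $g$ is not small in an absolute sense; the decisive move is to write $g(x)=(\eta ST)^{-1}h(u)$ with $h(u):=u\,\Phi_{S,r}(u)$. Because $h$ is unimodal on $(0,\infty)$ — one checks that the numerator of $h'$ is strictly decreasing, hence has a single sign change — and $h\to 0$ at both ends, its total variation equals $2\max h=O(S^2)$, with constant depending only on $S/r$. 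As $u(x)$ is monotone, $V_{[1,N]}(g)=(\eta ST)^{-1}V(h)=O(S^2(\eta ST)^{-1})$, and $g(N)=O(S^2(\eta ST)^{-1})$ likewise. Multiplying by $(\eta ST)^{\alpha/(\alpha+1)}$ yields an error of order $S^2(\eta ST)^{-1/(\alpha+1)}\to 0$, negligible against the $\Theta((\eta ST)^{-\alpha/(\alpha+1)})$ main term. Combining the tail limit with the integral limit then gives $\lim(\eta ST)^{\alpha/(\alpha+1)}\mathcal{L}^{\infty}=\mathcal{A}(\lambda)$, establishing both existence of the limit and the claimed formula.
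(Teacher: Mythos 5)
Your proof is correct, and its skeleton matches the paper's: both split off the tail term $S^2N^{-\alpha}/(2\alpha\zeta(\alpha+1))$, whose limit under $N/(\eta ST)^{1/(\alpha+1)}\to\lambda$ is immediate, and both convert the main sum into the integral $\int u^{-1/(\alpha+1)}\Phi_{S,r}(u)\,du$ via the rescaling $u=\eta ST\,\mathcal{P}_s(\cdot)$, with endpoints $u_N\to\lambda^{-(\alpha+1)}/\zeta(\alpha+1)$ and $u_1\to\infty$. Where you genuinely differ is in the control of the discretization error. The paper works in $u$-space: it rewrites $\mathcal{P}_s(k)$ as $\frac{\zeta(\alpha+1)^{-1/(\alpha+1)}}{\alpha+1}\bigl(\mathcal{P}_s(k)-\mathcal{P}_s(k+1)\bigr)\mathcal{P}_s(k)^{-1/(\alpha+1)}\bigl(1+O(k^{-1})\bigr)$, sandwiches the resulting expression between upper and lower Darboux sums using the monotonicity of $\Phi_{S,r}$, and disposes of the small-$k$ region (where the $1+O(k^{-1})$ factors are not negligible) by appealing to the exponential decay of $\Phi_{S,r}$. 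You instead compare $\sum_k g(k)$ with $\int_1^N g(x)\,dx$ directly in $x$-space, bounding the difference by $V_{[1,N]}(g)+g(N)$; your key observation --- that $g=(\eta ST)^{-1}h(u)$ with $h(u)=u\,\Phi_{S,r}(u)$ unimodal (the numerator of $h'$, namely $1+(\tfrac{S}{r}-1)^{-1}e^{2u}(1-4u)$, is strictly decreasing, hence changes sign once), so that $V(g)\le(\eta ST)^{-1}\cdot 2\max h$ --- turns this into a uniform bound $O_{S,r}\bigl((\eta ST)^{-1}\bigr)$, hence $O_{S,r}\bigl((\eta ST)^{-1/(\alpha+1)}\bigr)$ after multiplying by $(\eta ST)^{\alpha/(\alpha+1)}$. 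What your route buys is a single quantitative error estimate valid for all $k$ at once, avoiding the paper's somewhat informal ``ignore small $k$'' step; what the paper's route buys is that it requires no derivative or unimodality computation, only monotonicity of $\Phi_{S,r}$. One small caveat in your write-up: $\max h$ grows like $\tfrac{S^2}{4}\log(S/r)$ as $r\to 0$, so ``$2\max h = O(S^2)$'' is not uniform in $r$; but since $r$ is held fixed in this theorem and you explicitly allow the constant to depend on $S/r$, this does not affect the argument.
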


\begin{proof}
We first observe
\begin{equation}
\mathcal{L}^{\infty} = \sum_{k=1}^{N} \mathcal{P}_s(k) \Phi_{S,r} (\eta ST \mathcal{P}_s(k) ) + \frac{S^2 N^{-\alpha}}{ \alpha \zeta(\alpha+1)}.
\end{equation}
We will seek to convert it into Riemann sum form of certain integral. We start by noting that
\begin{align}
\mathcal{P}_s(k) &= (\mathcal{P}_s(k) - \mathcal{P}_s(k+1)) \frac{k}{\alpha+1} (1+ O(k^{-1})) \\
&=  \frac{\zeta(\alpha+1)^{-1/(\alpha+1)}}{\alpha+1}  (\mathcal{P}_s(k) - \mathcal{P}_s(k+1))\mathcal{P}_s(k)^{-1/(\alpha+1)} (1+O(k^{-1})) 
\end{align}
Denote $u_k = \eta ST \mathcal{P}_s(k)$, then the sum can be approximated to
\begin{align}
&  \sum_{k} \mathcal{P}_s(k) \Phi_{S,r} (\eta ST \mathcal{P}_s(k) ) \\
\approx & \sum_k (\mathcal{P}_s(k) - \mathcal{P}_s(k+1))\mathcal{P}_s(k)^{-1/(\alpha+1)} \Phi_{S,r}(\eta ST \mathcal{P}_s(k)) \\
= & (\eta ST)^{-\alpha/(\alpha+1)} \sum_k (u_k - u_{k+1}) u_k^{-1/(\alpha+1)} \Phi_{S,r} (u_k)
\end{align}
if we ignore small $k$. As $\Phi_{S,r}$ is decreasing, this corresponds to Riemann sum taking minimum in the interval $[u_{k+1}, u_k]$. So integral provides an upper bound for this sum. Similarly, we can approximate it with Riemann sum taking maximum in $[u_k, u_{k-1}]$ if we use
\begin{equation}
\mathcal{P}_s(k) =  \frac{\zeta(\alpha+1)^{-1/(\alpha+1)}}{\alpha+1}  (\mathcal{P}_s(k-1) - \mathcal{P}_s(k))\mathcal{P}_s(k-1)^{-1/(\alpha+1)} (1+O(k^{-1})) 
\end{equation}
instead. As $\Phi_{S,r}$ shows exponential decay, we can ignore values at small $k$, so this shows
\begin{equation}
    (\eta ST)^{-\alpha/(\alpha+1)} \sum_k (u_k - u_{k+1}) u_k^{-1/(\alpha+1)} \Phi_{S,r} (u_k)
    \approx \int_{u_N}^{\infty} u^{-1/(\alpha+1)} \Phi_{S,r}(u) du
\end{equation}
and from that
\begin{equation}
u_N = \eta ST N^{-(\alpha+1)} \zeta(\alpha+1)^{-1} = \lambda^{-(\alpha+1)}\zeta(\alpha+1)^{-1}
\end{equation}
we obtain our desired result.
\end{proof}

\cref{thm4} basically tells that for $N = \lambda (\eta ST)^{1/(\alpha+1)}$ and $D, n_s$ large enough, we have
\begin{equation} \label{eq:thm4_summary}
\mathcal{L} \sim \mathcal{A}(\lambda) (\eta S T)^{-\alpha/(\alpha+1)}
\end{equation}
with $\mathcal{A}(\lambda)$ given as \cref{eq:time_scale_const}, thus specifying the constant for time scaling law. For finite $\lambda$, this theorem covers the computationally optimal case of $(N,T) = (\lambda_1 C^{1/(\alpha+2)}, \lambda_2 C^{(\alpha+1)/(\alpha+2)})$ for some nonzero constant $\lambda_1, \lambda_2$. For $\lambda = \infty$, it describes the case $T = o(N^{\alpha+1})$ where effect of $N$ is negligible. \\

\begin{corollary} \label{cor:computeOptimal}
Denote $\mathcal{L}^{\infty}$ as $\mathcal{L}^{\infty}$ as the loss when $D, n_s \rightarrow \infty$ same as \cref{eq:Linfty_largeD_ns}. Denote $C = NT$ and suppose that
\begin{equation} \label{eq:computecase_explicit}
    (N, \eta ST) = (\lambda (\eta SC)^{1/(\alpha+2)}, \lambda^{-1} (\eta SC)^{(\alpha+1)/(\alpha+2)} )
\end{equation}
for a fixed constant $0<\lambda<\infty$. Then as $C \rightarrow \infty$, we have
\begin{equation}
    \mathcal{L}^{\infty} = \mathcal{A} \left( \lambda^{(\alpha+2)/(\alpha+1)} \right) \lambda^{\alpha/(\alpha+1)} \left( \eta S C \right)^{-\alpha/(\alpha+2)} (1 + o(1))
\end{equation}
where $\mathcal{A}$ is given as \cref{eq:time_scale_const} of \cref{thm4}.
\end{corollary}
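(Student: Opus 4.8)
The plan is to obtain this corollary as a direct specialization of \cref{thm4}, with essentially no new analytic work beyond tracking exponents. First I would verify that the prescribed parameterization is internally consistent with the constraint $C = NT$: multiplying $N = \lambda(\eta SC)^{1/(\alpha+2)}$ by $\eta ST = \lambda^{-1}(\eta SC)^{(\alpha+1)/(\alpha+2)}$ gives $N \cdot \eta ST = (\eta SC)^{(\alpha+2)/(\alpha+2)} = \eta SC = \eta S\,NT$, as required, confirming that the ansatz is legitimate and that $\lambda$ may be chosen freely in $(0,\infty)$. Since then $N \propto C^{1/(\alpha+2)}$ and (up to the fixed factor $\eta S$) $T \propto C^{(\alpha+1)/(\alpha+2)}$, both $N$ and $T$ tend to infinity as $C \to \infty$, so the hypotheses $N, T \to \infty$ of \cref{thm4} are met.

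Next I would compute the ratio $\lambda_{\mathrm{thm}} := \lim N/(\eta ST)^{1/(\alpha+1)}$ whose convergence \cref{thm4} requires. Raising $\eta ST$ to the power $1/(\alpha+1)$ and using $\tfrac{\alpha+1}{\alpha+2}\cdot\tfrac{1}{\alpha+1} = \tfrac{1}{\alpha+2}$ gives $(\eta ST)^{1/(\alpha+1)} = \lambda^{-1/(\alpha+1)}(\eta SC)^{1/(\alpha+2)}$. Dividing $N$ by this, the common factor $(\eta SC)^{1/(\alpha+2)}$ cancels and one is left with $\lambda_{\mathrm{thm}} = \lambda^{\,1+1/(\alpha+1)} = \lambda^{(\alpha+2)/(\alpha+1)}$, a fixed finite positive constant. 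This is precisely the argument of $\mathcal{A}$ appearing in the claimed formula, so it is no accident that $\mathcal{A}(\lambda^{(\alpha+2)/(\alpha+1)})$ is the leading constant.

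Applying \cref{thm4} with this limiting value then yields $\mathcal{L}^{\infty} \sim \mathcal{A}(\lambda^{(\alpha+2)/(\alpha+1)})\,(\eta ST)^{-\alpha/(\alpha+1)}$, where $\sim$ denotes equality up to a $(1+o(1))$ factor. The final step is to re-express $(\eta ST)^{-\alpha/(\alpha+1)}$ in terms of $C$: substituting $\eta ST = \lambda^{-1}(\eta SC)^{(\alpha+1)/(\alpha+2)}$ and using $\tfrac{\alpha+1}{\alpha+2}\cdot\tfrac{\alpha}{\alpha+1} = \tfrac{\alpha}{\alpha+2}$ gives $(\eta ST)^{-\alpha/(\alpha+1)} = \lambda^{\alpha/(\alpha+1)}(\eta SC)^{-\alpha/(\alpha+2)}$. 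Combining the two displays reproduces the stated asymptotic $\mathcal{L}^{\infty} = \mathcal{A}(\lambda^{(\alpha+2)/(\alpha+1)})\,\lambda^{\alpha/(\alpha+1)}(\eta SC)^{-\alpha/(\alpha+2)}(1+o(1))$.

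The only genuine subtlety I would flag is that \cref{thm4} already controls $\mathcal{L}^{\infty}$, i.e. the loss in the $D, n_s \to \infty$ regime, so this corollary inherits the same idealization and requires no fresh error analysis in $D$ or $n_s$; all that must be confirmed is that the single scaling parameter $\lambda_{\mathrm{thm}}$ remains fixed and finite along the compute-optimal ray, which the consistency computation in the first step guarantees. Consequently I expect no real obstacle here beyond careful exponent bookkeeping, and the main term's constant follows immediately by substitution rather than by any new estimate.
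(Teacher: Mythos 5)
Your proof is correct and takes essentially the same route as the paper's: the paper likewise observes that $\lim N/(\eta ST)^{1/(\alpha+1)} = \lambda^{(\alpha+2)/(\alpha+1)}$ under the parameterization \cref{eq:computecase_explicit}, applies \cref{thm4}, and substitutes to rewrite $(\eta ST)^{-\alpha/(\alpha+1)}$ as $\lambda^{\alpha/(\alpha+1)}(\eta SC)^{-\alpha/(\alpha+2)}$. Your additional consistency check that $N \cdot \eta ST = \eta SC$ and your exponent bookkeeping are both accurate, so the argument goes through exactly as in the paper.
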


\begin{proof}
As $ \lim N/ (\eta ST)^{1/(\alpha+1)} = \lambda^{(\alpha+2)/(\alpha+1)}$ under above conditions, we can apply \cref{thm4} and substituting \cref{eq:computecase_explicit} into \cref{eq:thm4_summary} gives the desired result.
\end{proof}

Technically we can optimize $\mathcal{L}^{\infty}$ for a given fixed value of $C=NT$ by letting $\lambda$ as argument of minimum of $\mathcal{A} \left( \lambda^{(\alpha+2)/(\alpha+1)} \right) \lambda^{-\alpha/(\alpha+1)}$, although it seems almost impossible to obtain any form of formula for such $\lambda$.

Lastly, we provide the following estimate for the time scale constant ($\mathcal{A}(\lambda)$) when $r$ is small, especially the first term in \cref{eq:time_scale_const}.
\begin{proposition} \label{prop7:const_estimate}
As $r \rightarrow 0$, we have ($\Lambda>0$ fixed)
\begin{equation} \label{eq:243}
\int_{\Lambda}^{\infty} u^{-1/(\alpha+1)}\Phi_{S,r}(u) du \approx \left( \log \frac{S-r}{r} \right)^{\alpha/(\alpha+1)} \frac{2^{1/(\alpha+1)} S^2 (\alpha+1)}{4 \alpha}.
\end{equation} 
\end{proposition}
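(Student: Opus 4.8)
The plan is to exploit the fact that, as $r \to 0$, the sigmoidal factor in $\Phi_{S,r}(u) = \frac{S^2}{2}\bigl(1 + c\,e^{2u}\bigr)^{-2}$, with $c := r/(S-r)$, degenerates into a sharp cutoff. Setting $u^* := \frac{1}{2}\log\frac{S-r}{r}$ (the unique point with $c\,e^{2u^*}=1$) and writing $\beta := 1/(\alpha+1)$, the key observation is that $u^* \to \infty$ while the weight $u^{-\beta}$ varies only slowly; hence the integrand is essentially $\frac{S^2}{2}u^{-\beta}$ on $[\Lambda, u^*]$ and negligible beyond $u^*$, so the leading term should be $\frac{S^2}{2}\int_\Lambda^{u^*} u^{-\beta}\,du$.

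First I would write the integral as $\frac{S^2}{2}\int_\Lambda^\infty u^{-\beta}\,\psi(u)\,du$ with $\psi(u) = (1+c\,e^{2u})^{-2}$, and compare it to $\frac{S^2}{2}\int_\Lambda^{u^*} u^{-\beta}\,du$. The difference splits into a tail piece $\int_{u^*}^\infty u^{-\beta}\psi\,du$ and a transition piece $\int_\Lambda^{u^*} u^{-\beta}(1-\psi)\,du$. For the tail I substitute $s = u - u^* \ge 0$, which gives $\psi = (1+e^{2s})^{-2}$ and $(u^*+s)^{-\beta} \le (u^*)^{-\beta}$, so the tail is bounded by $(u^*)^{-\beta}\int_0^\infty (1+e^{2s})^{-2}\,ds = O\bigl((u^*)^{-\beta}\bigr)$. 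For the transition piece, the same substitution yields $1-\psi(u^*-s) = \frac{2e^{-2s}+e^{-4s}}{(1+e^{-2s})^2}$, which decays exponentially in $s \ge 0$; bounding $u^{-\beta} \le 2^\beta (u^*)^{-\beta}$ for $s \le u^*/2$ and noting the weight $1-\psi$ is super-polynomially small for $s > u^*/2$, this piece is likewise $O\bigl((u^*)^{-\beta}\bigr)$.

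Next I would evaluate the main term exactly, $\int_\Lambda^{u^*} u^{-\beta}\,du = \frac{(u^*)^{1-\beta} - \Lambda^{1-\beta}}{1-\beta}$, and since $1-\beta = \alpha/(\alpha+1) > 0$ and $u^* \to \infty$, this equals $\frac{\alpha+1}{\alpha}(u^*)^{\alpha/(\alpha+1)}\bigl(1+o(1)\bigr)$. Because $(u^*)^{-\beta} = o\bigl((u^*)^{\alpha/(\alpha+1)}\bigr)$, the two error integrals from the previous step are genuinely subleading. Substituting $u^* = \frac{1}{2}\log\frac{S-r}{r}$ and using the identity $\bigl(\tfrac{1}{2}\bigr)^{\alpha/(\alpha+1)} = 2^{-\alpha/(\alpha+1)} = \tfrac{1}{2}\,2^{1/(\alpha+1)}$ then produces $\frac{S^2}{2}\cdot\frac{\alpha+1}{\alpha}\cdot\tfrac{1}{2}\,2^{1/(\alpha+1)}\bigl(\log\frac{S-r}{r}\bigr)^{\alpha/(\alpha+1)}$, which is exactly $\frac{2^{1/(\alpha+1)}S^2(\alpha+1)}{4\alpha}\bigl(\log\frac{S-r}{r}\bigr)^{\alpha/(\alpha+1)}$ as claimed.

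The main obstacle is making the cutoff approximation quantitative rather than merely heuristic: confirming that the slowly varying weight $u^{-\beta}$ does not interact badly with the $O(1)$-width transition window around $u^*$, and that the mild blow-up of $u^{-\beta}$ near the lower limit $\Lambda$ inside the transition piece is killed by the exponential decay of $1-\psi$. Once both error integrals are pinned at $O\bigl((u^*)^{-\beta}\bigr)$, the remainder is a routine asymptotic evaluation, with the identity $2^{-\alpha/(\alpha+1)} = \tfrac{1}{2}\,2^{1/(\alpha+1)}$ supplying the exact prefactor.
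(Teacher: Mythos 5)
Your proof is correct, and it reaches the stated constant by a route that differs technically from the paper's. Both arguments rest on the same structural observation -- that as $r \to 0$ the factor $\bigl(1 + (\tfrac{S}{r}-1)^{-1}e^{2u}\bigr)^{-2}$ degenerates into a sharp cutoff at $u^* = \tfrac12\log\tfrac{S-r}{r}$ -- but the paper executes this by rescaling $u = (\log M)v$ with $M = \tfrac{S}{r}-1$, so that the cutoff sits at the fixed location $v = \tfrac12$, and then invokes Lebesgue's dominated convergence theorem to pass to the limit integrand $v^{-1/(\alpha+1)}\mathbf{1}_{v\le 1/2}$; the prefactor emerges as $\int_0^{1/2}v^{-1/(\alpha+1)}dv$. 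You instead work in the unscaled variable, split off the main term $\tfrac{S^2}{2}\int_\Lambda^{u^*}u^{-\beta}\,du$ directly, and control the tail piece and the transition piece by hand, each via the substitution $s = |u - u^*|$ and the exponential decay of the sigmoid away from $u^*$, obtaining both errors as $O\bigl((u^*)^{-\beta}\bigr)$. What your approach buys is an explicit, elementary error estimate: since the main term is $\Theta\bigl((u^*)^{\alpha/(\alpha+1)}\bigr)$, your bounds give a relative error of order $1/\log(1/r)$, a rate that the paper's soft DCT argument does not provide. What the paper's approach buys is economy: the rescaling plus a single dominating function (which the paper exhibits as $v^{-1/(\alpha+1)}$ for $v \le 1/2$ and $v^{-1/(\alpha+1)}e^{-2(2v-1)}$ beyond) dispenses with all the case analysis around the transition window in one stroke. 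Your final bookkeeping -- $\int_\Lambda^{u^*}u^{-\beta}du = \tfrac{\alpha+1}{\alpha}(u^*)^{\alpha/(\alpha+1)}(1+o(1))$ and the identity $2^{-\alpha/(\alpha+1)} = \tfrac12\,2^{1/(\alpha+1)}$ -- matches the paper's computation exactly, so the two proofs agree on the constant $\tfrac{2^{1/(\alpha+1)}S^2(\alpha+1)}{4\alpha}$.
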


\begin{proof}
Denote $M = (\frac{S}{r}-1)$, and replace $u$ by $(\log M)v$. Then we have
\begin{align}
    \int_{\Lambda}^{\infty} u^{-1/(\alpha+1)}\Phi_{S,r}(u) du &= (\log M)^{\alpha/(\alpha+1)} \frac{S^2}{2} \int_{\Lambda / \log M}^{\infty} \frac{v^{-1/(\alpha+1)} dv}{ \left( 1+ M^{2v-1}\right)^2} \\
    &= (\log M)^{\alpha/(\alpha+1)} \frac{S^2}{2} \int_{0}^{\infty} 1_{v \ge \Lambda / \log M} \frac{v^{-1/(\alpha+1)} dv}{ \left( 1+ M^{2v-1} \right)^2}.
\end{align}
As $M \rightarrow \infty$, the integrand converges to
\begin{equation}
    \lim_{M \rightarrow \infty} 1_{v \ge \Lambda / \log M} \frac{v^{-1/(\alpha+1)} dv}{ \left( 1+ M^{2v-1}\right)^2} = \begin{cases} v^{-1/(\alpha+1)} & \text{if } v \le 1/2 \\ 0 & \text{if } v > 1/2. \end{cases}
\end{equation}
The integrand is bounded by $v^{-1/(\alpha+1)}$ if $v \le 1/2$ and $v^{-1/(\alpha+1)}e^{-2(2v-1)}$ if $v > 1/2$, those of which are all integrable. So we can apply Lebesgue's dominated convergence theorem to show
\begin{align}
    \lim_{M \rightarrow \infty} \int_{\Lambda / \log M}^{\infty} \frac{v^{-1/(\alpha+1)} dv}{ \left( 1+ M^{2v-1}\right)^2} &= \int_{0}^{\infty} \left( \lim_{M \rightarrow \infty} 1_{v \ge \Lambda / \log M}\frac{v^{-1/(\alpha+1)} dv}{ \left( 1+ M^{2v-1}\right)^2} \right) \\
    &= \int_{0}^{1/2} v^{-1/(\alpha+1)}dv.
\end{align}
Thus we have
\begin{align}
\lim_{r \rightarrow 0} \left( \log \frac{S-r}{r} \right)^{-\alpha/(\alpha+1)}  \int_{\Lambda}^{\infty} u^{-1/(\alpha+1)}\Phi_{S,r}(u) du &= \frac{S^2}{2} \int_{0}^{1/2} v^{-1/(\alpha+1)} dv \\ &= \frac{2^{1/(\alpha+1)}S^2(\alpha+1)}{4\alpha}
\end{align}
which can be observed to be equivalent to the desired expression of \cref{eq:243}.
\end{proof}

\subsection{Estimates for large $T$ and threshold between data/parameter scaling}

The estimates for small $D$ require different techniques from estimates for large $D$. We will consider the situation $T$ grows much faster than $D$ and $N$, and discuss when data scaling law of $\mathcal{L} = \Theta(D^{-\alpha/(\alpha+1)})$ happens. We will consider a simpler setting of '$n_s = \infty$' or equivalently that effects of $n_s$ are negligible ($n_s = \omega(N)$ seems to suffice) and $\mathcal{R}_k(0) =r<S$ is fixed, although it won't be impossible to discuss their subtle effects.

First we single out effect of $T$ by comparing $\mathcal{L}(T)$ and $\mathcal{L}(\infty)$. We remind
\begin{equation}
\mathcal{L}_k(T) = \frac{S^2}{2 \left( 1 + \left( \frac{S}{r}-1 \right)^{-1} e^{2 \eta d_k ST/D} \right)^2} \tag{\ref{eq:l_k_shown}}
\end{equation}
and its limit when $T \rightarrow \infty$ is given as
\begin{equation}
\mathcal{L}_{k}(\infty) = \lim_{T \rightarrow \infty} \mathcal{L}_k(T) = \begin{cases} \frac{(S-r)^2}{2} & \text{if } d_k=0 \\ 0 & \text{if } d_k>0. \end{cases}
\end{equation}

\begin{proposition} \label{prop8:largeT}
Suppose that $\mathcal{R}_k(0)=r<S$ is fixed. For large $T$, we have
\begin{equation} \label{eq:prop_largeT}
 \mathbf{E}_{\mathcal{D}}[\mathcal{L}(T)] - \mathbf{E}_{\mathcal{D}}[\mathcal{L}(\infty)] = O \left(S^4 r^{-2} D e^{-4 \eta ST/D} \right).
\end{equation}
\end{proposition}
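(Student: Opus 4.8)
The plan is to reduce the bound to a termwise comparison over skills $k$, exploiting that the only contribution to $\mathbf{E}_{\mathcal{D}}[\mathcal{L}(T)] - \mathbf{E}_{\mathcal{D}}[\mathcal{L}(\infty)]$ comes from skills that have been observed at least once. Starting from the decomposition $\mathcal{L} = \sum_k \mathcal{P}_s(k)\mathcal{L}_k$ in \cref{eq:L_sum_formula}, I would write the difference as $\sum_k \mathcal{P}_s(k)\,\mathbf{E}_{d_k}[\mathcal{L}_k(T) - \mathcal{L}_k(\infty)]$, where the expectation is over $d_k \sim B(D, \mathcal{P}_s(k))$. The key observation is that when $d_k = 0$ the exponent in \cref{eq:l_k_shown} vanishes and $\mathcal{L}_k(T) = (S-r)^2/2 = \mathcal{L}_k(\infty)$, so these terms cancel exactly; only the event $d_k > 0$ survives, where $\mathcal{L}_k(\infty) = 0$ and the difference is simply $\mathcal{L}_k(T)\mathbf{1}_{d_k > 0}$.

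Next I would control $\mathcal{L}_k(T)$ on the event $d_k > 0$ using the large-$z$ tail bound of \cref{lemma7:sigmoidBound} (part 1): taking $z = d_k/D$ and $r_k = r$ gives $\mathcal{L}_k(T) \le \frac{(S-r)^2}{2}\frac{S^2}{r^2}e^{-4\eta S T d_k/D} \le \frac{S^4}{2r^2}e^{-4\eta ST d_k/D}$, which is a valid upper bound for every $T$ since the exponential branch dominates the $\min$. It then remains to take the expectation of $e^{-4\eta ST d_k/D}\mathbf{1}_{d_k>0}$ over the binomial.

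For the binomial expectation, writing $p_k = \mathcal{P}_s(k)$ and $s = e^{-4\eta ST/D}$, I would use the moment generating function identity $\mathbf{E}[s^{d_k}\mathbf{1}_{d_k>0}] = (1 - p_k + p_k s)^D - (1-p_k)^D$ and bound it by the mean value theorem: the derivative of $(1 - p_k + p_k s)^D$ in $s$ is $Dp_k(1-p_k+p_ks)^{D-1} \le Dp_k$, so the difference is at most $Dp_k s = Dp_k e^{-4\eta ST/D}$. (Equivalently, bound $\mathbf{P}(d_k>0) \le Dp_k$ and $e^{-4\eta ST d_k/D}\le e^{-4\eta ST/D}$ for $d_k \ge 1$.) Summing, $\sum_k \mathcal{P}_s(k)\,\frac{S^4}{2r^2}Dp_k e^{-4\eta ST/D} = \frac{S^4 D}{2r^2}e^{-4\eta ST/D}\sum_k p_k^2 \le \frac{S^4 D}{2r^2}e^{-4\eta ST/D}$, where I used $\sum_k p_k^2 \le (\sum_k p_k)^2 = 1$, which is precisely the claimed $O(S^4 r^{-2} D e^{-4\eta ST/D})$.

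The calculation is essentially routine; the delicate point is the termwise cancellation at $d_k = 0$, which is what makes the bound exponentially small in $T$ rather than merely $O(1)$. I expect the mildest obstacle to be the binomial expectation step, where one must exclude the $d_k = 0$ atom \emph{before} invoking the $Dp_k$ bound, since the full MGF otherwise contributes an $O(1)$ term that would destroy the decay. Note finally that each summand is nonnegative (the loss is monotone decreasing in $T$), so the one-sided $O$ estimate is genuinely an upper bound on $\mathbf{E}_{\mathcal{D}}[\mathcal{L}(T)] - \mathbf{E}_{\mathcal{D}}[\mathcal{L}(\infty)] \ge 0$.
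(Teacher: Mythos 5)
Your proof is correct and follows essentially the same route as the paper's: the same exact cancellation at $d_k=0$ reducing the difference to $\mathbf{E}_{d_k}[1_{d_k>0}\mathcal{L}_k(T)]$, the same exponential upper bound $\mathcal{L}_k(T)\le \tfrac{S^4}{2}r^{-2}e^{-4\eta ST d_k/D}$, the same binomial MGF computation with the $d_k=0$ atom subtracted, the same $D\mathcal{P}_s(k)e^{-4\eta ST/D}$ bound, and the final summation using $\sum_k \mathcal{P}_s(k)^2\le 1$. The only cosmetic differences are that you invoke \cref{lemma7:sigmoidBound} for the tail bound where the paper rederives it inline, and you use the mean value theorem where the paper uses the factorization $|u^D-v^D|\le D|u-v|$.
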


\begin{proof}
As $\mathcal{L}_k(T)$ is decreasing in $T$, we always have $\mathcal{L}_{k}(T)  \ge \mathcal{L}_k(\infty)$ so therefore
\begin{equation}
\mathbf{E}_{\mathcal{D}}[\mathcal{L}(T)] - \mathbf{E}_{\mathcal{D}}[\mathcal{L}(\infty)] \ge 0.
\end{equation}
So we only need to establish an upper bound for $\mathcal{L}_k(T) - \mathcal{L}_{k}(\infty)$. We note that $\mathcal{L}_k(T) - \mathcal{L}_{k}(\infty)$ when $d_k=0$, so one can write
\begin{equation}
\mathcal{L}_k(T) - \mathcal{L}_{k}(\infty) = 1_{d_k>0} \mathcal{L}_k(T)
\end{equation}
where $1_{d_k>0}$ denotes the characteristic function
\begin{equation}
1_{d_k>0} = \begin{cases} 1 & \text{if } d_k>0 \\ 0 & \text{if } d_k=0. \end{cases}
\end{equation}
We use simple bound of 
\begin{equation}
\mathcal{L}_k(T) < \frac{S^2}{2 \left( \left( \frac{S}{r}-1 \right)^{-1} e^{2 \eta d_k ST/D} \right)^2} < \frac{S^4}{2} r^{-2} e^{-4 \eta d_k ST/D}.
\end{equation}
As $d_k$ follows binomial distribution $B(D, \mathcal{P}_s(k))$, considering its moment generating function gives
\begin{equation}
\mathbf{E}_{d_k}[ e^{-4 \eta d_k ST/D} ] = \left(1 - \mathcal{P}_s(k) + \mathcal{P}_s(k) e^{-4 \eta ST/D} \right)^D
\end{equation}
so thus
\begin{equation}
\mathbf{E}_{d_k}[1_{d_k>0} e^{-4 \eta d_k ST/D} ] =  \left(1 - \mathcal{P}_s(k) + \mathcal{P}_s(k) e^{-4 \eta ST/D} \right)^D - (1 - \mathcal{P}_s(k))^D.
\end{equation}
Meanwhile, for $0 \le u,v \le1$ real numbers, we have
\begin{equation}
|u^D - v^D| = |u-v||u^{D-1}+u^{D-2} v + \cdots + v^{D-1}| \le D|u-v|
\end{equation}
so, applying this inequality to above gives
\begin{equation}
\mathbf{E}_{d_k}[1_{d_k>0} e^{-4 \eta d_k ST/D} ] \le D \mathcal{P}_s(k) e^{-4 \eta ST/D}.
\end{equation}
Thus, we can deduce
\begin{align}
\mathbf{E}_{d_k}[\mathcal{L}_k(T)] - \mathbf{E}_{d_k}[\mathcal{L}_k(\infty)] &= \mathbf{E}_{d_k}[1_{d_k>0} \mathcal{L}_k(T)] \\
&< \frac{S^4 r^{-2}}{2} \mathbf{E}_{d_k}[1_{d_k>0} e^{-4 \eta d_k ST/D} ] \\ & \le \frac{S^4 r^{-2}}{2} D e^{-4 \eta ST/D} \mathcal{P}_s(k)
\end{align}
and thus
\begin{equation}
0 \le \mathbf{E}_{\mathcal{D}}[\mathcal{L}(T)] - \mathbf{E}_{\mathcal{D}}[\mathcal{L}(\infty)] < \frac{S^4 r^{-2}}{2} D e^{-4 \eta ST/D} \sum_{k=1}^{\infty} \mathcal{P}_s(k)^2 = O \left(S^4 r^{-2} D e^{-4 \eta ST/D} \right).
\end{equation}

\end{proof}

This provides an almost complete account for the effect of very large $T$. We will let $T=\infty$ from this point. We have
\begin{equation} \label{eq:largeT_Linfty}
\mathbf{E}_{\mathcal{D}}[\mathcal{L}(\infty)]= \frac{(S-r)^2}{2}  \sum_{k=1}^{N} \mathcal{P}_s(k) (1-\mathcal{P}_s(k))^{D} + \frac{S^2}{2} \sum_{k=N+1}^{\infty} \mathcal{P}_s(k).
\end{equation}
Applying \cref{lemma6:analyticNT} gives
\begin{equation} \label{eq:largeT_L2}
\sum_{k=N+1}^{\infty} \mathcal{P}_s(k) = \frac{N^{-\alpha}}{\alpha \zeta(\alpha+1)} + O(N^{-\alpha-1}) 
\end{equation}
so it suffices to focus on the first sum. We will divide the range of $k$ into two $1 \le k \le M$ and $M<k\le N$. For the sum over $1 \le k \le M$, we will apply the following simple bound (in the last part, we used $1-x \le e^{-x}$)
\begin{equation} \label{eq:largeT_smallInterval_simple}
0 \le \sum_{k=1}^{M} \mathcal{P}_s(k) (1-\mathcal{P}_s(k))^D \le (1-\mathcal{P}_s(M))^D \le e^{-\mathcal{P}_s(M) D}.
\end{equation}
For the sum over $M < k \le N$, we will approximate the sum into some integral, which happens to be incomplete gamma function.

\begin{proposition} \label{prop9:largeT_RiemannSum}
For $2<M<N$ integers, we have
\begin{align}
 & \sum_{k=M+1}^{N} \mathcal{P}_s(k) (1- \mathcal{P}_s(k))^D \\
 =& D^{-\alpha/(\alpha+1)} \frac{\zeta(\alpha+1)^{-1/(\alpha+1)}}{\alpha+1} \left( \Gamma\left( \frac{\alpha}{\alpha+1}, D \mathcal{P}_s(N) \right) -  \Gamma\left( \frac{\alpha}{\alpha+1}, D \mathcal{P}_s(M) \right) \right) \\
 & \quad + O \left( D^{-(2 \alpha+1)/(\alpha+1)} + D^{-\alpha/(\alpha+1)} M^{-1} \right). 
\end{align}
Here $\Gamma$ denotes the incomplete gamma function
\begin{align}
    \Gamma \left( s, x \right) = \int_{x}^{\infty} y^{s-1} e^{-y} dy.
\end{align}
\end{proposition}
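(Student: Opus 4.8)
The plan is to recognize the left-hand side as a Riemann sum for an integral that, after the rescaling $y=Dz$, becomes a difference of incomplete gamma functions. The engine is the density--spacing identity already used in the proof of \cref{thm4}: since $\mathcal{P}_s(k)=\zeta(\alpha+1)^{-1}k^{-(\alpha+1)}$, one has
\[
\mathcal{P}_s(k)=\frac{\zeta(\alpha+1)^{-1/(\alpha+1)}}{\alpha+1}\bigl(\mathcal{P}_s(k)-\mathcal{P}_s(k+1)\bigr)\mathcal{P}_s(k)^{-1/(\alpha+1)}\bigl(1+O(k^{-1})\bigr).
\]
Substituting this rewrites the sum, up to the $O(k^{-1})$ factor, as $\frac{\zeta(\alpha+1)^{-1/(\alpha+1)}}{\alpha+1}\sum_{k=M+1}^{N}(\mathcal{P}_s(k)-\mathcal{P}_s(k+1))\,g(\mathcal{P}_s(k))$ with $g(z)=z^{-1/(\alpha+1)}(1-z)^{D}$. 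This is exactly a right-endpoint Riemann sum of $g$ over the partition $\mathcal{P}_s(N+1)<\cdots<\mathcal{P}_s(M+1)$; since $g$ is decreasing (a product of two positive decreasing factors), I would sandwich it by $\frac{\zeta(\alpha+1)^{-1/(\alpha+1)}}{\alpha+1}\int_{\mathcal{P}_s(N+1)}^{\mathcal{P}_s(M+1)} g(z)\,dz$. Replacing $(1-z)^{D}$ by $e^{-Dz}$, substituting $y=Dz$ (so $z^{-1/(\alpha+1)}=D^{1/(\alpha+1)}y^{-1/(\alpha+1)}$ and $s-1=-1/(\alpha+1)$ for $s=\alpha/(\alpha+1)$), and shifting the endpoints by one mesh turns this into $D^{-\alpha/(\alpha+1)}\int_{D\mathcal{P}_s(N)}^{D\mathcal{P}_s(M)}y^{s-1}e^{-y}\,dy$, which is the claimed $\Gamma$-difference.

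The rest is bookkeeping on three error sources, each of which I expect to land on one of the two stated terms. First, the $O(k^{-1})$ factor above: since $k>M$ it contributes a relative error $O(M^{-1})$ to a sum of size $O(D^{-\alpha/(\alpha+1)})$, hence $O(D^{-\alpha/(\alpha+1)}M^{-1})$. Second, the substitution $(1-z)^{D}\mapsto e^{-Dz}$: using $0\le e^{-Dz}-(1-z)^{D}\le e^{-Dz}Dz^{2}$ for $0\le z\le 1/2$ (with exponentially small contributions once $z$ is bounded away from $0$), the discrepancy integral $D\int z^{2-1/(\alpha+1)}e^{-Dz}\,dz$ scales after $y=Dz$ as $D^{1/(\alpha+1)-2}\,\Gamma\!\bigl(3-\tfrac{1}{\alpha+1}\bigr)=O(D^{-(2\alpha+1)/(\alpha+1)})$, the first error term. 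Third, the discretization error: bounding it per interval by $(\mathcal{P}_s(k)-\mathcal{P}_s(k+1))^{2}\sup|g'|$ and using $\mathcal{P}_s(k)-\mathcal{P}_s(k+1)=O(\mathcal{P}_s(k)/k)$ yields a tail sum $\sum_{k>M}(1-\mathcal{P}_s(k))^{D}\bigl(k^{-\alpha-2}+Dk^{-(2\alpha+3)}\bigr)$, which I would show is $O(D^{-\alpha/(\alpha+1)}M^{-1})$.

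The main obstacle is this third estimate. The integrand $g$ blows up like $z^{-1/(\alpha+1)}$ as $z\to0$ (i.e.\ as $k\to N$), while the mesh $\mathcal{P}_s(k)-\mathcal{P}_s(k+1)$ is largest at the opposite, small-$k$ end near $M$; a crude ``maximal spacing times total variation'' bound is therefore too lossy and only gives $O(D^{-1})$, which is weaker than the stated $O(D^{-(2\alpha+1)/(\alpha+1)})$. The resolution is that the weight $(1-\mathcal{P}_s(k))^{D}\approx e^{-D\mathcal{P}_s(k)}$ is exponentially small unless $\mathcal{P}_s(k)\lesssim 1/D$, i.e.\ $k\gtrsim D^{1/(\alpha+1)}$, so the discretization error concentrates around $k\sim\max(M,D^{1/(\alpha+1)})$; evaluating the tail sums there gives $O(D^{-\alpha/(\alpha+1)}M^{-1})$ uniformly across both the $M\lesssim D^{1/(\alpha+1)}$ regime (where it is $O(D^{-1})\le O(D^{-\alpha/(\alpha+1)}M^{-1})$) and the $M\gtrsim D^{1/(\alpha+1)}$ regime (where it is $O(M^{-(\alpha+1)})$). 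A final minor point is the endpoint mismatch between the natural limits $\mathcal{P}_s(M+1),\mathcal{P}_s(N+1)$ and the stated $\mathcal{P}_s(M),\mathcal{P}_s(N)$; this is a single-mesh correction of the same size $O(D^{-\alpha/(\alpha+1)}M^{-1})$, which I would absorb into the error term.
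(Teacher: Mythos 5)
Your proof is correct, and it shares the paper's skeleton: the same spacing identity (which you rightly import from the proof of \cref{thm4}), the same reduction to $\int z^{-1/(\alpha+1)}(1-z)^{D}\,dz$, the same substitution $y=Dz$, and the same bound $0\le e^{-y}-(1-y/D)^{D}\le e^{-y}y^{2}/D$ producing the $O(D^{-(2\alpha+1)/(\alpha+1)})$ term. Where you genuinely diverge is the sum-versus-integral step. The paper never estimates a discretization error at all: it applies the spacing identity twice --- once with forward differences $\mathcal{P}_s(k)-\mathcal{P}_s(k+1)$ and once with backward differences $\mathcal{P}_s(k-1)-\mathcal{P}_s(k)$ --- to show that the sum equals, up to a multiplicative $(1+O(M^{-1}))$, \emph{both} the lower and the upper Darboux sums of $f(x)=x^{-1/(\alpha+1)}(1-x)^{D}$ over the partition $\mathcal{P}_s(N)<\cdots<\mathcal{P}_s(M)$; since the integral is sandwiched between these two Darboux sums, the sum equals the integral up to relative error $O(M^{-1})$, with monotonicity of $f$ used only to identify the Darboux sums and with no derivative bounds whatsoever. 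You instead bound the difference between the Riemann sum and the integral directly via per-interval estimates $(\mathcal{P}_s(k)-\mathcal{P}_s(k+1))^{2}\sup|g'|$, which is exactly why you are forced into the regime analysis around $k\sim\max(M,D^{1/(\alpha+1)})$; your resolution is sound --- the tail sum $\sum_{k>M}(1-\mathcal{P}_s(k))^{D}\left(k^{-\alpha-2}+Dk^{-2\alpha-3}\right)$ is $O(M^{-\alpha-1})$ when $M\gtrsim D^{1/(\alpha+1)}$ and $O(D^{-1})$ when $M\lesssim D^{1/(\alpha+1)}$, and in either regime this is $O(D^{-\alpha/(\alpha+1)}M^{-1})$ --- but it costs quantitative control of $g'$ and of the exponential localization that the paper's two-sided Darboux trick gets for free. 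So both routes close: the paper's buys brevity and robustness (only monotonicity of $f$ enters), while yours is more mechanical and makes explicit where the $D^{-\alpha/(\alpha+1)}M^{-1}$ error actually comes from. Your endpoint remark (the one-mesh shift from $\mathcal{P}_s(M+1),\mathcal{P}_s(N+1)$ to $\mathcal{P}_s(M),\mathcal{P}_s(N)$ is absorbed into $O(D^{-\alpha/(\alpha+1)}M^{-1})$) is also correct, and indeed the paper itself is slightly cavalier about the analogous off-by-one between the range of its sum and that of its Darboux sums.
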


\begin{proof}
Consider the interval $[\mathcal{P}_s(N), \mathcal{P}_s(M)]$ and its partition $\mathscr{P} = \{\mathcal{P}_s(N)< \mathcal{P}_s(N-1) < \cdots < \mathcal{P}_s(M)\}$. For a function $f(x) =x^{-1/(\alpha+1)}(1-x)^D$, we will consider its upper and lower Darboux sums with respect to $\mathcal{P}$. As $f$ is decreasing in $(0,1]$, its upper and lower Darboux sums are given respectively as
\begin{align}
        U(f, \mathscr{P}) &= \sum_{k=M}^{N - 1} (\mathcal{P}_s(k) - \mathcal{P}_s(k+1)) \mathcal{P}_s(k+1)^{-1/(\alpha+1)} (1 - \mathcal{P}_s(k+1))^D \\
        L(f, \mathscr{P}) &= \sum_{k=M}^{N - 1} (\mathcal{P}_s(k) - \mathcal{P}_s(k+1)) \mathcal{P}_s(k)^{-1/(\alpha+1)} (1 - \mathcal{P}_s(k))^D.
\end{align}
and those give bound of the integral of $f$ as
\begin{equation} \label{eq:Darboux}
    L(f, \mathscr{P}) \le \int_{\mathcal{P}_s(N)}^{\mathcal{P}_s(M)} f(x) dx \le U(f, \mathscr{P}).
\end{equation}
Meanwhile, by noting that
\begin{equation}
\mathcal{P}_s(k)= \frac{\zeta(\alpha+1)^{-1/(\alpha+1)}}{\alpha+1}  (\mathcal{P}_s(k) - \mathcal{P}_s(k+1))\mathcal{P}_s(k)^{-1/(\alpha+1)} (1+O(k^{-1}))
\end{equation}
one can show
\begin{align}
 &\sum_{k=M}^{N} \mathcal{P}_s(k)(1-\mathcal{P}_s(k))^D \\
=& \frac{\zeta(\alpha+1)^{-1/(\alpha+1)}}{\alpha+1} \left( \sum_{k=M}^{N-1} (\mathcal{P}_s(k) - \mathcal{P}_s(k+1)) \mathcal{P}_s(k)^{-1/(\alpha+1)} (1- \mathcal{P}_s(k))^D \right) (1 + O(M^{-1}) ) \\
=& \frac{\zeta(\alpha+1)^{-1/(\alpha+1)}}{\alpha+1}  L(f, \mathscr{P})(1+O(M^{-1})).
\end{align}
Applying a similar argument for upper Darboux sum gives
\begin{equation}\sum_{k=M}^{N} \mathcal{P}_s(k)(1-\mathcal{P}_s(k))^D   = \frac{\zeta(\alpha+1)^{-1/(\alpha+1)}}{\alpha+1}  U(f, \mathscr{P}) (1+ O(M^{-1}))
\end{equation}
and from \cref{eq:Darboux} it follows
\begin{equation} \label{eq:largeT_integral_estimation}
    \sum_{k=M}^{N} \mathcal{P}_s(k)(1- \mathcal{P}_s(k))^D= \frac{\zeta(\alpha+1)^{-1/(\alpha+1)}}{\alpha+1}  \left( \int_{\mathcal{P}_s(N)}^{\mathcal{P}_s(M)} x^{-1/(\alpha+1)}(1-x)^D dx \right) (1+O(M^{-1})).
\end{equation}
From now we will estimate the integral
\begin{equation}
    \int_{\mathcal{P}_s(N)}^{\mathcal{P}_s(M)} x^{-1/(\alpha+1)}(1-x)^D dx.
\end{equation}
We replace $x = y/D$ in the integral inside, then it becomes
\begin{equation} \label{eq:largeT_integral_eqn2}
\int_{\mathcal{P}_s(N)}^{\mathcal{P}_s(M)} x^{-1/(\alpha+1)} (1-x)^D dx  = D^{-\alpha/(\alpha+1)} \int_{D\mathcal{P}_s(N)}^{D\mathcal{P}_s(M)} y^{-1/(\alpha+1)} \left(1-\frac{y}{D} \right)^D dy.
\end{equation}
We want to approximate $\left(1-\frac{y}{D} \right)^D$ by $e^{-y}$, so we will estimate difference between them. We have
\begin{equation}
D\log(1 - y/D) = -y - \sum_{k=2}^{\infty} \frac{y^k}{k D^{k-1}}
\end{equation}
so if $D>2y$ then
\begin{equation}
-y>D\log(1 - y/D) =-y-\frac{1}{D} \sum_{k=2}^{\infty} \frac{y^k}{k D^{k-2}} >-y- \frac{1}{D} \sum_{k=2}^{\infty} \frac{y^k}{2 (2y)^{k-2}} = -y- \frac{y^2}{D}
\end{equation}
so
\begin{equation}
e^{-y} \left( 1 - \frac{y^2}{D} \right) <e^{-y} e^{-y^2/D} < \left(1-\frac{y}{D}\right)^D < e^{-y},
\end{equation}
where we used the inequality $1- x \le e^{-x}$. As $\mathcal{P}_s(M)<1/2$ if $M>2$ (obvious from $\mathcal{P}_s(M)<(\mathcal{P}_s(1)+\mathcal{P}_s(2))/2<1/2$), any $y$ in the interval $[D \mathcal{P}_s(N) , D \mathcal{P}_s(M)]$ satisfies $D>2y$. So, we can apply this approximation in every $y$. It follows that
\begin{align}
& \int_{D\mathcal{P}_s(N)}^{D\mathcal{P}_s(M)} y^{-1/(\alpha+1)} \left(1-\frac{y}{D} \right)^D dy \\
=& \int_{D\mathcal{P}_s(N)}^{D\mathcal{P}_s(M)} y^{-1/(\alpha+1)} e^{-y} dy + O \left( \int_{D\mathcal{P}_s(N)}^{D\mathcal{P}_s(M)} y^{-1/(\alpha+1)} e^{-y} \frac{y^2}{D} dy \right) \\
=& \int_{D\mathcal{P}_s(N)}^{D\mathcal{P}_s(M)} y^{-1/(\alpha+1)} e^{-y} dy + O \left(D^{-1} \int_{0}^{\infty} y^{-1/(\alpha+1)} e^{-y}  y^2 dy\right) \\
=&  \Gamma\left( \frac{\alpha}{\alpha+1}, D \mathcal{P}_s(N) \right) -  \Gamma\left( \frac{\alpha}{\alpha+1}, D \mathcal{P}_s(M) \right) + O (D^{-1}).
\end{align}
Combining this with \cref{eq:largeT_integral_estimation} and \cref{eq:largeT_integral_eqn2} gives the desired result.
\end{proof}

We combine \cref{prop8:largeT} and \cref{prop9:largeT_RiemannSum} together to obtain this final estimation result.

\begin{theorem} \label{thm5} (Scaling laws for large time estimation) Suppose that $N,D \rightarrow \infty$ and $n_s \gg N^{1+\epsilon}$ for some $\epsilon>0$ so that effect of $n_s$ is negligible. Suppose that $\mathcal{R}_k(0)=r$ for all $1 \le k \le N$.
\begin{enumerate}
    \item (Parameter scaling law) If $N = o(D^{1/(\alpha+1)})$, then we have
    \begin{equation} \label{eq:thm_largetime_parameter}
    \mathbf{E}_{\mathcal{D}}[\mathcal{L}] = \frac{S^2}{2 \alpha \zeta(\alpha+1)} N^{-\alpha} + O \left(S^2 D^{-\alpha/(\alpha+1)} + S^2 N^{-\alpha-1} +S^4 r^{-2} D e^{-4 \eta ST/D} \right).
    \end{equation}
    \item (Data scaling law) If $D = O(N^{\alpha+1})$ and $\mu = \lim (D/N^{\alpha+1})$ exists (it can be zero), then
    \begin{align} \label{eq:thm_largetime_data}
    \mathbf{E}_{\mathcal{D}}[\mathcal{L}] &= D^{-\alpha/(\alpha+1)} \left( \frac{(S-r)^2 \zeta(\alpha+1)^{-1/(\alpha+1)}}{2(\alpha+1)} \Gamma\left( \frac{\alpha}{\alpha+1}, \frac{D}{N^{\alpha+1}\zeta(\alpha+1)} \right) + \frac{S^2 (D/N^{\alpha+1})^{\alpha/(\alpha+1)}}{2 \alpha \zeta(\alpha+1)} \right) \notag \\
    &\quad + O\left( S^2 D^{-(2\alpha+1)/(2\alpha+2)}+ S^4 r^{-2} D e^{-4 \eta ST/D} \right)
\end{align}
Here $\Gamma$ denotes the incomplete gamma function
\begin{align}
    \Gamma \left( s, x \right) = \int_{x}^{\infty} y^{s-1} e^{-y} dy.
\end{align}
    In particular, if $D = o(N^{\alpha+1})$ such that $\mu=0$, we have
    \begin{align}\label{eq:thm5_data_scaling}
    \mathbf{E}_{\mathcal{D}}[\mathcal{L}] &= D^{-\alpha/(\alpha+1)}  \frac{(S-r)^2 \zeta(\alpha+1)^{-1/(\alpha+1)}}{2(\alpha+1)} \Gamma\left( \frac{\alpha}{\alpha+1}\right)(1+o(1)) \notag \\ &\quad\quad\quad + O \left(S^4 r^{-2} D e^{-4 \eta ST/D} \right).
    \end{align}
\end{enumerate}
In either case, $T \gg D (\log D)^{1 + \epsilon}$ for some $\epsilon>0$ implies that error terms involving $T$ are negligible.
\end{theorem}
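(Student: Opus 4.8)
The plan is to reduce the finite-$T$ expected loss to its $T=\infty$ value and then evaluate the two sums in \cref{eq:largeT_Linfty} asymptotically. First I would invoke \cref{prop8:largeT} to replace $\mathbf{E}_{\mathcal{D}}[\mathcal{L}(T)]$ by $\mathbf{E}_{\mathcal{D}}[\mathcal{L}(\infty)]$ at the cost of the error $O(S^4 r^{-2} D e^{-4\eta ST/D})$. The hypothesis $T \gg D(\log D)^{1+\epsilon}$ forces $4\eta ST/D \gg (\log D)^{1+\epsilon}$, so $e^{-4\eta ST/D}$ decays faster than any power of $D$, and the whole term is negligible against the polynomial scales $N^{-\alpha}$ and $D^{-\alpha/(\alpha+1)}$; this is precisely why the slightly-superlogarithmic condition is imposed. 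From \cref{eq:largeT_Linfty} the limit loss splits into $\frac{(S-r)^2}{2}\sum_{k=1}^N \mathcal{P}_s(k)(1-\mathcal{P}_s(k))^D$ and $\frac{S^2}{2}\sum_{k=N+1}^\infty \mathcal{P}_s(k)$, and the tail sum is handled directly by \cref{eq:largeT_L2}, contributing $\frac{S^2}{2}\cdot\frac{N^{-\alpha}}{\alpha\zeta(\alpha+1)} + O(S^2 N^{-\alpha-1})$.

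The heart of the argument is the first sum. I would introduce a cutoff $M$ and split $\sum_{k=1}^N = \sum_{k=1}^M + \sum_{k=M+1}^N$. On the low range I apply the crude bound \cref{eq:largeT_smallInterval_simple}, so that $\sum_{k\le M}\mathcal{P}_s(k)(1-\mathcal{P}_s(k))^D \le e^{-\mathcal{P}_s(M)D}$; on the high range I apply \cref{prop9:largeT_RiemannSum} to express the sum through incomplete gamma functions as $\Gamma(\tfrac{\alpha}{\alpha+1}, D\mathcal{P}_s(N)) - \Gamma(\tfrac{\alpha}{\alpha+1}, D\mathcal{P}_s(M))$ with error $O(D^{-(2\alpha+1)/(\alpha+1)} + D^{-\alpha/(\alpha+1)}M^{-1})$. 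The decisive choice is $M = \lfloor D^{1/(2(\alpha+1))}\rfloor$: then $\mathcal{P}_s(M)D \asymp D^{1/2}$, so by the asymptotic $\Gamma(s,x)=O(x^{s-1}e^{-x})$ both $e^{-\mathcal{P}_s(M)D}$ and $\Gamma(\tfrac{\alpha}{\alpha+1}, D\mathcal{P}_s(M))$ are super-polynomially small, while $D^{-\alpha/(\alpha+1)}M^{-1} = D^{-(2\alpha+1)/(2(\alpha+1))}$, exactly the error rate advertised in the theorem (and it dominates the remaining Riemann-sum error $D^{-(2\alpha+1)/(\alpha+1)}$).

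With this reduction the first sum equals $D^{-\alpha/(\alpha+1)}\frac{\zeta(\alpha+1)^{-1/(\alpha+1)}}{\alpha+1}\Gamma(\tfrac{\alpha}{\alpha+1}, D\mathcal{P}_s(N))$ up to the stated error, and $\mathcal{P}_s(N) = N^{-(\alpha+1)}/\zeta(\alpha+1)\,(1+o(1))$ gives $D\mathcal{P}_s(N) = \frac{D}{N^{\alpha+1}\zeta(\alpha+1)}(1+o(1))$. For the data scaling law (part 2, $D = O(N^{\alpha+1})$) I combine this with the tail term after rewriting $N^{-\alpha} = D^{-\alpha/(\alpha+1)}(D/N^{\alpha+1})^{\alpha/(\alpha+1)}$, producing exactly \cref{eq:thm_largetime_data}; when $\mu = \lim D/N^{\alpha+1} = 0$ the lower limit of the incomplete gamma tends to $0$ so $\Gamma(\tfrac{\alpha}{\alpha+1},\cdot)\to\Gamma(\tfrac{\alpha}{\alpha+1})$, while $(D/N^{\alpha+1})^{\alpha/(\alpha+1)}\to 0$, yielding \cref{eq:thm5_data_scaling}. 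For the parameter scaling law (part 1, $N = o(D^{1/(\alpha+1)})$) the argument satisfies $D\mathcal{P}_s(N)\to\infty$, so $\Gamma(\tfrac{\alpha}{\alpha+1}, D\mathcal{P}_s(N))\to 0$ and the entire first sum collapses into the error $O(S^2 D^{-\alpha/(\alpha+1)})$, leaving $\frac{S^2}{2\alpha\zeta(\alpha+1)}N^{-\alpha}$ as the main term; since $N = o(D^{1/(\alpha+1)})$ guarantees $N^{-\alpha}$ dominates $D^{-\alpha/(\alpha+1)}$, this is consistent.

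The main obstacle throughout is the bookkeeping of the cutoff $M$: I must confirm that a single choice simultaneously annihilates the low-$k$ exponential tail, renders the truncated incomplete gamma at $M$ negligible, controls the Riemann-sum error at the claimed rate $D^{-(2\alpha+1)/(2(\alpha+1))}$, and remains compatible with both asymptotic regimes in parts 1 and 2. The incomplete-gamma asymptotic $\Gamma(s,x)=O(x^{s-1}e^{-x})$ is the workhorse that guarantees the $k\le M$ contributions vanish faster than every polynomial scale, and the interplay between $D\mathcal{P}_s(N)\to\infty$ (parameter regime) and $D\mathcal{P}_s(N)\to \mu/\zeta(\alpha+1)$ (data regime) is what cleanly separates the two cases.
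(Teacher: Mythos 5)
Your overall strategy is exactly the paper's: reduce to $T=\infty$ via \cref{prop8:largeT} (with the $T \gg D(\log D)^{1+\epsilon}$ hypothesis killing the $O(S^4 r^{-2} D e^{-4\eta ST/D})$ term super-polynomially), split $\mathbf{E}_{\mathcal{D}}[\mathcal{L}(\infty)]$ as in \cref{eq:largeT_Linfty}, dispose of the tail with \cref{eq:largeT_L2}, and estimate $\sum_{k=1}^{N}\mathcal{P}_s(k)(1-\mathcal{P}_s(k))^D$ by a cutoff $M$, using the crude bound \cref{eq:largeT_smallInterval_simple} below $M$ and the incomplete-gamma approximation of \cref{prop9:largeT_RiemannSum} above $M$. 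Your handling of part 2, of the $\mu=0$ limit, and of the error bookkeeping at rate $D^{-(2\alpha+1)/(2\alpha+2)}$ coincides with the paper's proof.

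The one genuine flaw is your insistence that a \emph{single} cutoff $M=\lfloor D^{1/(2(\alpha+1))}\rfloor$ works in both regimes. \cref{prop9:largeT_RiemannSum} requires $2<M<N$, and in part 1 the hypothesis $N=o(D^{1/(\alpha+1)})$ gives no lower bound on $N$: for instance $N \asymp D^{1/(3(\alpha+1))}$ is allowed, in which case $M>N$, the split $\sum_{k\le M}+\sum_{M<k\le N}$ is vacuous, and your appeal to $\Gamma\bigl(\frac{\alpha}{\alpha+1}, D\mathcal{P}_s(N)\bigr)\to 0$ invokes a proposition whose hypotheses fail. The repair is a one-liner — when $N\le M$, apply \cref{eq:largeT_smallInterval_simple} with cutoff $N$ to the entire sum, so that $\mathcal{P}_s(N)D \ge \mathcal{P}_s(M)D \asymp \sqrt{D}$ makes the whole first sum $O(e^{-c\sqrt{D}})$ and \cref{eq:thm_largetime_parameter} follows with room to spare — but as written your claim that one choice "remains compatible with both asymptotic regimes" is false. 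The paper sidesteps this entirely by taking $M=3$ in part 1, where the incomplete-gamma values need only the trivial $O(1)$ bound and $D^{-\alpha/(\alpha+1)}M^{-1}=O(D^{-\alpha/(\alpha+1)})$ is already inside the stated error, reserving the refined cutoff $M=D^{1/(2\alpha+2)}$ for part 2 where $D=O(N^{\alpha+1})$ does guarantee $M<N$.
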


\begin{proof}
\cref{prop8:largeT} states
\begin{equation}
 \mathbf{E}_{\mathcal{D}}[\mathcal{L}(T)] - \mathbf{E}_{\mathcal{D}}[\mathcal{L}(\infty)] = O \left(S^4 r^{-2} D e^{-4 \eta ST/D} \right) \tag{\ref{eq:prop_largeT}}
\end{equation}
and we showed
\begin{equation}
\mathbf{E}_{\mathcal{D}}[\mathcal{L}(\infty)]= \frac{(S-r)^2}{2}  \sum_{k=1}^{N} \mathcal{P}_s(k) (1-\mathcal{P}_s(k))^{D} + \frac{S^2}{2} \sum_{k=N+1}^{\infty} \mathcal{P}_s(k) \tag{\ref{eq:largeT_Linfty}}
\end{equation}
and
\begin{equation}
\sum_{k=N+1}^{\infty} \mathcal{P}_s(k) = \frac{N^{-\alpha}}{\alpha \zeta(\alpha+1)} + O(N^{-\alpha-1}) \tag{\ref{eq:largeT_L2}}.
\end{equation}
For the sum of $\mathcal{P}_s(k) (1 - \mathcal{P}_s(k))^D$ over $1 \le k \le N$, we use the estimate (see \cref{eq:largeT_smallInterval_simple}) of
\begin{equation}
    \sum_{k=1}^{M} \mathcal{P}_s(k)(1-\mathcal{P}_s(k))^D = O\left(e^{-\mathcal{P}_s(M) D} \right)
\end{equation}
and the estimate of \cref{prop9:largeT_RiemannSum}. Combining all those gives
\begin{align}
& \mathbf{E}_{\mathcal{D}}[\mathcal{L}] \\
=& \frac{S^2 N^{-\alpha}}{2 \alpha \zeta(\alpha+1)} \\
+& D^{-\alpha/(\alpha+1)} \frac{(S-r)^2 \zeta(\alpha+1)^{-1/(\alpha+1)}}{2(\alpha+1)} \left( \Gamma\left( \frac{\alpha}{\alpha+1}, D \mathcal{P}_s(N) \right) - \Gamma\left( \frac{\alpha}{\alpha+1}, D \mathcal{P}_s(M) \right) \right) \\
+& O\left( S^2 (D^{-(2\alpha+1)/(\alpha+1)} + D^{-\alpha/(\alpha+1)} M^{-1} + N^{-\alpha-1} + e^{-\mathcal{P}_s(M) D}) +S^4 r^{-2} e^{-4 \eta ST/D} \right).
\end{align}

We will prove our main statement by choosing appropriate $M$ depending on size comparison between $D$ and $N$.

\begin{enumerate}
\item If $N = o(D^{1/(\alpha+1)})$, then we let $M=3$, and also regard all incomplete gamma function values as $O(1)$. Then it follows
\begin{equation}
\mathbf{E}_{\mathcal{D}}[\mathcal{L}] = 
\frac{S^2 N^{-\alpha}}{2 \alpha \zeta(\alpha+1)} + O \left( S^2 D^{-\alpha/(\alpha+1)} + S^2 N^{-\alpha-1} + S^4 r^{-2} e^{-4 \eta ST/D} \right)
\end{equation}
and thus obtaining the parameter scaling law.
    
\item Suppose $D = O(N^{\alpha+1})$ and $\mu =\lim(D/N^{\alpha+1})$ exists. We want 
\begin{equation}
D^{-\alpha/(\alpha+1)}\frac{(S-r)^2 \zeta(\alpha+1)^{-1/(\alpha+1)}}{2(\alpha+1)} \Gamma\left( \frac{\alpha}{\alpha+1}, D \mathcal{P}_s(N) \right)+ \frac{S^2 N^{-\alpha}}{2 \alpha\zeta(\alpha+1)}
\end{equation}
to be our main term, and set $M<N$ such that the term
\begin{equation}
    S^2 D^{-\alpha/(\alpha+1)} \Gamma\left( \frac{\alpha}{\alpha+1}, D \mathcal{P}_s(M) \right)
\end{equation}
and error terms not depending on $T$ given as
\begin{equation}
O \left( S^2 (D^{-(2\alpha+1)/(\alpha+1)} + D^{-\alpha/(\alpha+1)} M^{-1} + N^{-\alpha-1} + e^{-\mathcal{P}_s(M) D} ) \right)
\end{equation}
are all bounded by $O(D^{-(2\alpha+1)/(2\alpha+2)})$. Set $M =D^{1/(2\alpha+2)}$. Then $\mathcal{P}_s(M) = D^{-1/2}/\zeta(\alpha+1)$, so applying the asymptotic $\Gamma(s,x) = O(x^{s-1} e^{-x})$ gives
\begin{equation}
\Gamma\left( \frac{\alpha}{\alpha+1}, D \mathcal{P}_s(M) \right) =O\left(D^{-1/2(\alpha+1)} e^{-\sqrt{D}/\zeta(\alpha+1)} \right).
\end{equation}
This term and $e^{-\mathcal{P}_s(M) D}  = e^{-\sqrt{D}/\zeta(\alpha+1)}$
are less than $D^{-\alpha/(\alpha+1)} M^{-1}=O(D^{-(2\alpha+1)/(2\alpha+2)})$, and obviously $D^{-(2 \alpha+1)/(\alpha+1)}$ is less than $D^{-(2\alpha+1)/(2\alpha+2)}$. Thus it follows that
    \begin{align}
    \mathbf{E}_{\mathcal{D}}[\mathcal{L}] &= D^{-\alpha/(\alpha+1)}  \frac{(S-r)^2 \zeta(\alpha+1)^{-1/(\alpha+1)}}{2(\alpha+1)} \Gamma\left( \frac{\alpha}{\alpha+1}, \frac{D}{N^{\alpha+1}\zeta(\alpha+1)} \right) + \frac{S^2 N^{-\alpha} }{2 \alpha \zeta(\alpha+1)}  \notag \\
    &\quad + O\left( S^2 D^{-(2\alpha+1)/(2\alpha+2)}+ S^4 r^{-2} D e^{-4 \eta ST/D} \right).
\end{align}
\end{enumerate}

Regarding the final statement regarding sufficient condition for large $T$, $T \gg D(\log D)^{1+\epsilon}$ implies
\begin{equation}
    D e^{-4 \eta ST/D} < D e^{-4 \eta S (\log D)^{1+\epsilon}} < D \cdot D^{-4 \eta S (\log D)^{\epsilon}}  \ll D^{-K}
\end{equation}
for any $K>0$, showing that the error term $O \left(S^4 r^{-2} D e^{-4 \eta ST/D} \right)$ is negligible compared to all other error terms of \cref{eq:thm_largetime_parameter} and \cref{eq:thm_largetime_data}.
\end{proof}

We also provide a summary of all large time estimation results.

\begin{corollary}\label{cor:largeT} (Summary of large time estimation) Assuming $T \gg D(\log D)^{1+\epsilon}$ and $n_s \gg N^{1 + \epsilon}$ such that effects of $n_s$ and $T$ are negligible, and $\mathcal{R}_k(0)=r$ for all $1 \le k \le N$. Then for $D, N \rightarrow \infty$, we have
\begin{equation}
\mathbf{E}_{\mathcal{D}}[\mathcal{L}] = \Theta_{\eta, S,r} \left( \mathrm{max}(N^{-\alpha}, D^{-\alpha/(\alpha+1)}) \right),
\end{equation}
where $\Theta_{\eta,S,r}$ denotes that the implied constant depends on $\eta, S,r$ and $\alpha$. In particular, we have
\begin{equation}
N^{\alpha+1} = O(D) \quad \Rightarrow \quad \mathbf{E}_{\mathcal{D}}[\mathcal{L}] = \Theta_{\eta, S,r} (N^{-\alpha})
\end{equation}
and
\begin{equation}
D = O(N^{\alpha+1}) \quad \Rightarrow \quad \mathbf{E}_{\mathcal{D}}[\mathcal{L}] = \Theta_{\eta, S,r} (D^{-\alpha/(\alpha+1)}).
\end{equation}
\end{corollary}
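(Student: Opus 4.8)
The plan is to derive the corollary directly from \cref{thm5}, whose two parts already supply sharp asymptotics in the two complementary regimes; the remaining work is to translate the dichotomy $N^{\alpha+1}=O(D)$ versus $D=O(N^{\alpha+1})$ into a comparison of $N^{-\alpha}$ and $D^{-\alpha/(\alpha+1)}$, and then to check in each regime that the named main term dominates every error term, so that the bound is genuinely $\Theta$ and not merely $O$. First I would record the elementary equivalence behind the $\max$: raising $N^{\alpha+1}\le cD$ to the power $-\alpha/(\alpha+1)$ gives $N^{-\alpha}\ge c^{-\alpha/(\alpha+1)}D^{-\alpha/(\alpha+1)}$, so $N^{\alpha+1}=O(D)\iff D^{-\alpha/(\alpha+1)}=O(N^{-\alpha})$ and symmetrically $D=O(N^{\alpha+1})\iff N^{-\alpha}=O(D^{-\alpha/(\alpha+1)})$. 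Hence $\max(N^{-\alpha},D^{-\alpha/(\alpha+1)})$ equals $\Theta(N^{-\alpha})$ in the first regime and $\Theta(D^{-\alpha/(\alpha+1)})$ in the second, which is exactly the content of the two ``in particular'' statements.

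Next I would handle the regime $D=O(N^{\alpha+1})$ using \cref{thm5}(2). Here the task is to show the bracketed prefactor is bounded above and below by positive constants, uniformly in $\mu=\lim(D/N^{\alpha+1})$ over the bounded range $[0,\infty)$ permitted by $D=O(N^{\alpha+1})$: the first summand is a positive constant times $\Gamma(\tfrac{\alpha}{\alpha+1},\mu/\zeta(\alpha+1))$, which is continuous, strictly positive, decreasing in $\mu$, and tends to the full value $\Gamma(\tfrac{\alpha}{\alpha+1})>0$ as $\mu\to0$, while the second summand is a nonnegative constant times $\mu^{\alpha/(\alpha+1)}$; their sum is therefore $\Theta(1)$. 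Since $T\gg D(\log D)^{1+\epsilon}$ renders the $T$-error negligible and the remaining error $O(S^2 D^{-(2\alpha+1)/(2\alpha+2)})$ is $o(D^{-\alpha/(\alpha+1)})$ because $\tfrac{2\alpha+1}{2\alpha+2}>\tfrac{\alpha}{\alpha+1}$, this yields $\mathbf{E}_{\mathcal D}[\mathcal L]=\Theta_{\eta,S,r}(D^{-\alpha/(\alpha+1)})$.

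For the complementary regime $N^{\alpha+1}=O(D)$ I would isolate the strict interior $N^{\alpha+1}=o(D)$ and apply \cref{thm5}(1): there the main term $\tfrac{S^2}{2\alpha\zeta(\alpha+1)}N^{-\alpha}$ carries a positive constant coefficient, and each error ($D^{-\alpha/(\alpha+1)}$, $N^{-\alpha-1}$, and the negligible $T$-term) is $o(N^{-\alpha})$, so $\mathbf{E}_{\mathcal D}[\mathcal L]=\Theta_{\eta,S,r}(N^{-\alpha})$. Combining the interior of this regime with the previous paragraph covers all of $D,N\to\infty$ up to the boundary.

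The main obstacle is precisely the boundary $N^{\alpha+1}=\Theta(D)$, where \cref{thm5}(1) fails to suffice: its error term $D^{-\alpha/(\alpha+1)}$ is then of the same order as the main term $N^{-\alpha}$, so Part (1) delivers only an $O$-bound with no matching lower bound. The fix is to route this boundary case through \cref{thm5}(2) instead, which only requires $D=O(N^{\alpha+1})$ (satisfied here); there $\mu$ is finite and strictly positive, so the prefactor is a genuine positive constant, and since $D\asymp N^{\alpha+1}$ one has $D^{-\alpha/(\alpha+1)}\asymp N^{-\alpha}$, making the two ``in particular'' statements consistent on the overlap. Thus the interior/boundary split is chosen so that every case lands in a regime with a clean positive-constant main term, and the two parts of \cref{thm5} together produce the stated two-sided $\Theta$ bound.
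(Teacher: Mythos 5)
Your proposal is correct and takes essentially the same route as the paper, whose entire proof of this corollary is the one-line instruction to ``just summarize the results of \cref{thm5}.'' Your additional care at the boundary regime $N^{\alpha+1}=\Theta(D)$ — routing it through part (2) of \cref{thm5} so that the prefactor stays bounded away from zero and the matching lower bound survives — is a detail the paper glosses over, and you handle it correctly.
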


\begin{proof}
Just summarize the results of \cref{thm5}.
\end{proof}
\newpage
\section{Methods}
\label{app:methods}
In this section, we present the methods used in our experiments. 

\subsection{2-layer MLP}\label{sub:NN}
We trained a 2-layer fully connected neural network (MLP) with ReLU activations. All parameters of the MLP were initialized with a Gaussian distribution with a standard deviation of $0.001$. The input dimension of the model was $\ns + \nb = 5 + 32$ where $\ns$ is the length of control bits (number of skills) and $\nb$ is the length of the skill bits. Each skill has $m=3$ mutually exclusive sparse bits that are used to express the skill function. The target scale was $\SSS=5$. The model was trained with SGD without momentum and no weight decay (the exception is the parameter emergence experiment where Adam with learning rate $0.001$ and weight decay of $5\times 10^{-5}$ was used to escape the local minima).\footnote{We are free to choose any optimizer as long as it preserves the order in which the skills are learned. Additionally, the parameter emergence experiment uses infinite data; we expect the same solution for Adam and SGD.} For the data emergence experiment, the learning rate was halved every $50,000$ step.

The skill strength $\mathcal{R}_k(T)$ (\cref{eq:skill_learned_corr}) was measured using $20,000$ i.i.d samples from the $k^{th}$ skill.\footnote{Note that except the data scaling law experiment, the training set size is infinite.} For the time emergence, the skill strengths were measured every 50 steps, while for other experiments, they were measured after training. To mimic the infinite parameter $N \rightarrow \infty$, we used the model of width $1000$ (for the hidden layer). To mimic the infinite time $T \rightarrow \infty$, we trained for $5 \times 10^5$ steps ($3 \times 10^4$ steps for time emergence) where each step had the batch size of $4000$ ($2000$ for the data emergence experiment). To mimic $D \rightarrow \infty$, we sampled new data points for every batch. 
The details are given in the following table.

\begin{center}
\begin{tabular}{c c } 
 \toprule
 Name & Values \\ [0.5ex] 
 \midrule
 width & 1000  \\ 
 learning rate  & 0.05 \\ 
 initialization standard deviation  & 0.01 \\ 
 activation &  ReLU  \\
 batch size & 4000  \\
 steps & 500,000  \\
 target scale & 5  \\
 number of skill bits & 32  \\
 number of skills & 5 \\[1ex] 
 \bottomrule
\end{tabular}
\end{center}

\subsection{Transformer} \label{app:transformer}

This section outlines the transformer architecture used in \cref{fig:transformer}. Data is encoded as for the 2-layer MLP, but with one-hot positional encoding appended to the data.
We use a basic decoder transformer with 1 block, an initial embedding layer with output dimension 512, and a final linear layer. For the attention mechanism, we used 4 attention heads. For non-linearity, we used ReLU. A batch size of 5000 was used with a target scale $S=1$ and default Pytorch initialization. The model was trained with SGD with a learning rate of $5 \times 10^{-5}$, weight decay of $10^{-5}$, and momentum with $\beta=0.9$.
At every 100 steps, the skill strength $\mathcal{R}_k(T)$ (\cref{eq:skill_learned_corr}) was measured using $20,000$ i.i.d samples from the $k^{th}$ skill.

\subsection{Measurement of skill strength}\label{app:measureable}
The skill strength $\mathcal{R}_k$ is a simple linear correlation between the learned function $f$ -- function expressed by NN -- and $g_k$ for $\mathcal{P}_b$ given $I=k$.
We approximate the expectation over $X$ by taking the mean over $20,000$ i.i.d samples from $\mathcal{P}_b$ for the $k^{th}$ skill:
\begin{equation}
\mathcal{R}_k = \mathbf{E}_X[f(k,X)g_k(k,X)] \approx \frac{1}{20000}\sum_{j=1}^{20000} f(k, x^{(j)})g_k(k,x^{(j)}),
\end{equation}
where the notation $x^{(j)}$ denotes the $j^{th}$ sample.

\subsection{Details of the scaling law experiment}\label{app:scaling_detail}
For the loss of the model (solid lines) in \cref{fig:scaling}, we used the analytic equation for the model (\cref{eq:full_loss_main}) under suitable assumptions such as sufficiently large $n_s$ (\cref{table:scaling}). For the scaling laws (dotted lines) in \cref{fig:scaling}, we used the exponents from \cref{app:scaling} or \cref{app:rigorous} and the prefactor constants from \cref{thm4} (time scaling law), \cref{thm5} (data scaling law), and \cref{thm1} (parameter scaling law). For the hyperparameters of the simulation, we used $\ns = 10^5$ such that $\ns$ is large compared to other resources; $\SSS =1$ and $\mathcal{R}_k(0) = 0.01$ such that $\SSS -\mathcal{R}_k(0) \approx \SSS$; and $\eta =1$.

\paragraph{Time scaling law.} The total loss as a function of $T$ for $D,N \rightarrow \infty$ (\cref{fig:scaling}(a), solid) is
\begin{equation}
\mathcal{L} = \frac{\SSS^2}{2}\sum_{k=1}^{\ns} \mathcal{P}_s(k) \frac{1}{\left(1+\left(\frac{\SSS}{\mathcal{R}_k(0)} - 1\right)^{-1}e^{2\eta \mathcal{P}_s(k) \SSS T}\right)^2},
\end{equation}
which follows by taking $D \rightarrow \infty$ and $N = \ns$ on \cref{eq:full_loss_main}. The scaling law (\cref{fig:scaling}(a), dotted) is
\begin{equation}
\mathcal{L} = \mathcal{A}_T T^{-\alpha/(\alpha+1)},
\end{equation}
where the exponent is derived in \cref{derivation:time_scaling} or \cref{thm2,thm3}. The prefactor constant is
\begin{equation}
\mathcal{A}_t = \frac{\SSS^2}{2} \frac{\zeta(\alpha+1)^{-1/(\alpha+1)}}{(\alpha+1)(\eta \SSS)^{\alpha/(\alpha+1)}} \int_0^{\infty} \frac{u^{-1/(\alpha+1)}}{\left(1 + \left( \frac{S}{r}-1\right)^{-1} e^{2u} \right)^2} du,
\end{equation}
which we obtained by taking $D \rightarrow \infty$ on \cref{eq:time_scale_const}.

\paragraph{Data scaling law.} The total loss as a function of $D$ when $N, T \rightarrow \infty$ (\cref{fig:scaling}(b), solid) is
\begin{align}
\mathbf{E}_D\left[\mathcal{L}\right] &= \frac{\SSS^2}{2} \sum_{k=1}^{\ns} \left(1-\mathcal{P}_s(k)\right)^D \mathcal{P}_s(k),
\end{align}
which follows from \cref{eq:data_skill_loss}. The scaling law (\cref{fig:scaling}(b), dotted) is
\begin{equation}
\mathcal{L} = \mathcal{A}_D D^{-\alpha/(\alpha+1)},
\end{equation}
where the exponent follows from \cref{derivation:data_scaling} or \cref{thm5}. The prefactor constant is
\begin{equation}
    \mathcal{A}_D = \frac{\SSS^2}{2}\frac{ \zeta(\alpha+1)^{-1/(\alpha+1)}}{\alpha+1} \Gamma\left( \frac{\alpha}{\alpha+1}\right)
    \end{equation}
which we obtained by taking $N \rightarrow \infty$ in \cref{eq:thm5_data_scaling}.

\paragraph{Parameter scaling law.} The total loss as a function of $N$ when $T, D \rightarrow \infty$ (\cref{fig:scaling}(c), solid) is
\begin{align}
\mathcal{L} &= \frac{\SSS^2}{2} \sum_{k={N+1}}^{\ns}  \mathcal{P}_s(k),
\end{align}
which follows from taking $T, D \rightarrow \infty$ on \cref{eq:full_loss_main}. The scaling law (\cref{fig:scaling}(c), dotted) is
\begin{equation}
\mathcal{L} = \mathcal{A}_N N^{-\alpha},
\end{equation}
where the exponent follows from \cref{thm1}. The prefactor constant is
\begin{equation}
\mathcal{A}_N = \frac{\SSS^2}{2},
\end{equation}
which we obtained by taking $D, T \rightarrow \infty$, $ N/\ns \rightarrow 0$, and $\zeta{(\alpha+1)} \approx \alpha^{-1}$ in \cref{eq:thm1_parameter_scaling}.

\paragraph{Compute scaling law.} The total loss as a function of $T$ and $N$ for $D \rightarrow \infty$ (\cref{fig:scaling_compute}, solid) is
\begin{equation}
\mathcal{L} = \frac{\SSS^2}{2}\sum_{k=1}^{N} \mathcal{P}_s(k) \frac{1}{\left(1+\left(\frac{\SSS}{\mathcal{R}_k(0)} - 1\right)^{-1}e^{2\eta \mathcal{P}_s(k) \SSS T}\right)^2} + \sum_{k=N+1}^{\ns} \mathcal{P}_s(k),
\end{equation}
which follows by taking $D \rightarrow \infty$ in \cref{eq:full_loss_main}. In \cref{fig:scaling_compute}, we plotted for $N \in \{10,20,50,70,100,200,500,700,1000,2000,5000,10000\}$ and $T \in [1,1000]$ as examples of different tradeoff between $T$ and $N$ for fixed $C$.

The scaling law (\cref{fig:scaling_compute}, dotted) is
\begin{equation}
\mathcal{L} = \mathcal{A}_c C^{-\alpha/(\alpha+2)},
\end{equation}
where the exponent is derived in \cref{derivation:compute_scaling} or \cref{cor:compute}. Using \cref{cor:computeOptimal}, the prefactor constant is
\begin{equation}
    \mathcal{A}_c = \mathcal{A} \left( \lambda^{(\alpha+2)/(\alpha+1)} \right) \lambda^{\alpha/(\alpha+1)} \left( \eta S \right)^{-\alpha/(\alpha+2)}
\end{equation}
where $\mathcal{A}: \mathbb{R} \rightarrow \mathbb{R}$ is defined in \cref{eq:time_scale_const}. We used the minimum value of $\mathcal{A}_c$ for $\lambda \in (0,\infty]$.

\subsection{Estimates of the compute use}\label{app:computation_resource}

On CPU, our emergence experiments on the 2-layer MLP (\cref{fig:money}) take $2 \sim 5$ hours for a single run of time emergence experiments and $20 \sim 40$ hours for a single run of other experiments depending on the CPU. All experiments were repeated $10$ times (except for parameter emergence where we repeated the experiment $50$ times). Each experiment requires memory of at most $5$GB. The CPU cluster in which we experimented contained the following CPUs: Intel(R) Core(TM) i5-7500, i7-9700K, i7-8700; and Intel(R) Xeon(R) Silver 4214R, Gold 5220R, Silver 4310, Gold 6226R, E5-2650 v2, E5-2660 v3, E5-2640 v4, Gold 5120, Gold 6132.
The transformer experiment (\cref{fig:transformer}) takes $48 \sim 72$ hours for each run; we used an RTX4090 with 24GB RAM, with 1 CPU from the list above.  



\end{document}